\newcolumntype{b}{X}
\newcolumntype{m}{>{\hsize=.8\hsize}X}
\newcolumntype{s}{>{\hsize=.5\hsize}X}
\newtheorem{theorem}{Theorem}
\newtheorem{problem}{Problem}
\newcommand{\task}{\sigma}
\newcommand{\traj}{\xi}
\newcommand{\state}{x}
\newcommand{\statespace}{\mathcal{X}}
\newcommand{\safeset}{\mathcal{S}}
\newcommand{\unsafeset}{\mathcal{A}}
\newcommand{\numsafe}{N_\textrm{dem}}
\newcommand{\control}{u}
\newcommand{\controlset}{\mathcal{U}}
\newcommand{\constraintset}{\mathcal{C}}
\newcommand{\trajxu}{\traj_{xu}}
\newcommand{\trajx}{\traj_\state}
\newcommand{\traju}{\traj_\control}
\newcommand{\taskspace}{\Sigma}
\newcommand{\constraintspace}{\mathcal{C}}
\newcommand{\guarunsafe}{\mathcal{G}_{\neg s}}
\newcommand{\guarsafe}{\mathcal{G}_s}
\newcommand{\feas}{\mathcal{F}}
\newcommand{\cstate}{\kappa}
\newcommand{\demj}{\traj_{j}^\textrm{loc}}
\newcommand{\thetac}{\gamma}
\newcommand{\Thetac}{\Gamma}
\newcommand{\eq}{\textrm{eq}}
\newcommand{\ineq}{\textrm{ineq}}
\newcommand{\tbox}{\textrm{box}}
\newcommand{\feastheta}{\feas_\theta}
\newcommand{\feasgamma}{\feas_\gamma}
\newcommand{\feasrobust}{\hat\feas_\theta}
\newcommand{\pr}{\textrm{Pr}}
\newcommand{\safe}{\textrm{ safe}}
\newcommand{\dem}{\textrm{dem}}
\newcommand{\policy}{\pi}
\newcommand{\collset}{\mathfrak{C}}
\newcommand{\lag}{\mathcal{L}}
\newcommand{\feasthetasun}{\feas_\theta^{s, \neg s}}
\renewcommand{\fnum@figure}{\small Fig. \thefigure}
\title{Uncertainty-Aware Constraint Learning for \\Adaptive Safe Motion Planning from Demonstrations}
\author{
  Glen Chou, Necmiye Ozay, and Dmitry Berenson\\
  Department of Electrical Engineering and Computer Science\\
  University of Michigan, Ann Arbor\\
  \texttt{\{gchou, necmiye, dmitryb\}@umich.edu} \\
}
\begin{document}
\maketitle

\begin{abstract}
    We present a method for learning to satisfy uncertain constraints from demonstrations. Our method uses robust optimization to obtain a belief over the potentially infinite set of possible constraints consistent with the demonstrations, and then uses this belief to plan trajectories that trade off performance with satisfying the possible constraints. We use these trajectories in a closed-loop policy that executes and replans using belief updates, which incorporate data gathered during execution. We derive guarantees on the accuracy of our constraint belief and probabilistic guarantees on plan safety. We present results on a 7-DOF arm and 12D quadrotor, showing our method can learn to satisfy high-dimensional (up to 30D) uncertain constraints, and outperforms baselines in safety and efficiency.
\end{abstract}

\keywords{\hspace{-1.6pt}learning from demonstration, \hspace{-0.5pt}planning under uncertainty, \hspace{-0.5pt}safe learning} 

\section{Introduction}

Learning from demonstration (LfD) is a powerful framework for teaching robots to perform tasks. In particular, recent work has shown that modeling tasks as constrained optimizations, and learning the cost function and constraints from demonstrations \cite{ral, wafr, menner, toussaint}, can enable the learning of complex manipulation and mobile robotics tasks. However, a core problem in LfD, and constraint-learning in particular, is the unidentifiability of the constraints: there is often an infinite set of \textit{possible constraints} which are sufficient to explain a demonstration. While previous work \cite{ral} has evaded this problem when planning with the learned constraint by planning guaranteed-safe trajectories that satisfy \textit{all possible constraints} consistent with the data, this is impossible in most realistic scenarios, where the set of possible constraints is so large that the planning problem becomes infeasible. For example, consider planning for an arm in a cluttered home environment: unless the demonstrations activate each of the multitude of constraints, we cannot claim that a trajectory is guaranteed-safe.

Our insight is that to plan under large constraint uncertainty, it is vital to plan trajectories that trade off safety and efficiency by reasoning over the set of possible constraints, and to update this set using constraint information gathered when executing these trajectories. Specifically, we leverage robust optimization and Bayesian inference to obtain and update our belief over the possible constraints consistent with the demonstrations and gathered data. Then, we propose a policy for adaptively satisfying the constraints which interleaves chance-constrained planning, execution, and belief updates until the task is completed. This paper makes following specific contributions:
\begin{enumerate}[\hspace{-3pt} 1.]
	\item We show how to extract \textit{all possible constraints}, for some constraint parameterization, consistent with a set of locally-optimal demonstrations, which we use to construct a belief over constraints.
	\item We provide a novel method for planning approximately-optimal open-loop trajectories between new start and goal states, which are safe under the constraint belief with a prescribed probability.
	\item We show how to use these open-loop trajectories to construct a closed-loop policy to adaptively satisfy the uncertain constraints, incorporating constraint information observed during execution.
	\item We theoretically analyze our algorithm, proving the completeness of constraint extraction for various constraint parameterizations and providing probabilistic safety guarantees for our planner.
	\item We evaluate our method by planning for a 7-DOF arm and a quadrotor with uncertain high-dimensional constraints, showing that our methods outperform baselines in efficiency and safety.
\end{enumerate}

\vspace{-6pt}
\section{Related Work}
\vspace{-2pt}

The constraint learning literature has shown how to learn global constraints shared across demonstrations \cite{ral, menner, corl}, geometric state space constraints \cite{dmitry, shah}, and local trajectory constraints \cite{lfdc4, lfdc1, vijayakumar, ParkNPRR19}. However, prior work makes the key assumption that the demonstrations are sufficiently informative for the learned constraint to be confidently used to plan new trajectories. As constraint learning is an ill-posed inverse problem, this is often untrue, and the robot should instead reason over the constraint uncertainty to complete the task. Two exceptions are \cite{borrelli}, which learns to satisfy linear constraints in iterative tasks -- whereas we learn to satisfy non-convex constraints to complete a task once -- and \cite{puns}, which plans open-loop to satisfy a finite set of possible temporal logic constraints -- while we plan closed-loop to adaptively satisfy an infinite set of possible low-level state space constraints.

Our work also relates to inverse optimal control \cite{irl_1, irl_2, birl}, imitation learning \cite{dagger}, and safety-focused variants \cite{safe_il1, safe_il2, safe_il3, safe_il4, safe_lfd1, safe_lfd2} that estimate the uncertainty in the learned policy. However, these approaches only use the uncertainty as a sign to switch to a safe backup policy \cite{safe_il1, safe_il3, safe_lfd2}, to evaluate the quality of a given policy \cite{safe_il2, safe_lfd1}, or to query the demonstrator for more data \cite{safe_il4}. In our setting, we must use the uncertainty from the demonstrations to compute the policy. Further, as we model the demonstrator's preferences and the task constraints separately, we can specifically reason over the uncertain constraints that can be learned in a self-supervised way, without the demonstrator.

Finally, our work relates to planning in uncertain environments. Some methods simplify the problem using Gaussian uncertainty \cite{axelrod} or constraint convexity \cite{vitus}; however, our constraint belief cannot be represented with these approximations. Other methods sample possible constraints and enforce them all, but this leads to plan infeasibility \cite{scenario}, while \cite{janson, richter} focus on high-quality short-range planning due to minimal knowledge of the global map. Our contributions are to show how demonstrations can reduce environment uncertainty, yielding better global plans, and to provide a method to plan chance-constrained trajectories for a class of uncertainty distributions with complex support. 

\vspace{-6pt}
\section{Preliminaries and Problem Setup}
\vspace{-2pt}

We consider demonstrations performed on systems $\state_{t+1} = f(\state_t, \control_t, t)$, $\state\in\statespace$, $\control\in\controlset$ completing tasks $\task\in\taskspace$, represented as constrained optimizations over state/control trajectories $\trajxu\doteq(\trajx,\traju)$:

\begin{problem}[Forward (demonstrator's) problem / ``task" $\task$]\label{prob:fwd_prob}
\normalfont
\vspace{2pt}
\begin{equation*}\label{eq:fwdprob}
	\begin{array}{>{\displaystyle}c >{\displaystyle}l >{\displaystyle}l}
				&\\[-10pt]
		\underset{\trajxu}{\text{minimize}} & \quad c_\task(\trajxu) &\\
		\text{subject to} & \quad \phi(\trajxu) \in \safeset(\theta) \subseteq \constraintspace & \Leftrightarrow\quad \mathbf{g}_{\neg k}(\trajxu, \theta) \le \mathbf{0}\\
		& \quad \bar\phi(\trajxu) \in \bar\safeset \subseteq \bar\constraintspace, \quad \phi_\task(\trajxu) \in \safeset_\task \subseteq \constraintspace_\task & \Leftrightarrow \quad \mathbf{h}_k(\trajxu) = \mathbf{0},\quad \mathbf{g}_{k}(\trajxu) \le \mathbf{0}
	\end{array}\hspace{-15pt}
\end{equation*}
\end{problem}

\noindent where $c_\task(\cdot)$ is task-dependent and $\phi(\cdot)$ maps from trajectories to constraint space $\constraintspace$; elements of $\constraintspace$ are denoted \textit{constraint states} $\cstate \in \constraintspace$. $\bar\phi(\cdot)$ and $\phi_\task(\cdot)$ map to constraint spaces $\bar\constraintspace$ and  $\constraintspace_\task$, containing a known shared safe set $\bar \safeset$ and task-dependent safe set $\safeset_\task$; we embed the dynamics in $\bar\safeset$ and start/goal constraints in $\safeset_\task$. We group the constraints of Prob. \ref{prob:fwd_prob} as (in)equality (ineq/eq) and (un)known ($\neg k/k$), where $\mathbf{h}_k(\trajxu)\hspace{-2pt} \in\hspace{-2pt} \mathbb{R}^{N_k^\eq}$, $\mathbf{g}_{k}(\trajxu)\hspace{-2pt} \in\hspace{-2pt} \mathbb{R}^{N_k^\ineq}$, and $\mathbf{g}_{\neg k}(\trajxu, \theta)\hspace{-2pt} \in\hspace{-2pt} \mathbb{R}^{N_{\neg k}^\ineq}$, and let $g(\cstate,\theta) \doteq \max_{i\in\{ 1, \ldots, N_{\neg k}^{\ineq}\}}\hspace{-4pt}\big(g_{i,\neg k}(\cstate, \theta)\big)$. Let the unknown safe and unsafe sets defined by parameter $\theta \in \Theta \subseteq \mathbb{R}^{d}$ be $\safeset(\theta)$ and $\unsafeset(\theta)$, respectively:

  \begin{equation}
  \safeset(\theta) \doteq \{\cstate \in \constraintspace\mid g(\cstate, \theta) \le 0\}
    \label{eq:safeset}
  \end{equation}
  
  \vspace{-5pt}\begin{equation}
  \unsafeset(\theta) \doteq \safeset(\theta)^c = \{\cstate \in \constraintspace\mid g(\cstate, \theta) > 0\}
    \label{eq:unsafeset}
  \end{equation}

We assume each state-control demonstration $\traj^\textrm{loc}$ approximately solves Prob. \ref{prob:fwd_prob} to local optimality, satisfying Prob. \ref{prob:fwd_prob}'s Karush-Kuhn-Tucker (KKT) conditions \cite{cvxbook} within a tolerance. Intuitively, this means $\traj^\textrm{loc}$ is feasible for Prob. \ref{prob:fwd_prob} (it remains within the safe set $\safeset(\theta)$ and satisfies the known constraints) and is within the neighborhood of a local optimum. With Lagrange multipliers $\lambda$, $\nu$, the relevant KKT conditions for the $j$th demonstration $\traj_{j}^\textrm{loc}$, denoted $\textrm{KKT}(\traj_{j}^\textrm{loc})$, are:

\vspace{-19pt}\begin{subequations}\label{eq:kkt}
\noindent
  \begin{equation}
  \mathbf{g}_{\neg k}(\traj_{j}^\textrm{loc}, \textcolor{red}{\theta}) \le \mathbf{0}\hspace{-3pt}
    \label{eq:kkt_primal3}
  \end{equation} \\\vspace{-15pt}
  \begin{equation}
  \textcolor{blue}{\boldsymbol{\lambda}_{k}^j}\odot\mathbf{g}_{k}(\traj_{j}^\textrm{loc}) = \mathbf{0}, \ \qquad \textcolor{blue}{\boldsymbol{\lambda}_{k}^j} \ge \mathbf{0}\hspace{-3pt}
    \label{eq:kkt_comp1}
  \end{equation} \\\vspace{-15pt}
  \begin{equation}
  \textcolor{blue}{\boldsymbol{\lambda}_{\neg k}^j}\odot\mathbf{g}_{\neg k}(\traj_{j}^\textrm{loc}, \textcolor{red}{\theta}) = \mathbf{0}, \ \qquad \textcolor{blue}{\boldsymbol{\lambda}_{\neg k}^j} \ge \mathbf{0}
    \label{eq:kkt_comp2}
  \end{equation}
\vspace{-10pt}
\begin{equation}
	\nabla_{\trajxu} c_\task(\traj_{j}^\textrm{loc}) + \textcolor{blue}{\boldsymbol{\lambda}_{k}^{j}}^\top \nabla_{\trajxu} \mathbf{g}_{k}(\traj_{j}^\textrm{loc})+\textcolor{blue}{\boldsymbol{\lambda}_{\neg k}^j}^{\hspace{-4pt}\top} \nabla_{\trajxu} \mathbf{g}_{\neg k}(\traj_{j}^\textrm{loc}, \textcolor{red}{\theta})+\textcolor{blue}{\boldsymbol{\nu}_{k}^j}^\top \nabla_{\trajxu} \mathbf{h}_{k}(\traj_{j}^\textrm{loc}) = \mathbf{0}
	\label{eq:kkt_stat}
\end{equation}
\end{subequations}
\vspace{-10pt}\begin{wrapfigure}{r}{.5\linewidth}\vspace{-6pt}
\begin{problem}[Inverse constraint learning problem]\label{prob:kkt_opt}
\normalfont\begin{equation*}\vspace{2pt}\label{eq:fwdprob}
	\hspace{-27pt}\begin{array}{>{\displaystyle}c >{\displaystyle}l >{\displaystyle}l}
				&\\[-17pt]
		\text{find} & \theta, \lag \doteq \{\boldsymbol{\lambda}_{k}^j, \boldsymbol{\lambda}_{\neg k}^j,\boldsymbol{\nu}_{k}^j\}_{j=1}^{\numsafe}\\
		\text{subject to} & \{\textrm{KKT}(\traj_{j}^\textrm{loc})\}_{j=1}^{\numsafe}\\[-2pt]
	\end{array}\hspace{-20pt}
\end{equation*}\vspace{-15pt}
\end{problem}\end{wrapfigure}
\vspace{-3pt}

\vspace{-5pt}\noindent where $\nabla_{\trajxu}(\cdot)$ differentiates with respect to a flattened $\trajxu$ and $\odot$ denotes elementwise product. \eqref{eq:kkt_primal3} enforces primal feasibility, \eqref{eq:kkt_comp1}-\eqref{eq:kkt_comp2} enforces complementary slackness, and stationarity \eqref{eq:kkt_stat} enforces the demonstration cannot be locally improved. As in \cite{ral}, we can solve Prob. \ref{prob:kkt_opt} to find constraints that make the demonstrations locally-optimal by finding a $\theta$ and Lagrange multipliers which are together consistent with the KKT conditions of the demonstrations. To handle approximate local-optimality in Prob. \ref{prob:kkt_opt}, we relax constraints \eqref{eq:kkt_comp1}-\eqref{eq:kkt_stat} to penalties. This framework can also learn unknown cost function parameters (c.f. App. \ref{app:shapes}). Let $\feas$ denote the feasible set of Prob. \ref{prob:kkt_opt}. Denote the projection of $\feas$ onto $\Theta$ (the set of all consistent constraints $\theta$) as $\feas_\theta$, and the projection of $\feastheta$ onto $\constraintset$ (the set of possibly-unsafe constraint states) as $\textrm{proj}_\constraintset(\feastheta)$:

\vspace{-5pt}
\begin{equation}
	\feas_\theta \doteq \{ \theta \mid \exists \lag: (\theta,\lag) \in \feas \}
\end{equation}
\vspace{-2pt}
	\begin{equation}\label{eq:projtheta}
		\textrm{proj}_\constraintset(\feastheta) \doteq \{\cstate\in\constraintset \mid \exists \theta \in \feastheta, g(\cstate, \theta) > 0\}
	\end{equation}\vspace{3pt}

While Prob. \ref{prob:kkt_opt} returns \textit{one possible} $\theta$, there can be an \textit{infinite set} of possible $\theta$: $\feastheta$. Thus, $\feastheta$ represents the constraint uncertainty. We use $\feastheta$ to build a constraint belief (Sec. \ref{sec:extraction}) and use $\textrm{proj}_\constraintset(\feastheta)$ for planning (Sec. \ref{sec:planning}). Finally, let the set of learned guaranteed-safe/unsafe constraint states be $\guarsafe$ and $\guarunsafe$, where $\cstate$ is learned guaranteed (un)safe if it is marked (un)safe for all $\theta \in \feastheta$ (c.f. Fig. \ref{fig:projection_extraction}):

	\begin{equation}\label{eq:guarsafe}
		\guarsafe \doteq \bigcap_{\theta \in \feas_\theta} \{ \cstate\ |\ g(\cstate, \theta) \le 0 \}
	\end{equation}

	\begin{equation}\label{eq:guarunsafe}
		\guarunsafe \doteq \bigcap_{\theta \in \feas_\theta} \{ \cstate\ |\ g(\cstate, \theta) > 0 \}
	\end{equation}

Previous work \cite{ral} plans guaranteed-safe trajectories by enforcing that they always remain in $\guarsafe$, but $\guarsafe$ can be tiny or disconnected (Fig. \ref{fig:projection_extraction}), making planning infeasible. In this work, we allow plans to pass through possibly-unsafe space $\textrm{proj}_\constraintset(\feastheta)$, but seek to minimize constraint violations.

\textbf{Problem statement}: We are given $\numsafe$ demonstrations $\{\traj_{j}^\textrm{loc} \}_{j=1}^{\numsafe}$, known shared and task-dependent safe sets $\bar\safeset$ / $\safeset_\task$, and a prior $p(\theta)$ over the unknown constraint. Our goals are: 1) recover the set of all constraint parameters $\feastheta \subseteq \Theta$ consistent with the demonstrations to obtain a constraint belief $b_\textrm{dem}(\theta) \doteq p(\theta \mid \{\traj_{j}^\textrm{loc} \}_{j=1}^{\numsafe}) \in \mathcal{P}(\Theta)$, and 2) compute a policy $\policy(\cdot, \cdot, \cdot): \mathcal{P}(\Theta) \times \statespace \times (\mathcal{O})^* \rightarrow \controlset$, which takes a prior, start state $\state_0$, and a sequence of constraint observations, and returns a control input $\control$, and completes a task (in this paper, we consider a task as reaching a goal $\state_g$ from $\state_0$) while minimizing one of two objectives. In the first variant, denoted Prob. MCV, we want to reach the goal with the \underline{m}inimum number of expected \underline{c}onstraint \underline{v}iolations. In the second variant, denoted Prob. MEC, we want to \underline{m}inimize the \underline{e}xpected \underline{c}ost of some general objective function.

\begin{figure}[!htb]
        \centering\vspace{-2pt}
        \includegraphics[width=\linewidth]{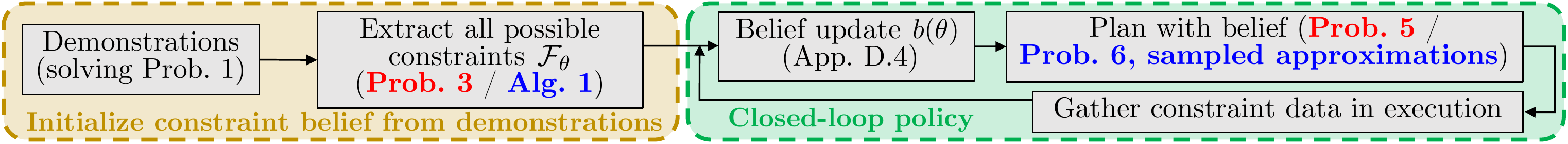}\vspace{-3pt}
        \caption{Overall method flow for adaptive planning from demonstrations. We refer to both the ideal (red), but intractable, subproblems, as well as the tractable (blue) variants of those subproblems.\vspace{-0pt}}
        \label{fig:flow}
\end{figure}
\textbf{Method overview}: To prime the reader, we outline two variants of our method in Fig. \ref{fig:flow}: 1) an ideal variant that requires the solution of intractable optimizations, and 2) tractable variants which approximate the idealized problems or exploit simplifying problem structure. For closed-loop planning, both variants compute the constraint belief (Sec. \ref{sec:extraction}), then iteratively plan with the belief, update the belief with constraint data measured in execution, and replan with the updated belief (Sec. \ref{sec:planning}).

\vspace{-3pt}
\section{Obtaining a belief over constraints}\label{sec:extraction}
\vspace{-2pt}

\begin{wrapfigure}{r}{0.41\linewidth}\vspace{-27pt}
\begin{figure}[H]
        \centering\vspace{-20pt}
        \includegraphics[width=\linewidth]{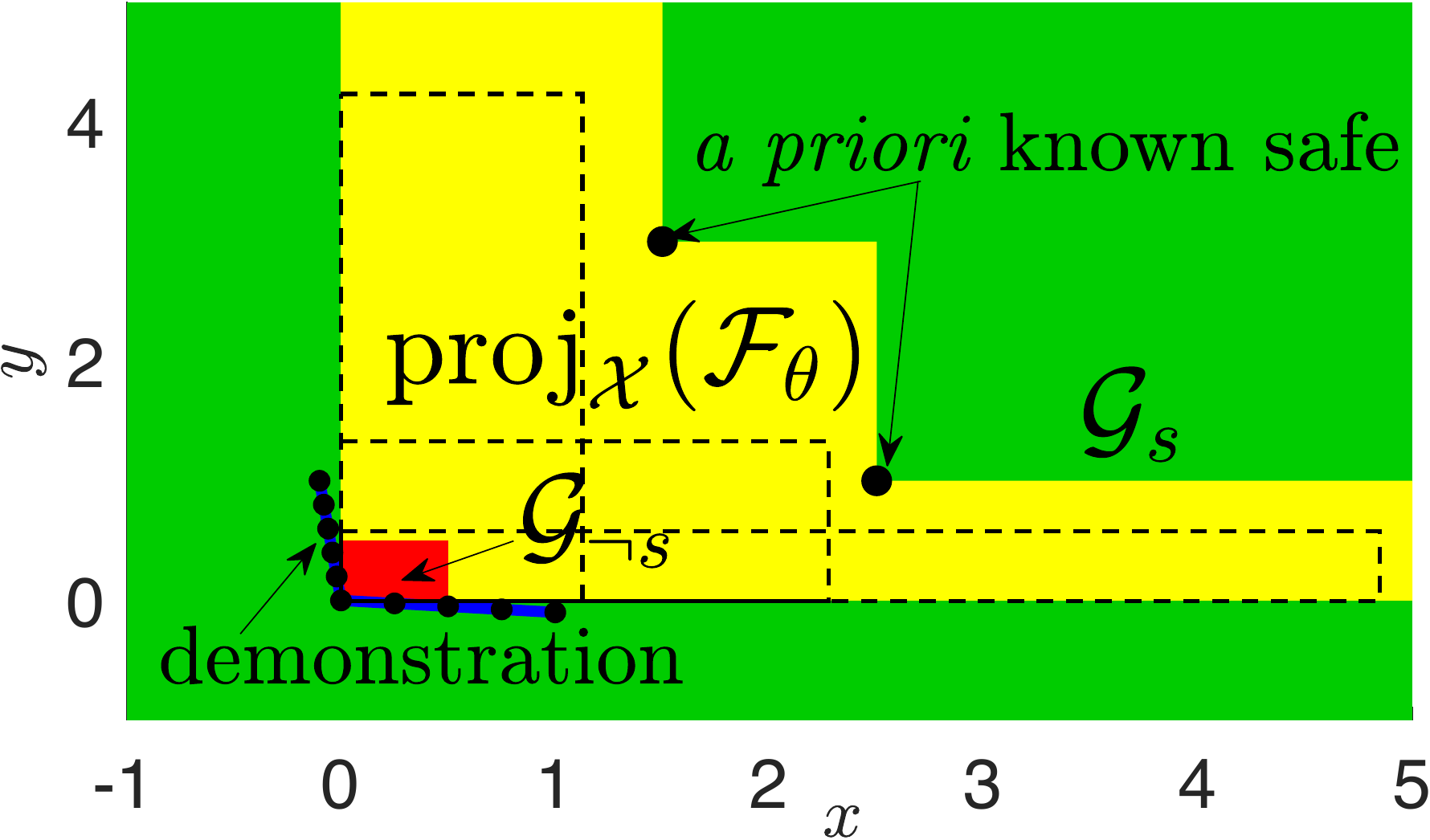}\vspace{5pt}
        \caption{$\feastheta$ for a one-box parameterization of $\unsafeset(\theta)$, induced by a demo. and two safe states, projected onto $\statespace$. With the data, the upper $x$ / $y$ bounds $\bar x(\theta)$ / $\bar y(\theta)$ remain uncertain. Also: some possible $\unsafeset(\theta)$ (dotted).}
        \label{fig:projection_extraction}\vspace{-11pt}
\end{figure}\vspace{-35pt}

\end{wrapfigure}

Extracting $\feastheta$ is crucial for obtaining an accurate belief over constraints. In this section, we show how to use robust optimization to obtain $\feas_\theta$ for some constraint parameterizations. We can robustify Prob. \ref{prob:kkt_opt}, where $\theta$ is considered as an uncertain variable in uncertainty set $\feasrobust \subseteq \Theta$:

\vspace{2pt}\begin{problem}[Inverse constraint learning, robustified in $\theta$]\label{prob:kkt_opt_robust}
\normalfont\begin{equation*}\vspace{2pt}\label{eq:fwdprob}
	\hspace{-27pt}\begin{array}{>{\displaystyle}c >{\displaystyle}l >{\displaystyle}l}
				&\\[-14pt]
		\underset{\feasrobust}{\text{sup}} & \textrm{Vol}(\feasrobust) \\[-4pt]
		\text{s.t.} & \forall \theta \in \hat\feas_\theta,\ \ \exists \{\boldsymbol{\lambda}_{k}^j, \boldsymbol{\lambda}_{\neg k}^j,\boldsymbol{\nu}_{k}^j \mid \textrm{KKT}(\traj_{j}^\textrm{loc})\}_{j=1}^{\numsafe}\\[-4pt]
	\end{array}\hspace{-20pt}
\end{equation*}
\end{problem}\vspace{-2pt}

\vspace{10pt}and search for the largest set $\feasrobust \subseteq \Theta$ where each $\theta \in \feasrobust$ satisfies KKT; the optimizer of Prob. \ref{prob:kkt_opt_robust} is $\feastheta$. However, Prob. \ref{prob:kkt_opt_robust} is intractable due to 1) the optimization over arbitrarily-shaped sets $\feasrobust$, 2) measuring the volume of such sets, and 3) the existential quantifiers $\exists$, implying we may need to find different Lagrange multipliers for each $\theta \in \feasrobust$. We address these challenges in the following.

\subsection{Obtaining the set of demonstration-consistent constraints $\feastheta$}\label{sec:extraction_uob}

We assume the unknown constraint $\unsafeset(\theta)$ can be represented as a union of boxes in constraint space $\{\cstate \mid \bigcup_i [I, -I]^\top \cstate \le \theta_i\}$. This assumption is reasonable as any shape can be represented by unioning enough boxes \cite{tao}, though this can be inefficient (thus, we relax the assumption in App. \ref{app:extraction_zon}). In App. \ref{sec:unionsofoffset}, we prove that if $\unsafeset(\theta)$ can be described as a union of boxes, so can $\feas_\theta$. By exploiting this structure, we develop a tractable variant of Prob. \ref{prob:kkt_opt_robust} using robust linear programming \cite{Ben-TalGN09}.

\vspace{4pt}We address the challenge of set optimization by optimizing over only boxes. Using the identity $\sup_{\Vert u \Vert_\infty \le 1} a^\top u = \Vert a \Vert_1$, a linear constraint $a^\top (x + s\odot u) \le b$ involving uncertain variable $u: \Vert u \Vert_\infty \le 1$, can be equivalently written without $u$ as $a^\top x + \Vert a \odot s \Vert_1 \le b$, where $s \in \mathbb{R}_{\ge 0}^{d}$ scales the uncertainty. We can use this idea to enforce that the KKT conditions robustly hold everywhere in some box $\theta + s \odot u$, where $\Vert u\Vert_\infty \le 1$. Concretely, we can replace \eqref{eq:kkt_primal3} with $\mathbf{g}_{\neg k}(\traj_{j}^\textrm{loc}, \textcolor{red}{\theta + s \odot u}) \le \mathbf{0}$, \eqref{eq:kkt_comp2} with $\textcolor{blue}{\boldsymbol{\lambda}_{\neg k}^j}\odot\mathbf{g}_{\neg k}(\traj_{j}^\textrm{loc}, \textcolor{red}{\theta + s \odot u}) = \mathbf{0}$, and \eqref{eq:kkt_stat} with $\nabla_{\trajxu} c(\traj_{j}^\textrm{loc}) + \textcolor{blue}{\boldsymbol{\lambda}_{k}^{j}}^\top \nabla_{\trajxu} \mathbf{g}_{k}(\traj_{j}^\textrm{loc}) + \textcolor{blue}{\boldsymbol{\lambda}_{\neg k}^j}^{\hspace{-4pt}\top} \nabla_{\trajxu} \mathbf{g}_{\neg k}(\traj_{j}^\textrm{loc}, \textcolor{red}{\theta + s \odot u}) + \textcolor{blue}{\boldsymbol{\nu}_{k}^j}^\top \nabla_{\trajxu} \mathbf{h}_{k}(\traj_{j}^\textrm{loc}) = \mathbf{0}$, and eliminate $u$ with the identity. We denote \eqref{eq:kkt_comp1} and the robustified \eqref{eq:kkt_primal3}, \eqref{eq:kkt_comp2}, and \eqref{eq:kkt_stat} together as $\textrm{KKT}_\textrm{rob}^\textrm{box}(\demj)$, which are representable in a mixed integer linear program (MILP) (we need binary variables to enforce the robustified \eqref{eq:kkt_comp2}).

\begin{wrapfigure}{r}{0.5\linewidth}\vspace{-8pt}
\begin{figure}[H]
        \centering\vspace{-18pt}
        \includegraphics[width=0.94\linewidth]{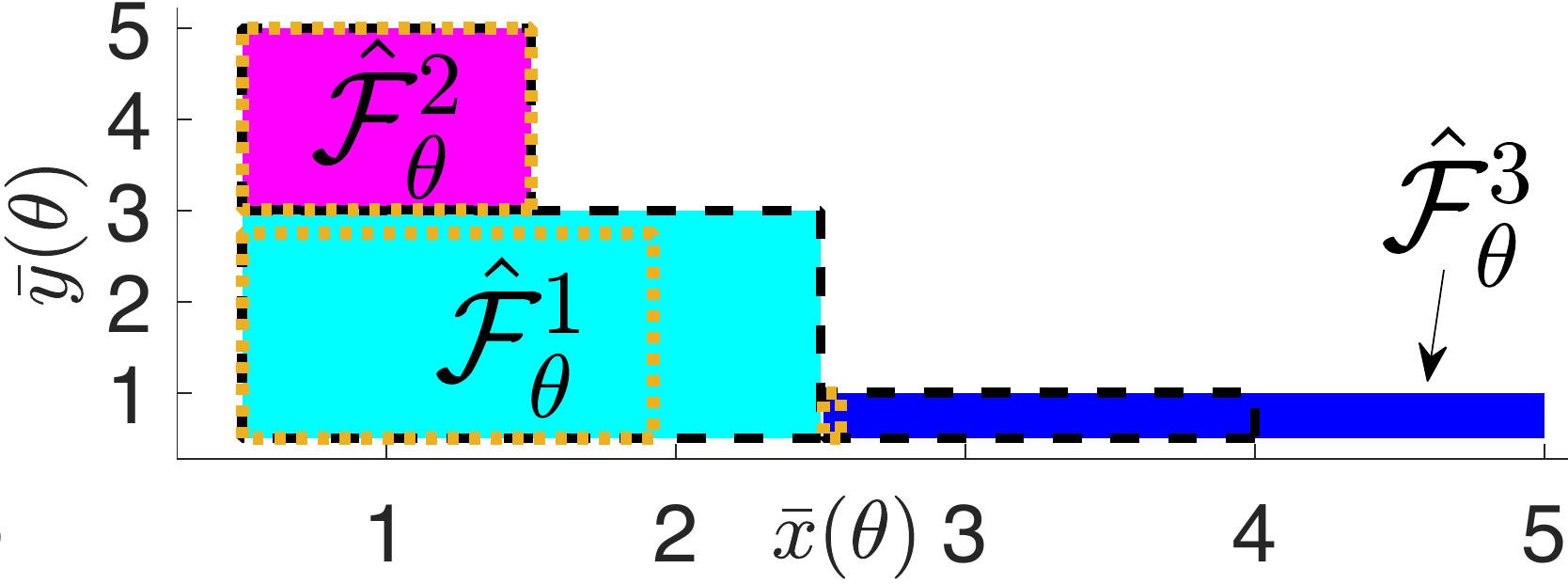}\vspace{-5pt}
        \caption{Extracting $\feastheta$ via Alg. \ref{alg:extraction}: requires 3 iterations. Overlaid: $\mathcal{B}_i^{\varepsilon_\textrm{min}}$ (black dotted) and $\mathcal{B}_i^{R}$ (orange dotted), $i=1,...,3$, optimized by solving Probs. \ref{prob:cc_riemann}-$\varepsilon_\textrm{min}$ and \ref{prob:cc_riemann}-R (plans in Fig. \ref{fig:plan}). \vspace{-35pt}}
        \label{fig:extraction}
\end{figure}\vspace{-0pt}
\end{wrapfigure}

\vspace{4pt}The box representation of $\feasrobust$ simplifies volume optimization. Since $s$ scales the uncertainty (and thus the volume of $\feasrobust$), we can satisfy $a^\top (x + s\odot u) \le b$ with ``maximum robustness" by jointly finding $x$ and $s$ to maximize the volume of $\feasrobust$, $\prod_i s_i$, where $s_i$ is the $i$th entry of $s$. While $\prod_i s_i$ is non-convex in $s$, its geometric mean, $(\prod_i s_i)^{1/d}$, is conic-representable \cite{SOCP}. It is also a monotonic transform of the volume, and thus an exact surrogate for volume maximization. 

\vspace{4pt}Finally, we can ignore the existential quantifiers in this case: \eqref{eq:kkt_stat} does not involve $\theta$ (as $\trajxu$ does not multiply $\theta$), \eqref{eq:kkt_comp2} implies that $\lambda_{\neg k, i}^j = 0$ for any coordinates $i$ where $g_{\neg k, i}(\demj, \theta)$ varies (hence one value, $\lambda=0$, suffices), and \eqref{eq:kkt_primal3} does not involve $\lag$. Thus, a single set of multipliers suffices, and we find the largest box-shaped $\feasrobust$ via Prob. \ref{prob:kkt_opt_robust_boxes}, a mixed integer second order cone program (MISOCP).

\vspace{-3pt}\begin{wrapfigure}{l}{.4\linewidth}\vspace{-8pt}
\begin{problem}[Box robustification]\label{prob:kkt_opt_robust_boxes}
\normalfont\begin{equation*}\vspace{2pt}\label{eq:fwdprob}
	\hspace{-0pt}\begin{array}{>{\displaystyle}c >{\displaystyle}l >{\displaystyle}l}
				&\\[-15pt]
		\underset{s, \theta, \lag }{\text{maximize}} & \big(\textstyle\prod_i s_i\big)^{1/d} \\
		\text{subject to} & \{\textrm{KKT}_\textrm{rob}^\textrm{box}(\traj_{j}^\textrm{loc})\}_{j=1}^{\numsafe}\\[1pt]
	\end{array}\hspace{-20pt}
\end{equation*}
\end{problem}\vspace{-20pt}
\end{wrapfigure}

\vspace{3pt}Solving Prob. \ref{prob:kkt_opt_robust_boxes} returns the largest box contained within $\feastheta$: $\feasrobust = \{\theta' \mid \bigwedge_{i=1}^{d} |\theta_i' - \theta_i| \le s_i \} \subseteq \feastheta$. As $\feastheta$ is a union of boxes, we can extract $\feastheta$ in its entirety by solving Prob. \ref{prob:kkt_opt_robust_boxes}, removing the extracted $\feasrobust$ from its feasible set (done with binary variables), and re-solving Prob. \ref{prob:kkt_opt_robust_boxes} with the modified feasible set until it becomes infeasible (Alg. \ref{alg:extraction}, Fig. \ref{fig:extraction}). Concretely, $\feastheta = \bigcup_{i=1}^{N_\textrm{infeas}} \feasrobust^i$, where $\feasrobust^i$ is the box returned at the $i$th iteration and $N_\textrm{infeas}$ is the iteration when infeasibility is reached. We can also prove some theoretical guarantees on Alg. \ref{alg:extraction} (see App. \ref{app:theory} for proofs).

\vspace{2pt}
\begin{theorem}
	If Alg. \ref{alg:extraction} terminates for any parameterization, its output is guaranteed to cover $\feastheta$.
\end{theorem}\vspace{-2pt}
\begin{theorem}
	Alg. \ref{alg:extraction} is guaranteed to terminate in finite time for union-of-boxes parameterizations.
\end{theorem}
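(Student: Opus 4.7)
The plan is to combine the structural result from App.~\ref{sec:unionsofoffset} (that $\feastheta$ is a finite union of axis-aligned boxes when $\unsafeset(\theta)$ is parameterized as a union of boxes) with a combinatorial finiteness argument that bounds the number of distinct boxes Alg.~\ref{alg:extraction} can ever extract.

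First, I would invoke the appendix result to write $\feastheta = \bigcup_{k=1}^{M} B_k$ for finite $M$, and define $S \subset \mathbb{R}$ as the finite set of all side coordinates of the $B_k$'s, so $|S|\le 2dM$. Then I would prove two closure properties: (i) the set difference $A \setminus R$ of a finite union of axis-aligned boxes $A$ with a single axis-aligned box $R$, when both have all side coordinates in $S$, is itself a finite union of axis-aligned boxes with side coordinates in $S$ (via the standard slab decomposition, which introduces no new coordinates); and (ii) any maximum-volume axis-aligned box inscribed in such a finite union of boxes has all of its own sides at coordinates in $S$, since otherwise one could shift a side outward slightly without leaving the set, strictly increasing the volume and contradicting maximality.

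Using these, I would induct on the iteration $i$ to show that the ``remaining feasible set'' $\feastheta \setminus \bigcup_{j<i}\feasrobust^j$ is always a finite union of axis-aligned boxes with coordinates in $S$, and therefore $\feasrobust^i$ (the optimal box returned by Prob.~\ref{prob:kkt_opt_robust_boxes} at iteration $i$) also has all sides at coordinates in $S$. Since the collection of axis-aligned boxes with side coordinates in $S$ is finite (of size at most $\binom{|S|+1}{2}^d$), and each $\feasrobust^i$ has positive volume with interior disjoint from $\feasrobust^j$ for $j<i$ (as those are actively removed from the feasible set via binary variables), the extracted boxes are pairwise distinct, so Alg.~\ref{alg:extraction} can run for at most this many iterations before Prob.~\ref{prob:kkt_opt_robust_boxes} becomes infeasible.

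The main obstacle I anticipate is rigorously bridging the continuous MISOCP formulation of Prob.~\ref{prob:kkt_opt_robust_boxes}, which optimizes over real-valued $(\theta,s)$, with the discrete coordinate set $S$: specifically, one must verify that among the optimal solutions of the MISOCP at each iteration, at least one corresponds to a box whose sides lie in $S$. This should reduce to observing that the feasible region of Prob.~\ref{prob:kkt_opt_robust_boxes} (after robustification and the binary exclusion constraints for prior iterations) is itself a finite union of axis-aligned polyhedra with data drawn from $S$, so that the maximum of the geometric-mean volume objective is attained at a vertex with coordinates in $S$, closing the induction.
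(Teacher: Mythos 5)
Your proposal is correct and is essentially the paper's own argument: the paper likewise invokes the union-of-boxes structure of $\feastheta$, extends the boxes' bounding hyperplanes into a finite irregular grid over $\Theta$, and uses the same ``push a side outward to the next grid coordinate without leaving $\feastheta$, contradicting optimality'' argument to show every extracted box is a union of grid cells, concluding termination since each iteration removes at least one of finitely many cells. The obstacle you anticipate (bridging the continuous optimum with the discrete coordinate set $S$) is resolved exactly by your closure property (ii), which is the same optimality-based contradiction the paper uses.
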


\vspace{-5pt}\begin{algorithm}[H]
\SetAlgoLined
$i = 0$; \While{Prob. \ref{prob:kkt_opt_robust_boxes} feasible}{
\hspace{-5pt}$i \leftarrow i + 1$;\ \ $\mathcal{\hat F}_\theta^i \leftarrow$ Prob. \ref{prob:kkt_opt_robust_boxes}($\{\mathcal{\hat F}_\theta^j\}_{j=1}^{i-1}$);\\
\hspace{-5pt}remove $\mathcal{\hat F}_\theta^i$ from Prob. \ref{prob:kkt_opt_robust_boxes}'s feasible set;\\}
return $\bigcup_i \mathcal{\hat F}_\theta^i$\caption{Iterative $\feastheta$ extraction}\label{alg:extraction}
\end{algorithm}

In closing, we refer to App. \ref{app:extraction_zon}, where we modify Alg. \ref{alg:extraction} to more efficiently extract $\feastheta$ for other constraint parameterizations by covering $\feastheta$ with zonotopes instead of boxes.

\vspace{-5pt}
\subsection{Obtaining the constraint belief $b(\theta)$}
\vspace{-2pt}

To perform a Bayesian update of $p(\theta)$, conditioning on the extracted $\feastheta$, we assume that a demonstration is equally likely to have been generated in response to any $\theta$ for which it is locally-optimal:

\begin{equation}
	p(\{\demj\}_{j=1}^{N_\textrm{dem}} \mid \theta) \propto \begin{cases}
                                   1 & \text{if $\{\textrm{KKT}(\demj, \theta)\}_{j=1}^{N_\textrm{dem}}$ satisfied } \\
                                   0 & \text{else}
 								 \end{cases}
\end{equation}

Then, a Bayesian update incorporating the demonstrations amounts to removing all probability mass from KKT-inconsistent $\theta$ and renormalizing the probabilities for the KKT-consistent $\theta$: 

\begin{equation}
		b_\dem(\theta) \doteq p(\theta \mid \{\demj\}_{j=1}^{N_\textrm{dem}}) = \frac{p(\{\demj\}_{j=1}^{N_\textrm{dem}} \mid \theta)p(\theta)}{\int_{\Theta} p(\{\demj\}_{j=1}^{N_\textrm{dem}} \mid \theta) p(\theta) d\theta} = \begin{cases} \frac{p(\theta)}{\int_{\feastheta} p(\theta) d\theta} & \text{if }\theta \in \feastheta \\ 0 & \textrm{else} \end{cases}
\end{equation}

Finally, we note that this approach is also compatible with uninformative priors (i.e. if no demonstrations are provided) by using the initial prior as the belief: $b(\theta) = p(\theta)$.

\vspace{-6pt}
\section{Policies for adaptive constraint satisfaction}\label{sec:planning}
\vspace{-4pt}

We describe how to use the belief over infinite constraints $b_\dem(\theta)$ to plan open-loop trajectories with exact safety probability guarantees (Sec. \ref{sec:plan_open_loop}), how to plan with more complex constraints with samples from $b_\dem(\theta)$ (Sec. \ref{sec:plan_open_loop_samp}), how $b_\dem(\theta)$ can be updated to use constraint data sensed in execution (Sec. \ref{sec:plan_feasupdate}), and how the open-loop plans can be used in a closed-loop policy (Sec. \ref{sec:plan_policy}).

\vspace{-4pt}
\subsection{Planning open-loop trajectories with an \textit{infinite set} of possible constraints}\label{sec:plan_open_loop}\vspace{1pt}

\begin{wrapfigure}{l}{0.38\textwidth}
\vspace{-18pt}\begin{problem}{Chance-constrained plan}\label{prob:cc}\normalfont
\begin{subequations}
	\begin{align}
    \min_{\trajxu }\quad & c_{\task}(\trajxu) \label{eq:cost_cc}\\
    \textrm{s.t.}\quad  & \bar\phi(\trajxu) \in \bar\safeset \subseteq \bar\constraintspace\\
	& \phi_{\task}(\trajxu) \in \safeset_{\task} \subseteq \constraintspace_{\task} \\
    & \textrm{Pr}(\trajxu \textrm{ safe}) \ge 1-\varepsilon\label{eq:prob_safety}
\end{align}
\end{subequations}
\end{problem}\vspace{-20pt}
\end{wrapfigure} We wish to solve Prob. \ref{prob:cc} for convex \eqref{eq:cost_cc}, which seeks to complete a task while ensuring the plan is safe with probability at least $1-\varepsilon$ under the belief $b_\dem(\theta)$, that is, $\pr(\trajxu\safe) = \int_{\Theta_s} b_\dem(\theta) d\theta$, where $\Theta_s = \{\theta \mid \phi(\trajxu) \in\safeset(\theta)\} \subseteq \feastheta$ is the set of constraints that $\trajxu$ satisfies. Here, the safety threshold $\varepsilon \in [0, 1]$ may be predetermined, or if we wish to plan the safest possible trajectory, we can find the smallest $\varepsilon$ for which Prob. \ref{prob:cc} is feasible; denote this variant as Prob. \ref{prob:cc}-$\varepsilon_\textrm{min}$. Intuitively, Prob. \ref{prob:cc} seeks to solve a chance-constrained variant of Prob. \ref{prob:fwd_prob} for a novel task $\task$, where the uncertain constraints must be satisfied with a sufficiently high probability. Prob. \ref{prob:cc} is challenging due to \eqref{eq:prob_safety}, as evaluating this probability requires integrating high-dimensional parameters $\theta$ over a possibly arbitrarily-shaped $\Theta_s$; hence, \eqref{eq:prob_safety} is intractable to enforce exactly for arbitrary distributions and $\Theta_s$. We show that by sacrificing global optimality, it is tractable to enforce \eqref{eq:prob_safety} exactly for simple priors $p(\theta)$ by assuming a simple shape for $\Theta_s$.

\begin{wrapfigure}{r}{0.55\textwidth}\vspace{-10pt}
\begin{problem}{Riemann-sum chance-constrained plan}\normalfont\label{prob:cc_riemann}
\vspace{-0pt}\begin{subequations}\small
	\begin{align}
    \hspace{-5pt}\min_{\trajxu, \mathcal{B}_i, t_i }\ & c_\task(\trajxu)\label{eq:cost_riemann} \\[-4pt]
    \textrm{s.t.}\quad\  & \bar\phi(\trajxu) \in \bar\safeset \subseteq \bar\constraintspace,\ \phi_\task(\trajxu) \in \safeset_\task \subseteq \constraintspace_\task \label{eq:known_riemann} \\
	& \trajxu \in \safeset(\theta),\ \forall \theta \in \mathcal{B}_1, \ldots, \mathcal{B}_{N_\textrm{box}} \label{eq:int_riemann1} \\
    & \mathcal{B}_i \cap \mathcal{B}_j = \emptyset, \ i \ne j,\quad \mathcal{B}_i \subseteq \feastheta, \forall i \label{eq:int_riemann2}\\
    & 0 \le t_i \le (\textstyle\prod_i b_i^\textrm{scale})^{1/d}, \ i = 1,..., N_\textrm{box}\label{eq:int_riemann3}\\
    & \textstyle\sum_i t_i^{d} \ge (1-\varepsilon)\textrm{Vol}(\feastheta)\label{eq:cc_norm}
\end{align}
\end{subequations}
\end{problem}\vspace{-10pt}
\end{wrapfigure}

Our solution, Prob. \ref{prob:cc_riemann}, optimizes over subsets $\Theta_s$ that can be represented as a union of boxes $\Theta_s = \bigcup_{i=1}^{N_\textrm{boxes}} \mathcal{B}_i$, $\mathcal{B}_i \subseteq \feastheta$, for all $i$ (c.f. Sec. \ref{sec:extraction_uob}). Each box is parameterized with a center $b_i^\textrm{cen} \in \mathbb{R}^{d}$ and scalings $b_i^\textrm{scale} \in \mathbb{R}_+^{d}$: $\mathcal{B}_i = \{b_i^\textrm{cen} + b_i^\textrm{scale} \odot u \mid u \in [-1, 1]^{d}\}$. \eqref{eq:int_riemann1}-\eqref{eq:cc_norm} implement this box-limited chance constraint (see detailed explanations for each constraint in App. \ref{app:overview_glossary}). We restrict focus to priors $p(\theta)$ that can be integrated over boxes in closed form, and for which a monotonic transformation of the resulting integral is concave in $b_i^\textrm{cen}$ and $b_i^\textrm{scale}$. While the concavity assumption is satisfied by the broad class of log-concave distributions \cite{logconcave}, the closed-form integral is more restrictive. In this paper, we focus only on a uniform $p(\theta)$ (see App. \ref{app:planning_priors} for extensions to other distributions); note that this does \textit{not} imply a uniform probability of safety over the constraint space $\constraintset$. Intuitively, Prob. \ref{prob:cc_riemann} performs a box-limited Riemann sum integration over the constraint belief. Each box $\mathcal{B}_i$ represents a subset of $\feastheta$ over which the probability is integrated (c.f. Fig. \ref{fig:extraction}). For piecewise affine (PWA) dynamics, Prob. \ref{prob:cc_riemann} can be written as an MISOCP, except for \eqref{eq:cc_norm} which renders Prob. \ref{prob:cc_riemann} an MIBLP: solvable with \cite{gurobi}, but possibly slow. We can replace \eqref{eq:cc_norm} with a linear surrogate $\sum_i t_i \ge (1-\varepsilon)\textrm{Vol}(\feastheta)$, but this can still be slow if $N_\textrm{box}$ is large. We discuss efficient reformulations of Prob. \ref{prob:cc_riemann} in App. \ref{app:planning}. Overall, we have this result (proof in App. \ref{app:theory}):
 
\vspace{5pt} \begin{theorem}
	A solution to Prob. \ref{prob:cc_riemann} is a guaranteed feasible, possibly suboptimal solution to Prob. \ref{prob:cc}.
\end{theorem}

\begin{wrapfigure}{l}{0.5\linewidth}\vspace{-17pt}
\begin{figure}[H]
        \vspace{-10pt}\centering
        \includegraphics[width=0.99\linewidth]{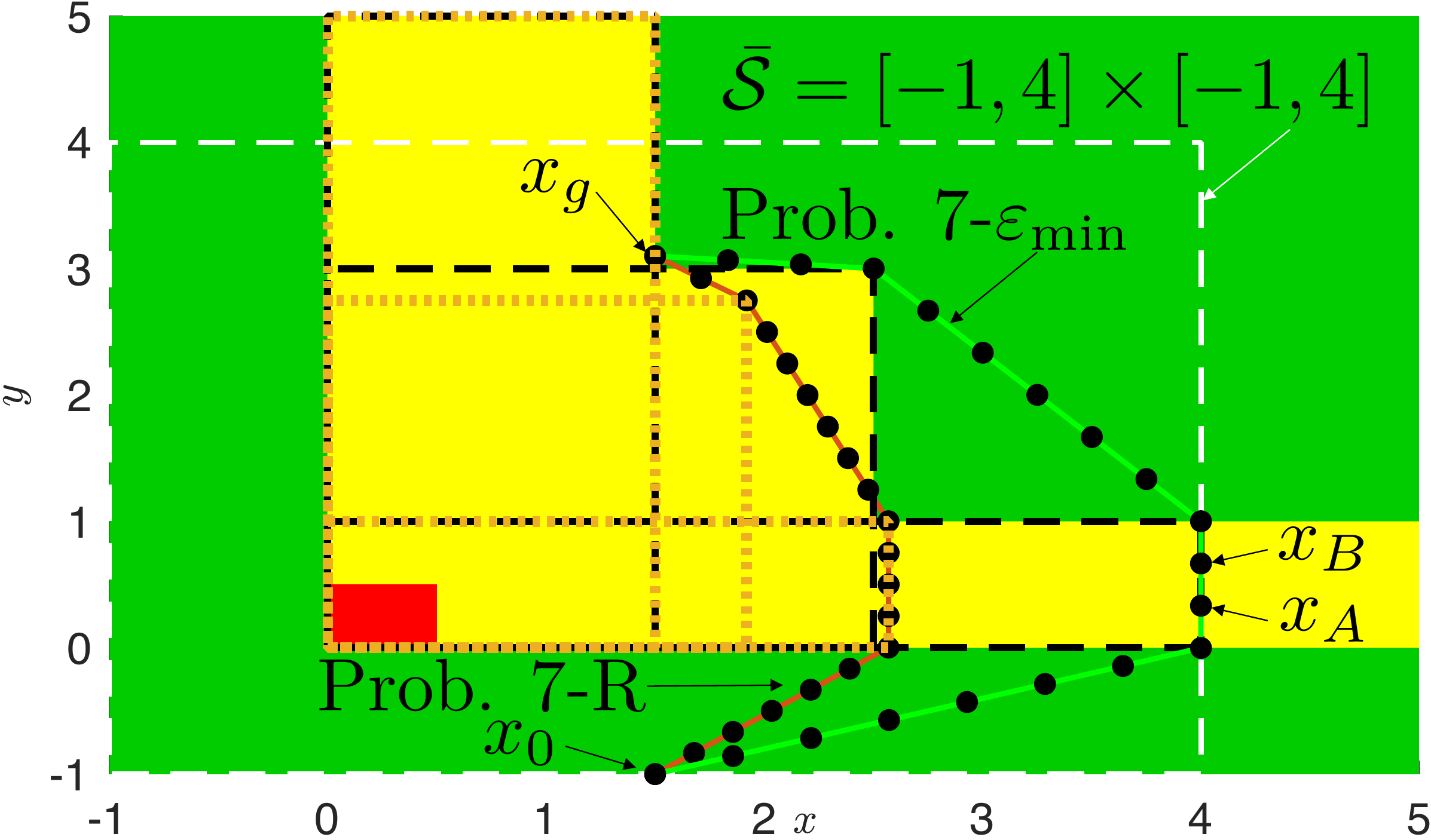}\vspace{-9pt}
        \caption{Generated plans for $\mathcal{B}_i$ in Fig. \ref{fig:extraction}. $\textrm{proj}_\statespace(\mathcal{B}_i)$ are overlaid (with matched color).}
        \label{fig:plan}
\end{figure}\vspace{-30pt}
\end{wrapfigure}

Instead of the chance-constrained formulation, we can directly trade off the cost and the safety probability in the objective; i.e. in Probs. \ref{prob:cc}; \ref{prob:cc_riemann}, change \eqref{eq:cost_cc}; \eqref{eq:cost_riemann} to the \underline{r}atio $c_\task(\trajxu)/\pr(\trajxu\safe)$; $c_\task(\trajxu)/\sum_i t_i^d$ and remove \eqref{eq:prob_safety}; \eqref{eq:cc_norm}. Denote these variants Prob. \ref{prob:cc}-R; Prob. \ref{prob:cc_riemann}-R. Note that after linearizing $\sum_i t_i^d$, the objective is quasi-convex, so we can rewrite Prob. \ref{prob:cc_riemann}-R as a feasibility problem with a new optimization constraint $c_\task(\trajxu) - \alpha \pr(\trajxu\safe) \le 0$, and do a line search on $\alpha$, solving Prob. \ref{prob:cc_riemann}-R as a sequence of MISOCPs.

\vspace{5pt}
\subsection{Planning open-loop trajectories with a \textit{finite set} of sampled possible constraints}\label{sec:plan_open_loop_samp}

For complex constraints arising from nonlinear dynamics or constraint parameterizations, Prob. \ref{prob:cc_riemann} is hard to solve as it involves both integer variables and nonlinearities. While sometimes we can plan for a PWA model that roughly captures the nonlinear dynamics (i.e. modeling a quadrotor as a double integrator), we generally use sampled approximations of Prob. \ref{prob:cc_riemann} when it cannot be written as a mixed integer convex program (MICP), in particular, Minimum Constraint Removal (MCR) \cite{MCR} and the Blindfolded Traveler's Problem (BTP) \cite{BTP}, briefly described here (c.f. App. \ref{app:planning} for details):

\begin{enumerate}[I.]
	\item MCR takes a \textit{finite} set of constraints and incrementally constructs a roadmap to connect a start and goal state while violating the minimum number of constraints. MCR can be used to approximate Prob. \ref{prob:cc_riemann}-$\varepsilon_\textrm{min}$ by sampling a finite set of constraints $\{\theta_i\sim b(\theta)\}_{i=1}^{N_\textrm{sam}}$ as input to MCR.
	\item \vspace{-2pt}BTP is a roadmap-based planner for additive cost functions $c_\task(\trajxu)$ which takes as input a state space graph $(V,E)$, where executing edge $e \in E$ costs $c(e)$ with probability $p(e)$ of being safe. To use BTP, we sample constraints $\{\theta_i\sim b(\theta)\}_{i=1}^{N_\textrm{sam}}$, and use them to approximate $p(e)$; we plan on the graph by running A* with modified edge costs $\sum_e c_\task(e) - \beta p(e \safe)$, for some weight $\beta$.
\end{enumerate}

We note that while the sampled approximations can be more flexible than Prob. \ref{prob:cc_riemann}, they are not guaranteed to return a feasible solution to Prob. \ref{prob:cc}, as it depends on the constraints that are sampled.

\vspace{5pt}
\subsection{Updates to $b(\theta)$ in online execution}\label{sec:plan_feasupdate}
\vspace{5pt}

Our framework can also incorporate uncertain information about the true constraint sensed in execution by computing a belief update. Suppose that we are given $\collset_s \doteq \{\collset_s^i\}_{i=1}^{N_s}$ and $\collset_{\neg s} \doteq \{\collset_{\neg s}^i\}_{i=1}^{N_{\neg s}}$ as a set of \textit{sets of possibly safe/unsafe states}, respectively, where each $\collset_{s}^i$ / $\collset_{\neg s}^i$ denotes a finite set where at least one state is safe/unsafe. Let the set of all constraints consistent with $\collset_s$, $\collset_{\neg s}$ be $\feasthetasun \doteq \{\theta \in \feastheta \mid \bigwedge_{i=1}^{N_s} (\exists \cstate \in \collset_s^i, g(\cstate, \theta) \le 0) \wedge \bigwedge_{i=1}^{N_{\neg s}} (\exists \cstate \in \collset_{\neg s}^i, g(\cstate, \theta) > 0)\}$. We compute $\feasthetasun$ iteratively with a variant of Alg. \ref{alg:extraction} (see App. \ref{app:planning_update} for details). Finally, we perform the update: 

\vspace{-7pt}
\begin{equation}
	b_\textrm{ex}(\theta) \doteq p(\theta\mid \{\demj\}_{j=1}^{N_\dem}, \constraintset_{s}, \constraintset_{\neg s}) = \begin{cases} \frac{p(\theta)}{\int_{\feasthetasun} p(\theta) d\theta} & \text{if }\theta \in \feasthetasun \\ 0 & \textrm{else} \end{cases}
\end{equation}\vspace{5pt}

\begin{wrapfigure}{r}{0.52\linewidth}\vspace{-25pt}
        \centering
        \includegraphics[width=0.6\linewidth]{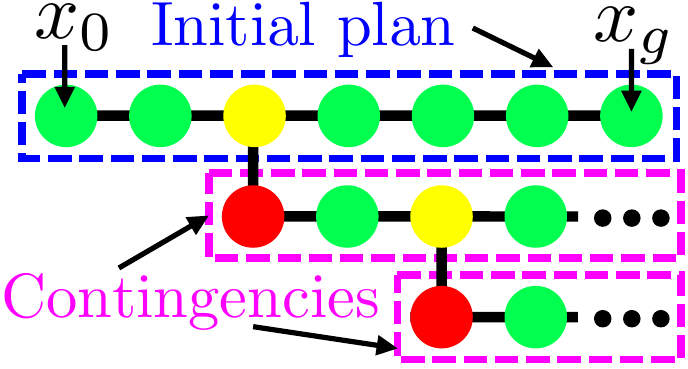}\vspace{-3pt}
        \caption{{Policy tree: initial plan and contingencies, rooted at possibly unsafe states. Green / red / yellow states in $\guarsafe$ / $\guarunsafe$ / $\textrm{proj}_\constraintset(\feastheta)$.}\vspace{-31pt}}
        \label{fig:branching}
\end{wrapfigure}
This setup can handle data from many different constraint sensing modalities, like direct, exact sensing (from a bump sensor), long-range measurements (from bounded-range LiDAR), or uncertain contact measurements \cite{BTP} where collision cannot be exactly localized on the robot volume (see App. \ref{app:planning_update} for more details).

\vspace{5pt}
\subsection{Closed-loop policies for adaptive constraint satisfaction}\label{sec:plan_policy}

Finally, we compute closed-loop policies for Probs. MCV and MEC. Our strategy is simple: for Probs. MCV and Prob. MEC, solve Prob. \ref{prob:cc_riemann}-$\varepsilon_\textrm{min}$ and Prob. \ref{prob:cc_riemann}-R, respectively, in a receding horizon fashion, i.e at each time-step, we update $b(\theta)$ with the new constraint information, re-solve the optimization, and switch to the new solution if the previous plan is suboptimal/unsafe. This policy takes $p(\theta)$ and a sequence of observations $\mathcal{O} = (\collset_s, \collset_{\neg s})$ to estimate a belief, and uses the belief and current state to output $\control$. These strategies are motivated by results in sequential decision making that provide approximation guarantees for greedy policies proposing solutions that minimize the ratio of cost to safety probability at each iteration \cite{Dor}; Probs. \ref{prob:cc_riemann}-$\varepsilon_\textrm{min}$, \ref{prob:cc_riemann}-R both do this.

Note that as our policy executes a plan until sensing implies it is suboptimal or unsafe, upon which it switches, it can be represented as a tree, where contingency plans are rooted only at states on the current plan where switches can occur. See Fig. \ref{fig:branching} for the simple case where the policy only switches upon learning the current plan is unsafe; here, the branching is sparse, occurring only at possibly unsafe points (yellow) on the initial and replanned trajectories. In these cases, we can exploit the sparsity to avoid solving Prob. \ref{prob:cc_riemann} at runtime by precomputing the contingency plans, facilitating real-time policy execution. For tractability, the precomputation assumes no unmodeled obstacles appear at runtime (this eliminates the sparse branching, as it makes each state on the plan possibly unsafe), discretizes the set of possible continuous sensing measurements (if not, we could need to compute contingencies for an infinite set of possible measurements), and terminates branching at a finite tree depth (as planning may be infeasible for a worst-case constraint). If the assumptions do not hold (as in some of our results), we can always compute new plans online, though it can be slow. 

As an example, consider computing contingencies for the green plan in Fig. \ref{fig:plan}. Only $\state_A$ and $\state_B$ lie in the possibly unsafe (yellow) region, so if no unmodeled obstacles appear at runtime, we can only be forced to replan in two cases: 1) $\state_A$ is unsafe, 2) $\state_A$ is safe and $\state_B$ is unsafe. In case 1, we update the belief, keeping only constraints marking $\state_A$ as unsafe, and plan a contingency satisfying as many constraints as possible from the new belief. This repeats recursively, up to a finite recursion depth, for any possibly unsafe states on the contingency. In case 2, updating the belief to mark $\state_A$ as safe and $\state_B$ as unsafe renders the belief empty, since this is impossible given the initial belief and box parameterization. We can thus avoid computing further contingencies on this policy tree branch.

\begin{wrapfigure}{l}{0.45\linewidth}
\vspace{-23pt}\begin{table}[H]
\footnotesize	\begin{tabular}{|l||*{2}{c|}}\hline
\diagbox[width=20mm,outerleftsep=17pt,outerrightsep=-10pt]{\hspace{-27pt}Constraints/prior}{Problem\hspace{-2pt}}
&\makebox[4.2em]{\ Prob. MCV }
&\makebox[4.2em]{\ Prob. MEC }\\\hline\hline
\hspace{-3pt}MICP-representable\hspace{-6pt} &\hspace{-3pt}Prob. \ref{prob:cc_riemann}-$\varepsilon_\textrm{min}$\hspace{-3pt} & \hspace{-3pt}Prob. \ref{prob:cc_riemann}-R\hspace{-3pt}\\\hline
\hspace{-3pt}Not MICP-rep. & MCR & BTP\\\hline
\end{tabular}\vspace{-6pt}\caption{Which open-loop planner to use?}\label{table:planners}
\end{table}\vspace{-20pt}
\end{wrapfigure}

\vspace{-3pt}To recap, we would ideally solve Prob. \ref{prob:cc} to get open-loop plans for our policy, but it is intractable. If all constraints (dynamics, uncertain constraints, etc.) and the integrals of $p(\theta)$ are MICP-representable, we can approximate Prob. \ref{prob:cc} with Prob. \ref{prob:cc_riemann}, which enjoys theoretical guarantees by using the \textit{infinite} belief. If not, we use MCR/BTP, which use \textit{finite samples} from the belief. This is summarized in Table \ref{table:planners}.

\section{Experiments}\label{sec:results}

We show our method scales to safely and efficiently plan for high-dimensional (12D) systems with combined state/control constraint uncertainty, constraint sensing uncertainty, and high-dimensional (30D) constraints. See App. \ref{app:results} for more details and experiments (7-DOF arm planning with suboptimal demonstrations, and nonlinear constraint planning using zonotope-based $\feastheta$ extraction), App. \ref{app:computation_time} for computation time discussion, and \url{https://youtu.be/aWZ_U-gWQJI} for visualizations.

\textbf{Mixed quadrotor uncertainty}: We plan for a quadrotor (dynamics in App. \ref{app:results}) carrying a payload of uncertain weight around uncertain obstacles. We are given one demonstration (Fig. \ref{fig:mixed_quad}.B, pink) to learn a 7D constraint $\theta \in \mathbb{R}^7$ (6 for obstacle, 1 for control). After extracting $\feastheta$ with Alg. \ref{alg:extraction}, (Fig. \ref{fig:mixed_quad}.B), we remain uncertain about the obstacle's $y$-extents. We model the uncertain weight as an unknown control constraint $\Vert u \Vert_2^2 \le \bar U(\theta)$; from the demonstration, we learn that $\Vert u \Vert_2^2 \le 97.85$ is guaranteed safe (Fig. \ref{fig:mixed_quad}.A), and KKT also tells us the constraint is inactive, so $\Vert u \Vert_2^2 \in (97.85, 100] = \textrm{proj}_\controlset(\feastheta)$ is possibly unsafe. The quadrotor has bump and torque sensors to directly detect state and control constraint violations, respectively. We now solve Prob. MCV starting from a lower initial state (Fig. \ref{fig:mixed_quad}.C), which we do by solving Prob. \ref{prob:cc_riemann}-$\varepsilon_\textrm{min}$ for an initial plan and contingencies, directly optimizing over the infinite constraint belief. We use a double-integrator approximation of the quadrotor dynamics and restrict each open-loop trajectory to 30 timesteps; thus, it is not possible to satisfy all constraints in $\feastheta$ while reaching the goal in the time limit. Solving Prob. \ref{prob:cc_riemann}-$\varepsilon_\textrm{min}$ returns Plan 1 (Fig. \ref{fig:mixed_quad}.C), which violates some possible control constraints in order to lift the quadrotor over all possible obstacles. Our policy also generates contingencies (Fig. \ref{fig:mixed_quad}.D-F) in case Plan 1 violates the true control constraint, and we must plan to avoid the possible obstacles. To emphasize the benefit of \textit{optimizing} over the infinite set of possible constraints, we compare to two baselines: a variant of the scenario approach \cite{scenario}, where we iteratively \textit{sample} and enforce $\{\theta_i\sim b(\theta)\}_{i=1}^{N_\textrm{sam}}$ until the planning problem becomes infeasible, and an optimistic planner, which only avoids $\guarunsafe$ and replans upon violating a constraint (see App. \ref{app:planning_baselines} for baseline details). We evaluate the number of violations on constraints uniformly drawn from $\feastheta$. Our policy suffers 0.54 $\pm$ 0.94 constraint violations (average $\pm$ standard deviation), the scenario approach 1.30 $\pm$ 1.36 violations, and the optimistic strategy 9.10 $\pm$ 4.65 violations. We outperform the scenario approach, as we \textit{optimize} over the set of constraints to satisfy, and the optimistic strategy, as it ignores constraint uncertainty. Running Alg. \ref{alg:extraction} and Prob. \ref{prob:cc_riemann} takes 3.3 and 1.1 sec., respectively. See App. \ref{app:results_mixed} for a constraint violation histogram empirically validating our probabilistic safety guarantees. 

\begin{figure}[H]
        \centering
        \includegraphics[width=\linewidth]{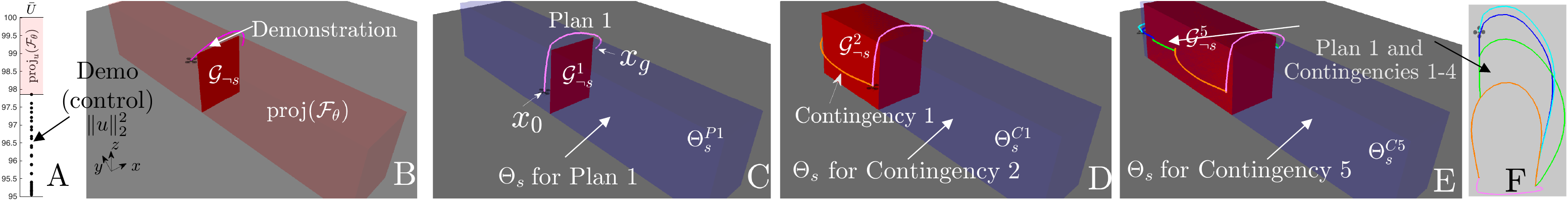}\vspace{-0pt}
        \caption{\small Mixed quadrotor uncertainty example. \textbf{A-B}. Initial control and state constraint uncertainty. \textbf{C}. Initial plan for a new task. \textbf{D-F}. Contingencies are pre-computed, and the system switches if the initial plan is unsafe.\vspace{-0pt}}
        \label{fig:mixed_quad}
\end{figure}

\textbf{7-DOF arm with contact sensing uncertainty}: We plan for a 7-DOF arm (dynamics in App. \ref{app:results}) near a storage rack. We are given two demonstrations (Fig. \ref{fig:arm}.A), and after running Alg. \ref{alg:extraction} to obtain $b_\dem(\theta)$, partly reveal the shelf constraint (Fig. \ref{fig:arm}.B), which has parameters $\theta \in \mathbb{R}^6$. We assume an uncertain contact sensing model \cite{SaundB18}, where contact is assumed to be at any point on the arm downstream on the kinematic chain where a torque limit is exceeded (c.f. App. \ref{app:planning} ). We solve Prob. MEC for $c_\task(\trajxu) = \sum_{t=1}^{T-1} \Vert \state_{t+1} - \state_t \Vert_2$, for the task of moving the arm from below to above the shelf (Fig. \ref{fig:arm}.B), using 100 (uniform) samples from $b_\dem(\theta)$ in BTP. We compare our policy to 1) BTP without demonstrations and the union-of-boxes parameterization, and 2) an optimistic approach that executes the optimal path on the BTP graph after removing unsafe edges, and replanning if a constraint is violated. From the demonstrations, we can determine that a subset of the shelf is guaranteed unsafe (Fig. \ref{fig:arm}, dark red); beyond that, we are uncertain (Fig. \ref{fig:arm}.B). Thus, we plan to avoid that region, swinging the arm around and over the uncertain area. However, in doing so, the arm bumps into an unmodeled obstacle: a box on the lower shelf (Fig. \ref{fig:arm}.C). The contact sensor informs our method that some point on the end effector is in collision; we add 300 sampled points on the end effector to $\collset_{\neg s}$ and samples from the traversed free space to $\collset_{s}$. Our method then automatically determines that it needs to update the constraint parameterization, as geometrically there is no single box that can explain the demonstrations, $\collset_s$, and $\collset_{\neg s}$. After updating $\theta$ to include two boxes (now $\theta \in \mathbb{R}^{12}$), we extract $\feastheta$ for this updated parameterization, and resample 100 (uniform) samples from the updated $b_\textrm{ex}(\theta)$ as input to BTP (see Fig. \ref{fig:arm}.D). As our belief now indicates an uncertain region near the lower shelf obstacle, our policy plans to move further out from the shelf to avoid the uncertain region, and reaches the goal. Running Alg. \ref{alg:extraction} and BTP takes 20 and between 5-20 min. (can be sped up by precomputing arm swept volumes, see \cite{BTP}, App. \ref{app:computation_time}), respectively. Overall, our policy reaches the goal with a cost of 8.19 rad, while the cost without demonstrations/boxes is 18.24 rad, and the optimistic approach is 143.31 rad. Without demonstrations/boxes, we need several more iterations bumping into the shelf before it is sufficiently localized, and the optimistic approach ignores spatial correlation in edge validity, exploring far more edges (c.f. App. \ref{app:results} for details). This example suggests our method scales to high-dimensional systems, can detect when the constraint representation is insufficient, and can use complex constraint measurements.

\begin{figure}[H]
        \centering
        \includegraphics[width=\linewidth]{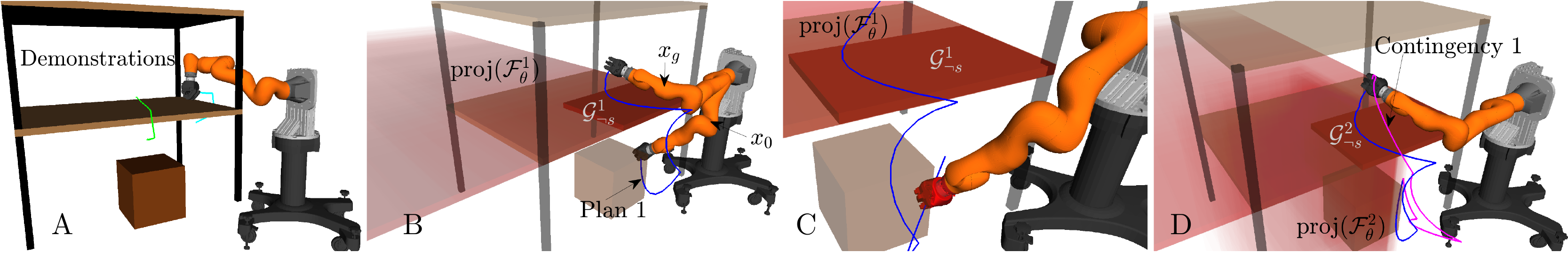}
        \caption{Arm with contact sensing uncertainty. \textbf{A}: Demonstrations. \textbf{B}. Initial constraint uncertainty (red) and plan (blue). \textbf{C}. The initial plan violates an unmodeled constraint, triggering a belief update. \textbf{D}. Replan online.\vspace{-5pt}}
        \label{fig:arm}
\end{figure}

\textbf{Quadrotor maze}: We plan for a quadrotor in clutter with two-meter radius LiDAR sensing (Fig. \ref{fig:quad_maze}). We know the brown obstacles \textit{a priori}, and are given five demonstrations that reveal five obstacles ($\theta \in \mathbb{R}^{30}$) (Fig. \ref{fig:quad_maze}.A), but provide little information about their size. We solve Prob. MEC steering from the bottom to the top of the maze (Fig. \ref{fig:quad_maze}.D) while minimizing $c(\trajxu) = \sum_{t=1}^{T-1} \Vert \control_t \Vert_2^2$ by solving Prob. \ref{prob:cc_riemann}-R, optimizing directly over the continuous $b(\theta)$, and computing contingencies. We also modify Prob. \ref{prob:cc_riemann}-R to never collide in execution by avoiding the set of inevitable collision states \cite{ICS} under the double-integrator model; this is modeled with additional constraints (see App. \ref{app:planning}). We obtain dynamically-feasible quadrotor trajectories by warmstarting the nonlinear optimization with the double-integrator trajectory. We visualize our policy in Fig. \ref{fig:quad_maze}. Running Alg. \ref{alg:extraction} and Prob. \ref{prob:cc_riemann} takes 1 sec. and 1 min., respectively. Plan 1 (pink) intelligently trades off risk and performance. Note there may exist a direct path to the goal between the brown obstacles; however, the orange demonstration induces an obstacle that likely blocks this path. Also, moving left is riskier than moving right, as the uncertain obstacle on the left may create a dead end. Plan 1 avoids both traps, moving to the right and increasing altitude to avoid all possible obstacles induced by the dark blue demonstration. This enables maintenance of higher speed and thus lower cost, instead of cautiously approaching the uncertain region to determine if it is safe to cut through. Finally, Plan 1 cuts through the possibly-unsafe region induced by the green demonstration, as the obstacle is unlikely to extend down to the brown obstacle. We discretize the possible constraint measurements in this region on a grid, planning contingencies if the passage is partially (Contingencies 1-3) or completely blocked (4). We compare to two approaches, \cite{ral}, which plans trajectories which are guaranteed-safe under the constraint parameterization, and \cite{janson}, which plans optimistically over sets of subgoals on the frontier (see App. \ref{app:planning_baselines} for more details). Drawing constraints uniformly from $\feastheta$, our policy solves Prob. MCE with a cost of 1.28 $\pm$ 0.27, while \cite{ral} is conservative, with a cost of 6.29, and \cite{janson} returns a cost of 5.51 $\pm$ 1.65, as it explores the likely dead end between the brown obstacles. This example suggests our method scales to high-dimensional systems and constraint spaces and can compute a policy integrating sensor inputs to adaptively switch between plans and contingencies.

\begin{figure}[!htb]
        \centering
        \includegraphics[width=\linewidth]{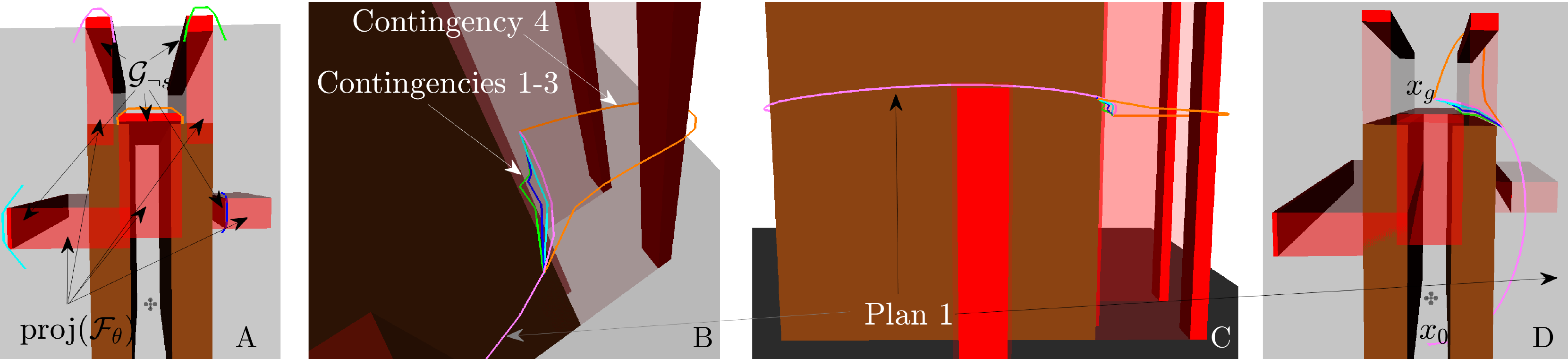}
        \caption{Quadrotor maze. \textbf{A}. Demos., initial constraint uncertainty. \textbf{B-D}. Three views of the initial plan (pink) and contingencies for different sensing possibilities, obtained by gridding the possible sensor measurements.}
        \label{fig:quad_maze}
\end{figure}

\section{Conclusion}

We present a method to address uncertainty in constraints learned from demonstrations. Instead of trying to satisfy all possible constraints, we obtain a belief over constraints, then design open-loop planners which use the belief to ``be as safe as possible while remaining efficient". We use these planners in a closed-loop policy that uses constraint data gathered online to help complete the task. In future work, we aim to speed up our method with parallel extraction and fast integer programming \cite{mip_ms}, and extend our method to adaptively plan with beliefs over temporal logic formulas \cite{rss}.


\acknowledgments{This work was supported in part by a National Defense Science and Engineering Graduate (NDSEG)
Fellowship, Office of Naval Research (ONR) grants N00014-18-1-2501 and N00014-17-1-2050, and
National Science Foundation (NSF) grants ECCS-1553873 and IIS-1750489. We thank Brad Saund for insightful discussions and help on the BTP experiments.}



\bibliography{example}  

\begin{thebibliography}{48}
\providecommand{\natexlab}[1]{#1}
\providecommand{\url}[1]{\texttt{#1}}
\expandafter\ifx\csname urlstyle\endcsname\relax
  \providecommand{\doi}[1]{doi: #1}\else
  \providecommand{\doi}{doi: \begingroup \urlstyle{rm}\Url}\fi

\bibitem[Chou et~al.(2020)Chou, Ozay, and Berenson]{ral}
G.~Chou, N.~Ozay, and D.~Berenson.
\newblock Learning constraints from locally-optimal demonstrations under cost
  function uncertainty.
\newblock \emph{{IEEE} Robotics Autom. Lett.}, 5\penalty0 (2):\penalty0
  3682--3690, 2020.

\bibitem[Chou et~al.(2018)Chou, Berenson, and Ozay]{wafr}
G.~Chou, D.~Berenson, and N.~Ozay.
\newblock Learning constraints from demonstrations.
\newblock \emph{Workshop on the Algorithmic Foundations of Robotics (WAFR)},
  2018.
\newblock URL \url{arxiv.org/abs/1812.07084}.

\bibitem[Menner et~al.(2019)Menner, Worsnop, and Zeilinger]{menner}
M.~Menner, P.~Worsnop, and M.~N. Zeilinger.
\newblock Constrained inverse optimal control with application to a human
  manipulation task.
\newblock \emph{IEEE TCST}, 2019.

\bibitem[Englert et~al.(2017)Englert, Vien, and Toussaint]{toussaint}
P.~Englert, N.~A. Vien, and M.~Toussaint.
\newblock Inverse kkt: Learning cost functions of manipulation tasks from
  demonstrations.
\newblock \emph{IJRR}, 36\penalty0 (13-14):\penalty0 1474--1488, 2017.

\bibitem[Chou et~al.(2019)Chou, Ozay, and Berenson]{corl}
G.~Chou, N.~Ozay, and D.~Berenson.
\newblock Learning parametric constraints in high dimensions from
  demonstrations.
\newblock In \emph{CoRL 2019, Osaka, Japan}, pages 1211--1230, 2019.

\bibitem[Li and Berenson(2016)]{dmitry}
C.~Li and D.~Berenson.
\newblock Learning object orientation constraints and guiding constraints for
  narrow passages from one demonstration.
\newblock In \emph{{ISER}}, 2016.

\bibitem[P{\'{e}}rez{-}D'Arpino and Shah(2017)]{shah}
C.~P{\'{e}}rez{-}D'Arpino and J.~A. Shah.
\newblock {C-LEARN:} learning geometric constraints from demonstrations for
  multi-step manipulation in shared autonomy.
\newblock In \emph{{ICRA}}, 2017.

\bibitem[Calinon and Billard(2008)]{lfdc4}
S.~Calinon and A.~Billard.
\newblock A probabilistic programming by demonstration framework handling
  constraints in joint space and task space.
\newblock In \emph{IROS}, 2008.
\newblock \doi{10.1109/IROS.2008.4650593}.

\bibitem[Pais et~al.(2013)Pais, Umezawa, Nakamura, and Billard]{lfdc1}
A.~L. Pais, K.~Umezawa, Y.~Nakamura, and A.~Billard.
\newblock Learning robot skills through motion segmentation and constraints
  extraction.
\newblock HRI, 2013.

\bibitem[Armesto et~al.(2017)Armesto, Bosga, Ivan, and
  Vijayakumar]{vijayakumar}
L.~Armesto, J.~Bosga, V.~Ivan, and S.~Vijayakumar.
\newblock Efficient learning of constraints and generic null space policies.
\newblock In \emph{{ICRA}}, 2017.

\bibitem[Park et~al.(2019)Park, Noseworthy, Paul, Roy, and Roy]{ParkNPRR19}
D.~Park, M.~Noseworthy, R.~Paul, S.~Roy, and N.~Roy.
\newblock Inferring task goals and constraints using bayesian nonparametric
  inverse reinforcement learning.
\newblock In \emph{CoRL}, pages 1005--1014, 2019.

\bibitem[Bujarbaruah et~al.(2020)Bujarbaruah, Vallon, and Borrelli]{borrelli}
M.~Bujarbaruah, C.~Vallon, and F.~Borrelli.
\newblock Learning to satisfy unknown constraints in iterative {MPC}.
\newblock \emph{CoRR}, abs/2006.05054, 2020.
\newblock URL \url{https://arxiv.org/abs/2006.05054}.

\bibitem[Shah et~al.(2020)Shah, Li, and Shah]{puns}
A.~Shah, S.~Li, and J.~Shah.
\newblock Planning with uncertain specifications (puns).
\newblock \emph{{IEEE} Robotics Autom. Lett.}, 5\penalty0 (2):\penalty0
  3414--3421, 2020.

\bibitem[Ratliff et~al.(2006)Ratliff, Bagnell, and Zinkevich]{irl_1}
N.~D. Ratliff, J.~A. Bagnell, and M.~Zinkevich.
\newblock Maximum margin planning.
\newblock In \emph{ICML}, 2006.

\bibitem[Abbeel and Ng(2004)]{irl_2}
P.~Abbeel and A.~Y. Ng.
\newblock Apprenticeship learning via inverse reinforcement learning.
\newblock In \emph{ICML}, 2004.
\newblock \doi{10.1145/1015330.1015430}.

\bibitem[Ramachandran and Amir(2007)]{birl}
D.~Ramachandran and E.~Amir.
\newblock Bayesian inverse reinforcement learning.
\newblock In \emph{{IJCAI}}, 2007.

\bibitem[Ross et~al.(2011)Ross, Gordon, and Bagnell]{dagger}
S.~Ross, G.~J. Gordon, and D.~Bagnell.
\newblock A reduction of imitation learning and structured prediction to
  no-regret online learning.
\newblock In \emph{{AISTATS}}, pages 627--635, 2011.

\bibitem[Cui et~al.(2019)Cui, Isele, Niekum, and Fujimura]{safe_il1}
Y.~Cui, D.~Isele, S.~Niekum, and K.~Fujimura.
\newblock Uncertainty-aware data aggregation for deep imitation learning.
\newblock In \emph{{ICRA} 2019}, pages 761--767, 2019.

\bibitem[Brown et~al.(2020)Brown, Coleman, Srinivasan, and Niekum]{safe_il2}
D.~S. Brown, R.~Coleman, R.~Srinivasan, and S.~Niekum.
\newblock Safe imitation learning via fast bayesian reward inference from
  preferences.
\newblock \emph{ICML}, 2020.

\bibitem[Menda et~al.(2019)Menda, Driggs{-}Campbell, and
  Kochenderfer]{safe_il3}
K.~Menda, K.~R. Driggs{-}Campbell, and M.~J. Kochenderfer.
\newblock Ensembledagger: {A} bayesian approach to safe imitation learning.
\newblock In \emph{{IROS}}, pages 5041--5048, 2019.

\bibitem[Zhang and Cho(2016)]{safe_il4}
J.~Zhang and K.~Cho.
\newblock Query-efficient imitation learning for end-to-end autonomous driving.
\newblock \emph{CoRR}, abs/1605.06450, 2016.
\newblock URL \url{http://arxiv.org/abs/1605.06450}.

\bibitem[Thakur et~al.(2019)Thakur, van Hoof, Higuera, Precup, and
  Meger]{safe_lfd1}
S.~Thakur, H.~van Hoof, J.~C.~G. Higuera, D.~Precup, and D.~Meger.
\newblock Uncertainty aware learning from demonstrations in multiple contexts
  using bayesian neural networks.
\newblock In \emph{{ICRA}}, 2019.

\bibitem[Choi et~al.()Choi, Lee, Lim, and Oh]{safe_lfd2}
S.~Choi, K.~Lee, S.~Lim, and S.~Oh.
\newblock Uncertainty-aware learning from demonstration using mixture density
  networks with sampling-free variance modeling.
\newblock In \emph{{ICRA} 2018}.

\bibitem[Axelrod et~al.(2017)Axelrod, Kaelbling, and
  Lozano{-}P{\'{e}}rez]{axelrod}
B.~Axelrod, L.~P. Kaelbling, and T.~Lozano{-}P{\'{e}}rez.
\newblock Provably safe robot navigation with obstacle uncertainty.
\newblock In \emph{R:SS XIII}, 2017.

\bibitem[{Vitus} et~al.(2016){Vitus}, {Zhou}, and {Tomlin}]{vitus}
M.~P. {Vitus}, Z.~{Zhou}, and C.~J. {Tomlin}.
\newblock Stochastic control with uncertain parameters via chance constrained
  control.
\newblock \emph{IEEE Transactions on Automatic Control}, 61\penalty0
  (10):\penalty0 2892--2905, 2016.

\bibitem[Grammatico et~al.(2016)Grammatico, Zhang, Margellos, Goulart, and
  Lygeros]{scenario}
S.~Grammatico, X.~Zhang, K.~Margellos, P.~J. Goulart, and J.~Lygeros.
\newblock A scenario approach for non-convex control design.
\newblock \emph{{IEEE} Trans. Autom. Control.}, 61\penalty0 (2):\penalty0
  334--345, 2016.

\bibitem[Janson et~al.(2018)Janson, Hu, and Pavone]{janson}
L.~Janson, T.~Hu, and M.~Pavone.
\newblock Safe motion planning in unknown environments: Optimality benchmarks
  and tractable policies.
\newblock In \emph{R:SS XIV}, 2018.

\bibitem[Richter et~al.(2015)Richter, Vega{-}Brown, and Roy]{richter}
C.~Richter, W.~Vega{-}Brown, and N.~Roy.
\newblock Bayesian learning for safe high-speed navigation in unknown
  environments.
\newblock In \emph{{ISRR}}, pages 325--341, 2015.

\bibitem[Boyd and Vandenberghe(2004)]{cvxbook}
S.~Boyd and L.~Vandenberghe.
\newblock \emph{Convex Optimization}.
\newblock Cambridge University Press, 2004.

\bibitem[Tao(2016)]{tao}
T.~Tao.
\newblock \emph{Analysis II}.
\newblock Springer, 3rd edition, 2016.

\bibitem[Ben{-}Tal et~al.(2009)Ben{-}Tal, Ghaoui, and Nemirovski]{Ben-TalGN09}
A.~Ben{-}Tal, L.~E. Ghaoui, and A.~Nemirovski.
\newblock \emph{Robust Optimization}, volume~28 of \emph{Princeton Series in
  Applied Mathematics}.
\newblock Princeton University Press, 2009.

\bibitem[Alizadeh and Goldfarb(2003)]{SOCP}
F.~Alizadeh and D.~Goldfarb.
\newblock Second-order cone programming.
\newblock \emph{Math. Program.}, 95\penalty0 (1), 2003.

\bibitem[Bagnoli and Bergstrom(2005)]{logconcave}
M.~Bagnoli and T.~Bergstrom.
\newblock Log-concave probability and its applications.
\newblock \emph{Economic Theory}, 26\penalty0 (2):\penalty0 445--469, Aug 2005.

\bibitem[Gurobi~Optimization(2020)]{gurobi}
L.~Gurobi~Optimization.
\newblock Gurobi optimizer reference manual, 2020.

\bibitem[Hauser(2014)]{MCR}
K.~K. Hauser.
\newblock The minimum constraint removal problem with three robotics
  applications.
\newblock \emph{IJRR}, 33\penalty0 (1):\penalty0 5--17, 2014.

\bibitem[Saund et~al.(2019)Saund, Choudhury, Srinivasa, and Berenson]{BTP}
B.~Saund, S.~Choudhury, S.~Srinivasa, and D.~Berenson.
\newblock The blindfolded robot: A bayesian approach to planning with contact
  feedback.
\newblock In \emph{{ISRR}}, 2019.

\bibitem[Dor et~al.(1998)Dor, Greenshtein, and Korach]{Dor}
A.~Dor, E.~Greenshtein, and E.~Korach.
\newblock Optimal and myopic search in a binary random vector.
\newblock \emph{Journal of Applied Probability}, 35\penalty0 (2), 1998.

\bibitem[Saund and Berenson(2018)]{SaundB18}
B.~Saund and D.~Berenson.
\newblock Motion planning for manipulators in unknown environments with contact
  sensing uncertainty.
\newblock In \emph{{ISER}}, pages 461--474, 2018.

\bibitem[Fraichard and Asama(2004)]{ICS}
T.~Fraichard and H.~Asama.
\newblock Inevitable collision states - a step towards safer robots?
\newblock \emph{Adv. Robotics}, 18\penalty0 (10):\penalty0 1001--1024, 2004.

\bibitem[Bertsimas and Stellato(2019)]{mip_ms}
D.~Bertsimas and B.~Stellato.
\newblock Online mixed-integer optimization in milliseconds.
\newblock \emph{CoRR}, abs/1907.02206, 2019.
\newblock URL \url{http://arxiv.org/abs/1907.02206}.

\bibitem[Chou et~al.(2020)Chou, Ozay, and Berenson]{rss}
G.~Chou, N.~Ozay, and D.~Berenson.
\newblock {Explaining Multi-stage Tasks by Learning Temporal Logic Formulas
  from Suboptimal Demonstrations}.
\newblock In \emph{Proceedings of Robotics: Science and Systems}, Corvalis,
  Oregon, USA, July 2020.
\newblock \doi{10.15607/RSS.2020.XVI.097}.

\bibitem[Herceg et~al.(2013)Herceg, Kvasnica, Jones, and Morari]{mpt3}
M.~Herceg, M.~Kvasnica, C.~Jones, and M.~Morari.
\newblock {Multi-Parametric Toolbox 3.0}.
\newblock In \emph{ECC}, 2013.

\bibitem[Bochnak et~al.(1998)Bochnak, Coste, and Roy]{Bochnak}
J.~Bochnak, M.~Coste, and M.-F. Roy.
\newblock \emph{Real Algebraic Geometry}.
\newblock Springer, 1998.

\bibitem[Collins(1975)]{Collins75}
G.~E. Collins.
\newblock Hauptvortrag: Quantifier elimination for real closed fields by
  cylindrical algebraic decomposition.
\newblock In \emph{Automata Theory and Formal Languages}, pages 134--183, 1975.

\bibitem[Magron et~al.(2015)Magron, Henrion, and Lasserre]{MagronHL15}
V.~Magron, D.~Henrion, and J.~Lasserre.
\newblock Semidefinite approximations of projections and polynomial images of
  semialgebraic sets.
\newblock \emph{{SIAM} Journal on Optimization}, 25\penalty0 (4):\penalty0
  2143--2164, 2015.

\bibitem[Beck and Robins(2015)]{Beck2015}
M.~Beck and S.~Robins.
\newblock \emph{Computing the Continuous Discretely: Integer-Point Enumeration
  in Polyhedra}, pages 167--182.
\newblock Springer New York, New York, NY, 2015.

\bibitem[Ben{-}Tal et~al.(2004)Ben{-}Tal, Goryashko, Guslitzer, and
  Nemirovski]{Ben-TalGGN04}
A.~Ben{-}Tal, A.~P. Goryashko, E.~Guslitzer, and A.~Nemirovski.
\newblock Adjustable robust solutions of uncertain linear programs.
\newblock \emph{Math. Program.}, 99\penalty0 (2):\penalty0 351--376, 2004.

\bibitem[Sabatino(2015)]{quad_kth}
F.~Sabatino.
\newblock Quadrotor control: modeling, nonlinear control design, and
  simulation, 2015.

\end{thebibliography}

\appendix
\newpage

\begin{minipage}{\linewidth}
\centering{\huge{\textbf{Appendix}}}
\end{minipage}

\vspace{10pt}

In these appendices, we will first summarize and provide more details on the optimization problems used in our method (Appendix \ref{app:overview_glossary}, discuss various results on the representability of sets of cost function and constraint parameters which are consistent with demonstrations (Appendix \ref{app:shapes}), provide expanded details on our methods for extracting the set of consistent cost function and constraint parameters (Appendix \ref{app:extraction}), provide expanded details on our methods for planning policies for adaptively satisfying uncertain constraint parameters (Appendix \ref{app:planning}), provide proofs for the theoretical results in the main body (Appendix \ref{app:theory}), and provide additional details on our experimental results (Appendix \ref{app:results}).

\section{Optimization problem glossary}\label{app:overview_glossary}

In this appendix, we provide a detailed summary of the optimization problems utilized in our approach.

\textbf{Problem \ref{prob:fwd_prob}}: This is the optimization problem that we assume the demonstrator is solving to local-optimality. This problem involves a potentially task-dependent cost function $c_\task(\trajxu)$, where a task in this paper is simply steering the system state from a start state $\state_0$ to a goal state $\state_g$ while satisfying a set of constraints. This problem also involves a known shared constraint $\bar\phi(\trajxu) \in \bar\safeset$ (which embeds known constraints which that shared across all tasks, such as the system dynamics) as well as a known task-dependent constraint $\phi_\task(\trajxu) \in \safeset_\task$ (which embeds known constraints that are task-dependent, such as the start and goal state constraints). Finally, there is the unknown shared constraint $\phi(\trajxu) \in \safeset(\theta)$ (unknown to the learner, but known to the demonstrator) which is parameterized by unknown parameters $\theta$.

\begin{equation*}\label{eq:fwdprob}
	\begin{array}{>{\displaystyle}c >{\displaystyle}l >{\displaystyle}l}
				&\\[-15pt]
		\underset{\trajxu}{\text{minimize}} & \quad c_\task(\trajxu) &\\
		\text{subject to} & \quad \phi(\trajxu) \in \safeset(\theta) \subseteq \constraintspace & \Leftrightarrow\quad \mathbf{g}_{\neg k}(\trajxu, \theta) \le \mathbf{0}\\
		& \quad \bar\phi(\trajxu) \in \bar\safeset \subseteq \bar\constraintspace, \quad \phi_\task(\trajxu) \in \safeset_\task \subseteq \constraintspace_\task & \Leftrightarrow \quad \mathbf{h}_k(\trajxu) = \mathbf{0},\quad \mathbf{g}_{k}(\trajxu) \le \mathbf{0}
	\end{array}\hspace{-15pt}
\end{equation*}

\textbf{Problem \ref{prob:kkt_opt}}: This is the inverse optimization problem that the learner solves to learn \textit{one possible assignment} of the unknown constraints that are satisfied by the demonstrations. Specifically, the problem searches over the unknown constraint parameters and the Lagrange multipliers which together make the KKT conditions of each demonstration satisfied.

\begin{equation*}\vspace{2pt}\label{eq:fwdprob}
	\hspace{-27pt}\begin{array}{>{\displaystyle}c >{\displaystyle}l >{\displaystyle}l}
				&\\[-18pt]
		\text{find} & \theta, \lag \doteq \{\boldsymbol{\lambda}_{k}^j, \boldsymbol{\lambda}_{\neg k}^j,\boldsymbol{\nu}_{k}^j\}_{j=1}^{\numsafe}\\
		\text{subject to} & \{\textrm{KKT}(\traj_{j}^\textrm{loc})\}_{j=1}^{\numsafe}\\[-2pt]
	\end{array}\hspace{-20pt}
\end{equation*}

\textbf{Problem \ref{prob:kkt_opt_robust}}: This problem returns the set of all consistent constraint parameters $\feastheta$. Intuitively, this problem finds the largest set $\feasrobust$, in the sense of set containment, of constraint parameters $\theta$, such that the KKT conditions can be made to hold for those parameters. The problem is intractable in its most general form, motivating simpler variants Problem \ref{prob:kkt_opt_robust_boxes} and Problem \ref{prob:kkt_opt_robust_zonotopes}.
\begin{equation*}\vspace{2pt}\label{eq:fwdprob}
	\hspace{-27pt}\begin{array}{>{\displaystyle}c >{\displaystyle}l >{\displaystyle}l}
				&\\[-17pt]
		\underset{\feasrobust}{\text{sup}} & \textrm{Vol}(\feasrobust) \\[-4pt]
		\text{s.t.} & \forall \theta \in \hat\feas_\theta,\ \ \exists \{\boldsymbol{\lambda}_{k}^j, \boldsymbol{\lambda}_{\neg k}^j,\boldsymbol{\nu}_{k}^j \mid \textrm{KKT}(\traj_{j}^\textrm{loc})\}_{j=1}^{\numsafe}\\[-4pt]
	\end{array}\hspace{-20pt}
\end{equation*}

\textbf{Problem \ref{prob:kkt_opt_robust_boxes}}: This problem returns the largest axis-aligned hyper-rectangle contained within $\feastheta$; this problem is a restricted, tractable version of Problem \ref{prob:kkt_opt_robust}.

\begin{equation*}\vspace{2pt}\label{eq:fwdprob}
	\hspace{-33pt}\begin{array}{>{\displaystyle}c >{\displaystyle}l >{\displaystyle}l}
				&\\[-13pt]
		\underset{s, \theta, \lag }{\text{maximize}} & \big(\textstyle\prod_i s_i\big)^{1/d} \\
		\text{subject to} & \{\textrm{KKT}_\textrm{rob}^\textrm{box}(\traj_{j}^\textrm{loc})\}_{j=1}^{\numsafe}\\[1pt]
	\end{array}\hspace{-20pt}
\end{equation*}

\textbf{Problem \ref{prob:cc}}: This problem solves a chance-constrained trajectory optimization problem, minimizing a possibly task-dependent objective $c_\task(\trajxu)$ while ensuring that the resulting trajectory satisfies the uncertain constraint with prescribed probability $1-\varepsilon$. We further consider two specific variants, \textbf{Problem \ref{prob:cc}-$\varepsilon_\textrm{min}$}, which seeks to solve Problem \ref{prob:cc} to be as safe as possible, i.e with the smallest $\varepsilon$ for which there exists a feasible solution, and \textbf{Problem \ref{prob:cc}-R}, which directly trades off performance and safety with a modified objective $c_\task(\trajxu)/\pr(\trajxu\safe)$. Problem \ref{prob:cc} and its variants are in their most general form intractable, motivating the simpler variant Problem \ref{prob:cc_riemann}.

\textbf{Problem \ref{prob:cc}:}
\begin{subequations}
	\begin{align}
    \min_{\trajxu }\quad & c_{\task}(\trajxu)\label{eq:cc1} \\
    \textrm{s.t.}\quad  & \bar\phi(\trajxu) \in \bar\safeset \subseteq \bar\constraintspace\label{eq:cc2} \\
	& \phi_{\task}(\trajxu) \in \safeset_{\task} \subseteq \constraintspace_{\task}\label{eq:cc3} \\
    & \textrm{Pr}(\trajxu \textrm{ safe}) \ge 1-\varepsilon\label{eq:cc4}
\end{align}
\end{subequations}

\textbf{Problem \ref{prob:cc}-$\varepsilon_\textrm{min}$:}
\begin{subequations}
	\begin{align*}
    \min_{\trajxu }\min_\varepsilon\quad & c_{\task}(\trajxu) \\
    \textrm{s.t.}\quad  & \bar\phi(\trajxu) \in \bar\safeset \subseteq \bar\constraintspace\\
	& \phi_{\task}(\trajxu) \in \safeset_{\task} \subseteq \constraintspace_{\task} \\
    & \textrm{Pr}(\trajxu \textrm{ safe}) \ge 1-\varepsilon
\end{align*}
\end{subequations}

\textbf{Problem \ref{prob:cc}-R:}
\begin{subequations}
	\begin{align*}
    \min_{\trajxu }\quad & c_{\task}(\trajxu)/\textrm{Pr}(\trajxu \textrm{ safe}) \\
    \textrm{s.t.}\quad  & \bar\phi(\trajxu) \in \bar\safeset \subseteq \bar\constraintspace\\
	& \phi_{\task}(\trajxu) \in \safeset_{\task} \subseteq \constraintspace_{\task}
\end{align*}
\end{subequations}

\textbf{Problem \ref{prob:cc_riemann}}: This is a simplified variant of Problem \ref{prob:cc}, which makes the probability constraint tractable by restricting the integration of probability mass over a fixed number of axis-aligned boxes, where the location and extents of the boxes are also optimized over. We consider two specific variants, \textbf{Problem \ref{prob:cc_riemann}-$\varepsilon_\textrm{min}$} and \textbf{Problem \ref{prob:cc_riemann}-R}, which are as described for Problem \ref{prob:cc}.

\begin{subequations}
	\begin{align}
    \hspace{-5pt}\min_{\trajxu, \mathcal{B}_i, t_i }\ & c_\task(\trajxu)\label{eq:riemann1}\\[-4pt]
    \textrm{s.t.}\quad\  & \bar\phi(\trajxu) \in \bar\safeset \subseteq \bar\constraintspace,\ \phi_\task(\trajxu) \in \safeset_\task \subseteq \constraintspace_\task\label{eq:riemann2} \\
	& \trajxu \in \safeset(\theta),\ \forall \theta \in \mathcal{B}_1, \ldots, \mathcal{B}_{N_\textrm{box}}\label{eq:riemann3} \\
    & \mathcal{B}_i \cap \mathcal{B}_j = \emptyset, \ i \ne j,\quad \mathcal{B}_i \subseteq \feastheta, \forall i\label{eq:riemann4} \\
    & 0 \le t_i \le (\textstyle\prod_i b_i^\textrm{scale})^{1/d}, \ i = 1,..., N_\textrm{box}\label{eq:riemann5}\\
    & \textstyle\sum_i t_i^{d} \ge (1-\varepsilon)\textrm{Vol}(\feastheta)\label{eq:riemann6}
\end{align}
\end{subequations}

We provide an overview of the constraints of Problem \ref{prob:cc_riemann}. First, note that \eqref{eq:riemann1}-\eqref{eq:riemann2} exactly correspond to \eqref{eq:cc1}-\eqref{eq:cc3}. The remaining constraints in Problem \ref{prob:cc_riemann} implement the box-limited integration. Specifically, \eqref{eq:riemann3} enforces that the planned trajectory $\trajxu$ is safe with respect to all $\theta$ belonging to $\mathcal{B}_1,\ldots,\mathcal{B}_{N_\textrm{box}}$. Recall that each $\mathcal{B}_i$ is meant to represent a box contained in $\feastheta$, and that $\trajxu$ is to be safe with respect to all $\theta$ belonging to this $\mathcal{B}_i$. Thus, \eqref{eq:riemann4} further enforces that each $\mathcal{B}_i$ is contained in $\feastheta$, and furthermore, that the $\mathcal{B}_i$ are disjoint; this is to avoid any double-counting of box volumes (and thus probability mass). Next, \eqref{eq:riemann5} introduces the variables $t_i$, from which the volume of the corresponding box can be recovered: note that the volume of $\mathcal{B}_i$ is $\prod_i b_i^\textrm{scale}$. The last constraint, \eqref{eq:riemann6}, does exactly this: $t_i^d$ is exactly the volume of $\mathcal{B}_i$, so $\sum_i t_i^d = \sum_i \textrm{Vol}(\mathcal{B}_i)$, which we ensure is at least $(1-\varepsilon)\textrm{Vol}(\feastheta)$ to satisfy the probability constraint.

\textbf{Problem \ref{prob:kkt_opt_robust_zonotopes}}: In Appendix \ref{app:extraction_zon}, we will discuss a modification of Algorithm \ref{alg:extraction} which makes it more efficient for constraint parameterizations other than the union-of-boxes parameterization assumed in Sec. \ref{sec:extraction}. This modification hinges upon Problem \ref{prob:kkt_opt_robust_zonotopes}, which returns a zonotope contained within $\feastheta$ of approximately maximum volume; this problem is a restricted, tractable version of Problem \ref{prob:kkt_opt_robust} which is more general than Problem \ref{prob:kkt_opt_robust_boxes}.

\begin{equation*}\vspace{2pt}
	\hspace{-27pt}\begin{array}{>{\displaystyle}c >{\displaystyle}l >{\displaystyle}l}
				&\\[-19pt]
		\underset{s, \theta, \boldsymbol{\lambda}_{k}^j, \boldsymbol{\lambda}_{\neg k}^j,\boldsymbol{\nu}_{k}^j, Q_i}{\text{maximize}} & \textstyle\sum_{i=1}^{N_\textrm{gen}} \Vert \ell_i \Vert_1 \\[-2pt]
		\text{subject to} & \{\textrm{KKT}_\textrm{rob}^\textrm{zon}(\traj_{j}^\textrm{loc})\}_{j=1}^{\numsafe}, \quad |\ell_m^\top \ell_n| \le \delta,\ \forall m \ne n
	\end{array}\hspace{-20pt}
\end{equation*}

\section{A geometric analysis of constrained inverse optimal control}\label{app:shapes}

We describe how the constraint learning problem can be extended to also learn unknown cost function parameters $\thetac$ (Appendix \ref{app:costfn}), the shape of the resulting feasible sets for unknown cost function parameters for various parameterizations (Appendix \ref{app:costfnshape}), and the shape of the resulting feasible sets for consistent constraint parameters (Appendix \ref{app:constraintsshape}), for various parameterizations.

\subsection{Modifying Problem \ref{prob:kkt_opt} to handle unknown cost function parameters}\label{app:costfn}

As in \cite{ral}, we note that the KKT conditions in Problem \ref{prob:kkt_opt} can be modified to handle unknown cost function parameters, where the cost function can be written as $c_\task(\trajxu, \thetac)$ for unknown cost function parameters $\thetac\in\Thetac$, with few changes: the only KKT condition that changes is stationarity \eqref{eq:kkt_stat}, where the term involving the gradient of the cost now also involves the unknown cost function parameters $\thetac$:
\begin{equation}
	\nabla_{\trajxu} c_\task(\traj_{j}^\textrm{loc}, \textcolor{red}{\thetac}) + \textcolor{blue}{\boldsymbol{\lambda}_{k}^{j}}^\top \nabla_{\trajxu} \mathbf{g}_{k}(\traj_{j}^\textrm{loc})+\textcolor{blue}{\boldsymbol{\lambda}_{\neg k}^j}^{\hspace{-4pt}\top} \nabla_{\trajxu} \mathbf{g}_{\neg k}(\traj_{j}^\textrm{loc}, \textcolor{red}{\theta})+\textcolor{blue}{\boldsymbol{\nu}_{k}^j}^\top \nabla_{\trajxu} \mathbf{h}_{k}(\traj_{j}^\textrm{loc}) = \mathbf{0}
	\label{eq:kkt_stat_cost}
\end{equation}

\subsection{Unknown cost function, known constraint}\label{app:costfnshape}

Let us denote the feasible set of Problem \ref{prob:kkt_opt}, modified to handle unknown cost function parameters, again be $\feas$, and let us denote the projection of $\feas$ onto $\Thetac$ as $\feasgamma$:
\begin{equation}
	\feas_\thetac \doteq \{ \thetac \mid \exists (\theta,\lag): (\theta,\thetac,\lag) \in \feas \}
\end{equation}

In the following, we will analyze the shape of $\feasgamma$ for various parameterizations the unknown cost function. In this subsection, we will assume that the constraint parameters $\theta$ are known, and focus only on analyzing the case of unknown $\thetac$.

\subsubsection{Linear cost function parameterization}
Consider the case where the cost function $c(\trajxu, \thetac)$ is linear in the unknown parameters $\thetac$, i.e. $c(\trajxu, \thetac) = \sum_{i=1}^{|\thetac|} \thetac_i c_i(\trajxu) \doteq \thetac^\top c(\trajxu)$, and the constraints are fully known. The following result shows that in this setting, the set of cost function parameters consistent with the KKT conditions \eqref{eq:kkt} is convex and closed under nonnegative scaling:
\begin{theorem}[Geometry of $\feas_\thetac$ (linear in $\thetac$, known $\theta$)]\label{thm:cone}
	If the cost function takes the form $c(\trajxu, \thetac) = \sum_{i=1}^{|\thetac|} \thetac_i c_i(\trajxu)$, then $\feas_\thetac = \Thetac \cap \mathcal{C}$, where $\mathcal{C} \in \mathbb{R}^{|\thetac|}$ is a convex cone in $\thetac$-space.
\end{theorem}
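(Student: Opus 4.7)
The plan is to show that the set of pairs $(\thetac, \lag)$ satisfying every KKT condition, with $\theta$ held fixed, forms a convex cone in $(\thetac, \lag)$-space, and then to project onto the $\thetac$ coordinate and intersect with $\Thetac$. First I would decompose the KKT system by dependence. With $\theta$ treated as a known constant, primal feasibility \eqref{eq:kkt_primal3} involves neither $\thetac$ nor $\lag$, so it is a precondition on the demonstrations themselves: either it holds (and is vacuous) or it fails (and $\feas_\thetac = \emptyset$, trivially a cone). Complementary slackness \eqref{eq:kkt_comp1}--\eqref{eq:kkt_comp2}, together with the nonnegativity $\boldsymbol{\lambda}_k^j, \boldsymbol{\lambda}_{\neg k}^j \ge \mathbf{0}$, pins $\lambda_i = 0$ for every coordinate whose underlying inequality is strictly slack at $\demj$ and leaves the remaining coordinates free in the nonnegative orthant, so the admissible multipliers form a face of that orthant --- a pointed convex cone.

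Next I would substitute the linear cost ansatz $c_\task(\trajxu,\thetac) = \thetac^\top c(\trajxu)$ into the modified stationarity equation \eqref{eq:kkt_stat_cost}, which for each demonstration $j$ reads
\[
\textstyle\sum_{i} \thetac_i \nabla_{\trajxu} c_i(\demj) + (\boldsymbol{\lambda}_k^j)^\top \nabla_{\trajxu}\mathbf{g}_k(\demj) + (\boldsymbol{\lambda}_{\neg k}^j)^\top \nabla_{\trajxu}\mathbf{g}_{\neg k}(\demj,\theta) + (\boldsymbol{\nu}_k^j)^\top \nabla_{\trajxu}\mathbf{h}_k(\demj) = \mathbf{0}.
\]
Because $\theta$ is fixed, every gradient appearing here is a constant vector, so this is a homogeneous linear equation in the joint variable $(\thetac, \lag)$, and its solution set is therefore a linear subspace through the origin, which is in particular a convex cone.

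Combining the pieces, the joint feasible set in $(\thetac, \lag)$-space is the intersection of finitely many linear subspaces (one stationarity equation per demonstration) with the face of the nonnegative orthant carved out by complementary slackness --- an intersection of convex cones, hence a convex cone. Then $\feas_\thetac$ is the image of this cone under the coordinate projection $(\thetac, \lag) \mapsto \thetac$, intersected with $\Thetac$; since linear maps send convex cones to convex cones, that image is a convex cone $\mathcal{C} \subseteq \mathbb{R}^{|\thetac|}$, yielding $\feas_\thetac = \Thetac \cap \mathcal{C}$ as claimed. There is no serious obstacle once the KKT system is split this way; the one point that warrants a moment of care is checking that complementary slackness really does reduce, with $\theta$ fixed, to pinning certain multipliers to zero and leaving the rest free in the orthant --- i.e.\ that no nonlinear coupling between $\thetac$ and $\lag$ sneaks in --- which follows immediately from the elementwise-product structure of \eqref{eq:kkt_comp1}--\eqref{eq:kkt_comp2}.
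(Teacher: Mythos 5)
Your proof is correct and is essentially the paper's argument in lift-and-project form: the paper takes two feasible $\thetac_1,\thetac_2$ and exhibits the conic combination of their multiplier witnesses, which is precisely the verification that your lifted feasible set in $(\thetac,\lag)$-space is a convex cone whose image under the projection onto $\thetac$ is again a cone. The substantive content in both versions is the same pair of observations --- stationarity is homogeneous linear in $(\thetac,\lag)$ once $\theta$ is fixed, and complementary slackness carves out a face of the nonnegative orthant --- with your packaging having the minor added benefit of making explicit that the lifted set is polyhedral, which the paper invokes immediately after the theorem to justify computing $\feas_\thetac$ by polyhedral projection.
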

\begin{proof}
	From \cite{cvxbook}, a set $\mathcal{C}$ is a convex cone if for all $\thetac_1$, $\thetac_2$ in $\mathcal{C}$, $\alpha_1\thetac_1 + \alpha_2\thetac_2 \in \mathcal{C}$, for all nonnegative scalars $\alpha_1$, $\alpha_2$ $\ge 0$. For now, assume that other than the KKT constraints, $\thetac$ is unconstrained, i.e. $\Thetac = \mathbb{R}^{|\thetac|}$. Suppose we have $\thetac_1 \in \feas_\thetac$ and $\thetac_2 \in \feas_\thetac$. In \eqref{eq:kkt}, as the constraint parameters $\theta$ are known, we drop \eqref{eq:kkt_primal3}, merge \eqref{eq:kkt_comp1}-\eqref{eq:kkt_comp2}, and absorb $\boldsymbol{\lambda}_{\neg k}^{j\top} \nabla_{\trajxu} \mathbf{g}_{\neg k}(\traj_{j}^\textrm{loc}, \theta)$ into the previous term. Then, if $\thetac_i \in \feas_\thetac$ for $i\in\{1,2\}$, we know that $\boldsymbol{\lambda}_{k,i}^j \ge \mathbf{0}$, $\boldsymbol{\lambda}_{k,i}^j \odot \mathbf{g}_{k}(\demj) = \mathbf{0}$, and $\gamma_i^\top \nabla_{\trajxu} c(\traj_{j}^\textrm{loc}) + \boldsymbol{\lambda}_{k,i}^{j\top} \nabla_{\trajxu} \mathbf{g}_{k}(\traj_{j}^\textrm{loc})+\boldsymbol{\nu}_{k,i}^{j\top} \nabla_{\trajxu} \mathbf{h}_{k}(\traj_{j}^\textrm{loc}) = \mathbf{0}$, for $i\in\{1,2\}$. Then if $\thetac = \alpha_1\thetac_1 + \alpha_2\thetac_2$, for nonnegative scalars $\alpha_1$ and $\alpha_2$, we can select $\boldsymbol{\lambda}_k^j = \alpha_1 \boldsymbol{\lambda}_{k,1}^j + \alpha_2 \boldsymbol{\lambda}_{k,2}^j$ and $\boldsymbol{\nu}_k^j = \alpha_1 \boldsymbol{\nu}_{k,1}^j + \alpha_2 \boldsymbol{\nu}_{k,2}^j$ to satisfy \eqref{eq:kkt_stat}; it can also be verified algebraically that this choice of $\boldsymbol{\lambda}_k^j$ satisfies the nonnegativity and complementary slackness constraints. This implies the conic hull $\mathcal{C}$ of any feasible $\thetac$ is feasible for Problem \ref{prob:kkt_opt}. Finally, if $\Thetac \subset \mathbb{R}^{|\gamma|}$, we can write $\feas_\thetac$ as the intersection of $\Thetac$ and the previously constructed cone $\mathcal{C}$.
\end{proof}
Furthermore, as Problem \ref{prob:kkt_opt} simplifies to a linear program when $\theta$ is known, $\feas$ is a polytope, and thus $\feas_\thetac$ can be directly computed via a polytopic projection of $\feas$ onto its $\thetac$ coordinates \cite{mpt3}.

\subsubsection{Nonlinear cost function parameterization}
For general nonlinear parameterizations of $\thetac$, the set of $\thetac$ which satisfy \eqref{eq:kkt} is much more challenging to represent explicitly, as checking if a $\thetac$ satisfies \eqref{eq:kkt} will involve satisfying a set of nonlinear, non-convex equality constraints \eqref{eq:kkt_stat}. However, if the parameterization is a polynomial function of $\thetac$, i.e. $c(\trajxu, \gamma)$ is a polynomial in $\gamma$ for fixed $\trajxu$, $\feas_\thetac$ can be represented as a semi-algebraic set; that is, a set described by a finite union of intersections of polynomial inequalities:
\begin{theorem}
	For cost functions which are polynomial in $\thetac$ for fixed $\trajxu$, $\feas_\thetac = \Thetac \cap \mathcal{P}$, where $\mathcal{P} \in \mathbb{R}^{|\thetac|}$ is a semi-algebraic set in $\thetac$-space. 
\end{theorem}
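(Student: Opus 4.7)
The plan is to verify that when $c_\task(\trajxu, \thetac)$ is polynomial in $\thetac$ for fixed $\trajxu$, each of the KKT conditions \eqref{eq:kkt_primal3}--\eqref{eq:kkt_comp2} together with the modified stationarity \eqref{eq:kkt_stat_cost} becomes a polynomial (in)equality in the joint variable $(\thetac, \lag)$, and then invoke Tarski--Seidenberg to push the resulting semi-algebraicness through the projection onto $\thetac$-space.

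First I would handle the constraints that do not involve $\thetac$ at all. Since $\theta$ is known, \eqref{eq:kkt_primal3} simply becomes a numerical condition $\mathbf{g}_{\neg k}(\demj,\theta)\le \mathbf{0}$ (either satisfied or not, and if satisfied it contributes no nontrivial cut on $(\thetac,\lag)$). The complementary slackness and sign conditions in \eqref{eq:kkt_comp1}--\eqref{eq:kkt_comp2} are linear in $\lag$ with coefficients that are constants (once $\theta$ and $\demj$ are fixed), hence polynomial. Next I would examine the stationarity condition \eqref{eq:kkt_stat_cost}. Because $c_\task(\trajxu,\thetac)$ is polynomial in $\thetac$ for fixed $\trajxu$, and because differentiating with respect to $\trajxu$ acts only on the $\trajxu$-dependence, the vector $\nabla_{\trajxu} c_\task(\demj, \thetac)$ is still polynomial in $\thetac$ (evaluated at the fixed point $\demj$). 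The remaining terms of \eqref{eq:kkt_stat_cost} are linear in $\lag$ with constant coefficients. Consequently, \eqref{eq:kkt_stat_cost} is a polynomial equation in $(\thetac,\lag)$, and aggregating over all $j=1,\ldots,\numsafe$ yields a finite system of polynomial (in)equalities in the joint variable.

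Therefore $\feas$ is a semi-algebraic subset of $\Thetac \times \mathbb{R}^{|\lag|}$, being a finite conjunction of polynomial inequalities and equations. I would then invoke the Tarski--Seidenberg theorem (equivalently, the fact that the class of semi-algebraic sets is closed under linear projection, or more generally under quantifier elimination over the reals): the projection of a semi-algebraic set onto a subset of its coordinates is again semi-algebraic. Applying this to the projection $\{(\thetac,\lag)\mapsto \thetac\}$ gives that
\[
	\feas_\thetac = \{ \thetac \mid \exists\,\lag:\ (\thetac,\lag)\in\feas\}
\]
is a semi-algebraic set $\mathcal{P}\subseteq\mathbb{R}^{|\thetac|}$. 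Intersecting with any prior semi-algebraic description of $\Thetac$ preserves semi-algebraicness, giving $\feas_\thetac = \Thetac\cap\mathcal{P}$ as claimed.

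The only real subtlety is the existential quantifier over $\lag$ in the definition of the projection; without Tarski--Seidenberg it is not at all obvious that eliminating $\lag$ produces a description as a finite union of intersections of polynomial inequalities, because the Lagrange multipliers can enter nonlinearly through their products with the polynomial-in-$\thetac$ gradient terms in \eqref{eq:kkt_stat_cost}. The main obstacle is thus conceptual rather than computational: one must justify that the quantifier elimination step yields a semi-algebraic description, which is precisely the content of the Tarski--Seidenberg theorem and does not rely on any special structure beyond the polynomial nature of the defining (in)equalities. No explicit construction of $\mathcal{P}$ is needed for the statement, though one could in principle obtain one via cylindrical algebraic decomposition.
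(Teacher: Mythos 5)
Your argument is correct and follows essentially the same route as the paper: you observe that the KKT conditions define a (basic) semi-algebraic set in the joint $(\thetac,\lag)$ variables once $\trajxu$ and $\theta$ are fixed, and then invoke the Tarski--Seidenberg theorem to conclude that the projection onto the $\thetac$ coordinates remains semi-algebraic. Your write-up is somewhat more explicit than the paper's about which KKT conditions actually involve $\thetac$ and about why the existential quantifier over $\lag$ is the crux, but the key idea and the key tool are identical.
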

\begin{proof}
	In this setting, $\boldsymbol{\lambda}_{k}^j \ge \mathbf{0}$, $\boldsymbol{\lambda}_{k}^j \odot \mathbf{g}_{k}(\demj) = \mathbf{0}$, and $\nabla_{\trajxu} c(\traj_{j}^\textrm{loc}, \thetac) + \boldsymbol{\lambda}_{k}^{j\top} \nabla_{\trajxu} \mathbf{g}_{k}(\traj_{j}^\textrm{loc})+\boldsymbol{\nu}_{k}^{j\top} \nabla_{\trajxu} \mathbf{h}_{k}(\traj_{j}^\textrm{loc}) = \mathbf{0}$ define an intersection of polynomial inequalities in $\thetac$, that is, a basic semi-algebraic set. $\feas_\thetac$ is then the projection of this set onto the $\thetac$ coordinates; however, the projection of a basic semi-algebraic set may not be basic semi-algebraic in general; they are guaranteed to be semi-algebraic via the Tarski-Seidenberg theorem \cite{Bochnak}.
\end{proof}

While explicitly representing a semi-algebraic set can be expensive, there exist well-established methods for doing so, including exact methods like cylindrical algebraic decomposition \cite{Collins75} and approximate methods involving semidefinite relaxations \cite{MagronHL15}.

\subsection{Unknown constraints}\label{app:constraintsshape}

In this section, we discuss details on the shape of $\feastheta$ for unions of offset-parameterized constraints (Sec. \ref{sec:unionsofoffset}) and for unions of affine and higher-order parameterized constraints (Sec. \ref{sec:unionsofaffine}).

\subsubsection{Unions of offset-parameterized constraints}\label{sec:unionsofoffset}

\textbf{Coordinate-independent parameterization}: 

We first analyze the case where the unknown constraint can be described as a union of offset-parameterized constraints:

\begin{theorem}\label{thm:unions_of_boxes}
	Consider the case when the unknown constraint can be described as a union of intersection of inequalities which are offset-parameterized, i.e.:
\begin{equation}\label{eq:unionsofintersections}
	\safeset(\theta) = \Bigg\{\cstate \in \constraintspace \mid \bigvee_{m=1}^{N_\ineq} \bigwedge_{n=1}^{N_\ineq^i} \Big( g_{mn}(\cstate) \le \theta_{mn} \Big) \Bigg\}
\end{equation}

Then, the corresponding $\feastheta$ can be described as a union of boxes as well:
\begin{equation}\label{eq:F_unionsofboxes}
	\feastheta = \Bigg\{\theta \in \Theta \mid \bigcup_{m=1}^{N_\tbox} [I_{d\times d}, -I_{d\times d}]^\top \theta \le s_m \Bigg\}
\end{equation}
\end{theorem}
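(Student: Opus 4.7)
The plan is to exploit the fact that under the parameterization \eqref{eq:unionsofintersections}, each inequality $g_{mn}(\cstate) \le \theta_{mn}$ involves only the single coordinate $\theta_{mn}$, so every elementary condition on $\theta$ that the KKT system of Problem \ref{prob:kkt_opt} imposes is an axis-aligned half-space. I will then show that, after enumerating a finite set of combinatorial ``witnesses'' for each demonstration, the primal-feasibility disjunction collapses into a conjunction of such half-spaces, and hence $\feastheta$ is the finite union of axis-aligned boxes asserted in \eqref{eq:F_unionsofboxes}.

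First, I would reduce the outer disjunction. For each demonstration $\demj$, primal feasibility reads $\bigvee_{m=1}^{N_\ineq} \bigwedge_{n=1}^{N_\ineq^i} (g_{mn}(\demj) \le \theta_{mn})$; introducing a branch assignment $\mathbf{m} = (m_1, \ldots, m_{\numsafe}) \in \{1,\ldots,N_\ineq\}^{\numsafe}$ collapses each demonstration's feasibility into a conjunction of coordinatewise lower bounds $\theta_{m_j n} \ge g_{m_j n}(\demj)$, which I would aggregate across demonstrations sharing an index $(m,n)$ into a single lower bound on $\theta_{mn}$. Next, I would dispose of complementary slackness and stationarity. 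Because the parameterization is a pure offset, $\nabla_{\trajxu} g_{mn}(\demj, \theta)$ does not depend on $\theta$, so stationarity \eqref{eq:kkt_stat} is a condition purely on $\lag$ that can be checked independently of $\theta$. For each active-set pattern $\mathcal{A}$ over $(j,m,n)$ triples, I would impose $\theta_{mn} = g_{mn}(\demj)$ (two opposing axis-aligned half-spaces) when $(j,m,n) \in \mathcal{A}$ and set the corresponding multiplier to zero otherwise, retaining only those pairs $(\mathbf{m}, \mathcal{A})$ for which stationarity and non-negativity of the multipliers are jointly feasible.

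Each retained pair $(\mathbf{m}, \mathcal{A})$ then contributes an axis-aligned box in $\theta$-space (possibly degenerate along equality-forced coordinates, and intersected with the ambient $\Theta$), expressible in the form $\{\theta \mid [I_{d\times d}, -I_{d\times d}]^\top \theta \le s\}$, and the union over the finitely many pairs $(\mathbf{m}, \mathcal{A})$ yields the claimed representation \eqref{eq:F_unionsofboxes}. The hard part will be the clean accounting of the active-set enumeration and of the degenerate, equality-forced boxes it produces; a useful shortcut, echoing the remark in Sec. \ref{sec:extraction_uob} that ``a single set of multipliers suffices'' for the pure-box case, would be to argue that setting all unknown-constraint multipliers to zero already satisfies stationarity whenever the known-constraint portion of Problem \ref{prob:fwd_prob} admits a KKT point at $\demj$, which would eliminate the $\mathcal{A}$-enumeration entirely and leave only the branch enumeration driving the union-of-boxes structure.
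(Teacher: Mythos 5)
Your proposal is correct and follows essentially the same route as the paper's proof: the paper phrases the combinatorics as fixing the boolean variables in the MILP encoding of Problem \ref{prob:kkt_opt} (equivalently, your enumeration over disjunct selections and active-set patterns), notes that stationarity does not constrain $\theta$ for offset parameterizations, and observes that every remaining constraint bounds a single coordinate $\theta_{mn}$ independently of the others, so each boolean assignment yields an interval per coordinate, i.e.\ a box, and the finite union gives \eqref{eq:F_unionsofboxes}. One caution on your optional shortcut: setting all unknown-constraint multipliers to zero is not generally admissible, since a demonstration that is locally optimal \emph{because} it presses against the unknown constraint requires nonzero $\boldsymbol{\lambda}_{\neg k}^j$ in \eqref{eq:kkt_stat} — but your main argument does not rely on this shortcut.
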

\begin{proof}
	In this setting, note that Problem \ref{prob:kkt_opt} can be represented as a MILP, implying that $\feas$ can be described as a finite union of polyhedra in the space of all decision variables. As polyhedra are closed under projection, $\feas_\theta$ can also be represented as a finite union of polyhedra. Note that in the KKT conditions for parameterization \eqref{eq:unionsofintersections}, $\theta$ only appears in \eqref{eq:kkt_primal3} and \eqref{eq:kkt_comp2} (as the gradient term in \eqref{eq:kkt_stat} drops out). Now, suppose we fix all of the boolean variables in Problem \ref{prob:kkt_opt} (needed to implement \eqref{eq:kkt_primal3} and \eqref{eq:kkt_comp2}). Then, depending on those boolean variables for some $t$, $j$, \eqref{eq:kkt_comp2} either enforces $\theta_{mn} = g_{mn}(\cstate_t^j)$ or leaves $\theta_{mn}$ unconstrained, and \eqref{eq:kkt_primal3} imposes $\theta_{mn} \ge g_{mn}(\cstate)$ or leaves $\theta_{mn}$ unconstrained. Then, for fixed boolean variables, for any $m$ and $n$, we obtain linear constraints on $\theta_{mn}$, independent of other $\theta_{m'n'}$ for $m' \ne m$, $n' \ne n$; that is, $\theta_{mn}$ is constrained to lie in an interval. Then, unioning over all feasible boolean assignments, we obtain that $\theta_{m'n'}$ can be described as a finite union of intervals. As a result, $\feas_\theta$ can be described as a union of boxes in $\theta$ space, as in \eqref{eq:F_unionsofboxes}.
\end{proof}

\textbf{Coordinate-dependent parameterization}: If instead the constraint is parameterized such that any single constraint can depend on multiple parameters:
\begin{equation}\label{eq:unionsofintersections_dep}
	\safeset(\theta) = \Bigg\{\cstate \in \constraintspace \mid \bigvee_{m=1}^{N_\ineq} \bigwedge_{n=1}^{N_\ineq^i} \Big( g_{mn}(\cstate) \le \omega_{mn}^\top \theta \Big) \Bigg\}
\end{equation}
for some fixed mixing coefficients $\omega_{mn}$, $\feas_\theta$ can only be represented as the more general union of polytopes, as for some $m$, $n$, the constraints on $\theta_{mn}$ will in general depend on $\theta_{m'n'}$, for $m'\ne m$, $n'\ne n$.

\subsubsection{Unions of affine and higher-degree parameterized constraints}\label{sec:unionsofaffine}

Consider the case where the constraint can be represented as:
\begin{equation}\label{eq:unionsofaffine}
	\safeset(\theta) = \Bigg\{\cstate \in \constraintspace \mid \bigvee_{m=1}^{N_\ineq} \bigwedge_{n=1}^{N_\ineq^i} \Big( g_{mn}(\cstate, \theta_{mn}) \le 0 \Big) \Bigg\}
\end{equation}
where $g_{mn}(\cstate, \theta_{mn})$ is affine or higher-degree in $\theta_{mn}$. In this case, the gradient term does not drop out in \eqref{eq:kkt_stat}, meaning the set of consistent $\theta$ will depend on the projection of a set defined by polynomial equality constraints, which in general is a semialgebraic set described by polynomials of degree $2^{O(|\textrm{decision variables}|)}$, where ``decision variables" denotes the Lagrange multipliers $\lag$ and unknown constraint parameters $\theta$ in Problem \ref{prob:kkt_opt}.

\section{Obtaining a belief over constraints (expanded)}\label{app:extraction}

In this section, we first discuss details on zonotope extraction (Appendix \ref{app:extraction_zon}), how extraction can be done for the case of jointly unknown cost function and constraints (Appendix \ref{app:mixed_costconstraint}), how extraction can be sped up with parallelization (Appendix \ref{app:parallel}), and conclude with a summary of complexity and representability for the extraction problems induced by various cost and constraint parameterizations (Appendix \ref{app:extraction_summary}).

\vspace{-11pt}
\subsection{Other constraint parameterizations: extracting with zonotopes}\label{app:extraction_zon}
\vspace{-3pt}

While filling $\feastheta$ with boxes may be efficient for a union-of-boxes constraint parameterization, infinitely many boxes may be needed to cover $\feastheta$ in more general cases where $\feastheta$ may only be representable as a union of polytopes or semialgebraic sets (see Appendix \ref{app:extraction} for more details). To address this, we describe how to extract $\feastheta$ with shapes more general than boxes while retaining efficiency. Covering $\feastheta$ with polytopes instead is expensive, as polytope volume computation in high dimensions is hard. Instead, we cover $\feastheta$ with zonotopes \cite{Beck2015}, which are between boxes and polytopes in representational power. A zonotope $\mathcal{Z}$ is a Minkowski sum of $N_\textrm{gen}$ line segments: $\mathcal{Z} = \{\sum_{i=1}^{N_\textrm{gen}} \ell_i u_i \mid u_i \in [-1,1] \}$, where $\ell_i \in \mathbb{R}^{d}$ is the $i$th segment.

\begin{wrapfigure}{r}{0.4\linewidth}\vspace{-17pt}
\begin{problem}[Zonotope robustification]\label{prob:kkt_opt_robust_zonotopes}
\normalfont\begin{equation*}\vspace{2pt}
	\hspace{-27pt}\begin{array}{>{\displaystyle}c >{\displaystyle}l >{\displaystyle}l}
				&\\[-19pt]
		\underset{s, \theta, \boldsymbol{\lambda}_{k}^j, \boldsymbol{\lambda}_{\neg k}^j,\boldsymbol{\nu}_{k}^j, Q_i}{\text{maximize}} & \textstyle\sum_{i=1}^{N_\textrm{gen}} \Vert \ell_i \Vert_1 \\[-2pt]
		\text{subject to} & \{\textrm{KKT}_\textrm{rob}^\textrm{zon}(\traj_{j}^\textrm{loc})\}_{j=1}^{\numsafe}\\
		& |\ell_m^\top \ell_n| \le \delta,\ \forall m \ne n
	\end{array}\hspace{-20pt}
\end{equation*}
\end{problem}\vspace{-30pt}
\end{wrapfigure}
Robustifying KKT to a zonotope uncertainty is done similarly to boxes: the robust constraint can be simplified with these equivalences: $a^\top (x + \sum_{i=1}^{N_\textrm{gen}} \ell_i u_i) \le b, \forall u_i \in [-1, 1] \Leftrightarrow \max_{u_1\in[-1,1],\ldots,u_{N_\textrm{gen}}\in[-1,1]} a^\top (x + \sum_i \ell_i u_i ) \le b \Leftrightarrow a^\top x + \sum_i | a^\top \ell_i | \le b$. Denote \eqref{eq:kkt_comp1} and the robustified \eqref{eq:kkt_primal3}, \eqref{eq:kkt_comp2}, \eqref{eq:kkt_stat} as $\textrm{KKT}_\textrm{rob}^\textrm{zon}(\demj)$. Optimizing zonotope volume is challenging, as it requires determinant computations \cite{Beck2015} that render the overall problem a mixed integer semidefinite program, which lack reliable solvers. Instead, we optimize a surrogate, $\sum_i \Vert \ell_i \Vert_1$, and add bilinear optimization constraints to make the lines approximately orthogonal: $|\ell_m^\top \ell_n| \le \delta$ for some small predetermined $\delta$; these constraints are compatible with off-the-shelf solvers \cite{gurobi}. Finally, we cannot ignore the existential quantifiers for this parameterization, so we introduce ``feedback" Lagrange multipliers. Inspired from adjustable robust optimization \cite{Ben-TalGGN04}, we modify each Lagrange multiplier to take the form $\lambda_i + Q_i u$, where $Q_i u$ is a feedback term adjusting the value of the Lagrange multiplier as a linear function of the uncertainty $u$. The $Q_i$ are jointly optimized to maximize the volume. The overall problem (Prob. \ref{prob:kkt_opt_robust_zonotopes}) is a mixed integer bilinear program (MIBLP).

\textbf{Discussion on volume maximization}: Recall from the statement of Problem \ref{prob:kkt_opt_robust_zonotopes} that we are enforcing the approximate orthogonality $|\ell_m^\top \ell_n| \le \delta$ of the line segment generators together with maximizing the line segment norms as a surrogate for volume maximization; this is to avoid the degenerate case where any two generators are parallel; this leads to the extracted volume being zero. Furthermore, we elect to use a prespecified $\delta$ instead of enforcing mutual orthogonality $\ell_m^\top \ell_n = 0$ to avoid restricting the search to rotated boxes (which is what occurs if all the generators are perfectly orthogonal).

\subsection{Discussion on extracting with mixed cost function and constraint uncertainty}\label{app:mixed_costconstraint}

Extraction with mixed cost function and constraint uncertainty can be done in a similar way to Alg. \ref{alg:extraction}. Specifically, we can robustify the stationarity condition to uncertainties in $\thetac$ and $\theta$ jointly:

\begin{equation}\footnotesize
	\nabla_{\trajxu} c_\task(\traj_{j}^\textrm{loc}, \textcolor{red}{\thetac + s_c \odot u_c} ) + \textcolor{blue}{\boldsymbol{\lambda}_{k}^{j}}^\top \nabla_{\trajxu} \mathbf{g}_{k}(\traj_{j}^\textrm{loc})+\textcolor{blue}{\boldsymbol{\lambda}_{\neg k}^j}^{\hspace{-4pt}\top} \nabla_{\trajxu} \mathbf{g}_{\neg k}(\traj_{j}^\textrm{loc}, \textcolor{red}{\theta + s \odot u})+\textcolor{blue}{\boldsymbol{\nu}_{k}^j}^\top \nabla_{\trajxu} \mathbf{h}_{k}(\traj_{j}^\textrm{loc}) = \mathbf{0}
\end{equation}

The remaining KKT conditions are unchanged, as $\thetac$ does not factor into the other constraints. The modified stationarity condition can be robustified in a similar way. We denote \eqref{eq:kkt_comp1}, the robustified \eqref{eq:kkt_primal3} and \eqref{eq:kkt_comp2}, and the joint cost/constraint-robustified \eqref{eq:kkt_stat} together as $\textrm{KKT}_\textrm{rob,cost}^\textrm{box}(\demj)$. We can then modify Problem \ref{prob:kkt_opt_robust_boxes} to account for the additional scaling variables as such:

\begin{equation*}\vspace{2pt}
	\hspace{-0pt}\begin{array}{>{\displaystyle}c >{\displaystyle}l >{\displaystyle}l}
				&\\[-20pt]
		\underset{s, s_c, \theta, \thetac, \lag }{\text{maximize}} & \big(\textstyle\prod_i s_i \textstyle\prod_i s_{c_i} \big)^{1/d} \\
		\text{subject to} & \{\textrm{KKT}_\textrm{rob,cost}^\textrm{box}(\traj_{j}^\textrm{loc})\}_{j=1}^{\numsafe}\\[1pt]
	\end{array}\hspace{-20pt}
\end{equation*}

This new optimization problem can be integrated into Algorithm \ref{alg:extraction}, repeatedly carving out subsets of $\feastheta \times \feasgamma$.

\subsection{Speeding up extraction with parallelization}\label{app:parallel}

We sketch one possible way that Alg. \ref{alg:extraction} can be sped up with parallelization on $M$ cores:
\begin{itemize}
	\item Partition the parameter space $\Theta$ into $M$ disjoint boxes.
	\item Run Alg. \ref{alg:extraction} on each partition separately.
	\item Reconstruct $\feastheta$ by unioning the extracted parameters from each partition.
\end{itemize}

\subsection{Summary on problem complexity}\label{app:extraction_summary}
In the following table, we organize the representability of particular constraint learning and feasible set extraction problems, for various constraint parameterizations. To summarize, the case of unknown constraints induces a set of integer variables due to the complementary slackness condition. The remaining complexity depends on the complexity of the constraint and cost function parameterizations. Furthermore, the extraction problems are only conic-representable due to our formulation of volume maximization.

	\begin{centering}\vspace{4pt}
\begin{tabular}{ c |c|c|c}
 Constraint parameterization & Cost parameterization & Problem \ref{prob:kkt_opt}, class & Class (extraction) \\\hline\hline
 Known & Linear & LP & Direct projection \\\hline
 Union of offsets & Known & MILP & MISOCP\\\hline
 Union of offsets & Linear & MILP & MISOCP\\\hline
 Union of affine & Known & MIBLP & MIBLP\\\hline
 Union of affine & Linear & MIBLP & MIBLP\\\hline
 Nonlinear & Nonlinear & MINLP & MINLP\\\hline
\end{tabular}\vspace{4pt}
\end{centering}

\section{Policies for adaptive constraint satisfaction (expanded)}\label{app:planning}

In this appendix, we first discuss fast reformulations for the chance-constrained planning problem (Appendix \ref{app:fastprob7}), expanded details on the sampling-based planners that we use when the optimization constraints are not MICP-representable (Appendix \ref{app:sampling_based_planners}), discussion on extending Problem \ref{prob:cc_riemann} to handle priors other than the uniform distribution (Appendix \ref{app:planning_priors}), and discussion on how to perform belief updates for various constraint sensing modalities (Appendix \ref{app:planning_update}).

\vspace{-7pt}
\subsection{Fast reformulations of Problem \ref{prob:cc_riemann}}\label{app:fastprob7}\vspace{-3pt}

As written, Problem \ref{prob:cc_riemann} can be expensive to solve due to the many possible assignments of the box decision variables $b_i^\textrm{cen}$ and $b_i^\textrm{scale}$, for each $i$, and the combinatorial coupling between boxes $i \ne j$. Furthermore, the non-convex norm constraint \eqref{eq:cc_norm} causes Problem \ref{prob:cc_riemann} to be an MIBLP, which are in general more challenging to solve than mixed integer convex programs. This can cause solving Problem \ref{prob:cc_riemann} to be slow. To address this, we propose two reformulations:

\begin{itemize}
	\item In the first, we still optimize over the $N_\textrm{box}$ boxes simultaneously, and simply replace the non-convex constraint \eqref{eq:cc_norm} with a linear approximation: $\sum_i t_i \ge 1 - \varepsilon$. However, by doing this, the original chance-constraint $\pr(\trajxu\safe) \ge 1-\varepsilon$ may not hold exactly. One can get around this by instead enforcing $\sum_i t_i \ge 1 - \tilde\varepsilon$, incrementally shrinking $\tilde\varepsilon$ until the resulting trajectory satisfies the original chance constraint, but this can be cumbersome.
	\item The second reformulation, which is what we use in practice to solve Problem \ref{prob:cc_riemann}-$\varepsilon_\textrm{min}$, instead optimizes over the boxes one at a time. That is, if we are given a budget of $N_\textrm{box}$ boxes, we solve Problem \ref{prob:cc_riemann}-$\varepsilon_\textrm{min}$ for $N_\textrm{box} = 1$, then at the next iteration, solve for $N_\textrm{box} = 2$, where the first box is fixed. This continues until we reach the box budget. If $N_\textrm{box}$ is chosen to be large enough that the covered probability mass is the same when solving Problem \ref{prob:cc_riemann} for $N_\textrm{box}$ and $N_\textrm{box}+1$, this sequential strategy is lossless, i.e. will return the same solution as Problem \ref{prob:cc_riemann} without relaxations, for the case where choosing to satisfy one possible constraint can never cause the trajectory to not be able to satisfy a different constraint. In other cases, this strategy may lead to convergence to a local optimum in the amount of probability mass covered, but we observe that this strategy works well in practice.
\end{itemize}

\subsection{Sampling-based planners}\label{app:sampling_based_planners}

We utilize two sampling-based planners to compute open-loop plans in the case where the constraints of Problem \ref{prob:cc_riemann} are not representable in a mixed integer convex program: Minimum Constraint Removal (MCR) and the Blindfolded Traveler's Problem (BTP).

\textbf{Minimum Constraint Removal (MCR)} \cite{MCR}: MCR takes a \textit{finite} set of constraints and a start/goal state. At each step of the algorithm, MCR keeps track of the path from the start to the goal that violates the least number of constraints so far. The algorithm initializes this with the straight-line edge between the start and goal states, and sets $k$, the minimum number of constraints that must be violated as the number of constraints that the start/goal state violate. Then, MCR incrementally grows a roadmap, where candidate expansions are limited based on the number of constraints the candidate edge will violate, and at each growth iteration, finds the path from the start to each vertex which violates the minimum number of constraints and updates the path to the goal which violates the least constraints. Every so often, we increase $k$ to allow for more constraints to be violated when expanding the roadmap. This is summarized in Section 4.2 of \cite{MCR}.

To use MCR to approximate Problem \ref{prob:cc_riemann}-$\varepsilon_\textrm{max}$, we sample $N_\textrm{sample}$ constraints from the belief $\{\theta_i\sim b(\theta)\}_{i=1}^{N_\textrm{sam}}$ as input to MCR, then run MCR as just described. The output of MCR will then be a path on the roadmap connecting the start and goal which violates the minimal number of \textit{sampled constraints}.
 
\textbf{The Blindfolded Traveler's Problem (BTP)} \cite{BTP}: The Blindfolded Traveler's Problem (BTP) can be modeled as a graph search problem. It is defined by a graph $G = (V, E, W, \state_0, \state_g)$ with vertices $V$, edges $E$, weights $C$, and start/goal state $\state_0$ and $\state_g$. Furthermore, each edge $e$ is invalid with probability $p(e) \in [0, 1]$. If an edge $e_{uv}$ from $u$ to $v$ is traversed, either the traversal is successful, and the agent ends up at vertex $v$, or the agent discovers that the edge is invalid $\eta_{uv} \in [0, 1]$ fraction of the way through, at which point the agent needs to turn back and return to vertex $u$; such a traversal attempt costs $2 \eta_{uv}C_{uv}$, where $c_\task(e_{uv})$ is the cost of traversing edge $uv$. The agent has a prior over the probability of edge validity and blockage, which can be updated based on the observations gained through traversing the graph. A solution to BTP is a policy which takes the start state, history of observations, and outputs an edge to traverse.

While computing an optimal policy for the BTP is NP-complete, it is possible to compute high-quality approximations, for example using the Collision Measure strategy (Section 5.1 of \cite{BTP}). This strategy approximates BTP by modifying the graph edge weights to penalize the log probability of the edges being unsafe, that is, edge weights are modified such that $\tilde c_{uv} = c_{uv} - \beta \log(p(e_{uv}\textrm{ safe})$, for some weights $\beta$. We then compute paths on the graph by running A* with the modified edge weights.

Specifically, to approximate Problem \ref{prob:cc_riemann}-R with BTP, we sample constraints $\{\theta_i\sim b(\theta)\}_{i=1}^{N_\textrm{sam}}$ from the belief $b(\theta)$ to approximate the probabilities $p(e_{uv}\safe)$; specifically, for each edge, we estimate $p(e_{uv}\textrm{ safe})$ as the fraction of sampled constraints which are violated.

\subsection{Priors $p(\theta)$ other than the uniform distribution}\label{app:planning_priors}
In Section \ref{sec:plan_open_loop}, we discuss how to integrate over a \textit{uniform prior} to optimize boxes over the probability density which can be embedded in an MISOCP for planning probabilistically safe trajectories. Here, we discuss a different prior which also satisfies the closed-form integrability and log-concave assumptions detailed in Sec. \ref{sec:plan_open_loop}: $p(\theta) \propto \prod_{i=1}^{|\cstate|} (\bar \cstate_i(\theta) - \underline \cstate_i(\theta))$, where $\bar\cstate_i$, $\underline\cstate_i$ denote the upper and lower bounds of the box in dimension $i$ of the constraint space. This prior places more probability mass on larger box constraints; hence, the behavior generated using this prior is more conservative. As a concrete example, see Fig. \ref{fig:priors} for trajectories solving Problem \ref{prob:cc_riemann} for the different priors, solved over a range of different start/goal states. Observe that the trajectories that use the weighted prior are more conservative. Generally, an investigation of other useful priors which satisfy our assumption of closed-form integrability is the subject of future work.

\begin{figure}[!htb]
        \centering
        \includegraphics[width=0.6\linewidth]{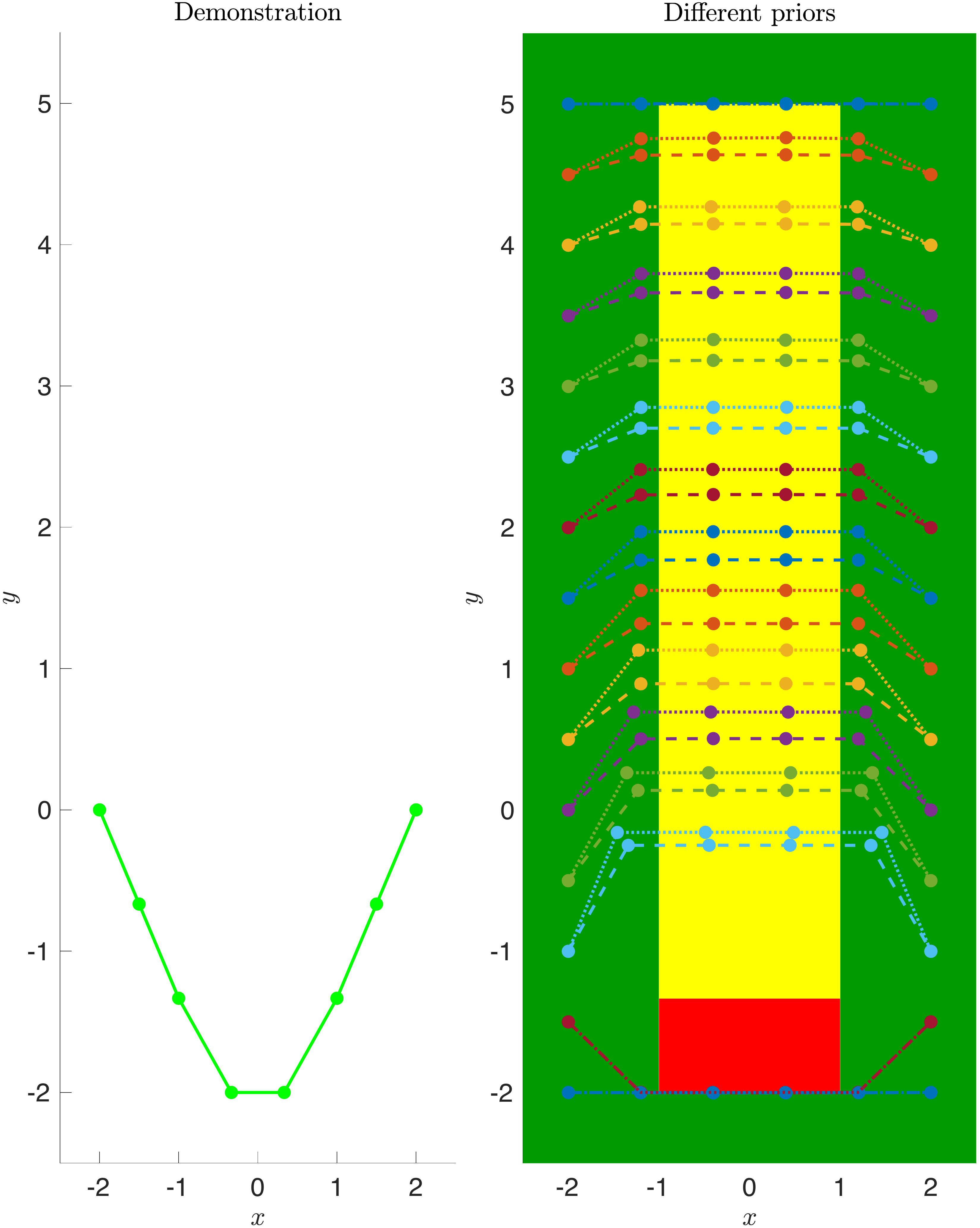}
        \caption{Trajectories generated for different priors. We are provided one demonstration (left), which reveals the left, right, and bottom extents of a box obstacle constraint, but not the upper extent. Dashed lines correspond to the uniform prior $p(\theta) \propto 1$, while dotted lines correspond to the prior $p(\theta) \propto \prod_{i=1}^{|\cstate|} (\bar \cstate_i(\theta) - \underline \cstate_i(\theta))$. }
        \label{fig:priors}
\end{figure}\vspace{-10pt}

\subsection{Belief updates}\label{app:planning_update}

Given an initial set of consistent constraint parameters $\feastheta$, we are interested in updating $\feastheta$ to be consistent with constraint information gathered in execution. We specifically write belief updates for the following constraint sensing modalities:

\textbf{Direct, exact measurements}: Consider a measurement that $\cstate_\textrm{safe}$ is safe, given an initial set of consistent constraints $\feastheta$. We want to find the maximum volume subset of $\feastheta$ which satisfies $g(\cstate, \theta) \le 0$. To accomplish this, we simply modify Problem \ref{prob:kkt_opt_robust_boxes} to:

\begin{equation*}\vspace{2pt}\label{eq:fwdprob}
	\hspace{-33pt}\begin{array}{>{\displaystyle}c >{\displaystyle}l >{\displaystyle}l}
				&\\[-13pt]
		\underset{s }{\text{maximize}} & \big(\textstyle\prod_i s_i\big)^{1/d} \\
		\text{subject to} & g(\cstate_\textrm{safe}, \theta + s \odot u) \le 0\\[1pt]
	\end{array}\hspace{-20pt}
\end{equation*}

where we can eliminate the uncertain variable $u$ with the identity used in Section \ref{sec:extraction_uob}, and use this modified problem in Algorithm \ref{alg:extraction}. Note that as the local optimality of the demonstrations is already embedded in the initial $\feastheta$, we do not need to add them as additional constraints in this modified problem, improving the computation time. 

The same modification can be done an unsafe measurement $\cstate_\textrm{unsafe}$, except with a constraint $g(\cstate_\textrm{unsafe}, \theta + s\odot u) > 0$.

\textbf{Ranged, exact measurements}: This case can come up when given LiDAR scans of the environment obtained in execution. For this setting, we assume that we are given a finite set of states which are all sensed to be safe, or all sensed to be unsafe. In our examples, we obtain this finite set of points by discretizing the possibly continuous ranged LiDAR measurement using a grid of measurement locations. This is a simple extension of the previously discussed modification for a single observed state; we simply have to add constraints corresponding to each state, and use this modified problem in Algorithm \ref{alg:extraction}:

\begin{equation*}\vspace{2pt}\label{eq:fwdprob}
	\hspace{-33pt}\begin{array}{>{\displaystyle}c >{\displaystyle}l >{\displaystyle}l}
				&\\[-13pt]
		\underset{s }{\text{maximize}} & \big(\textstyle\prod_i s_i\big)^{1/d} \\
		\text{subject to} & g(\cstate_\textrm{safe}^i, \theta + s \odot u) \le 0, \quad i = 1,\ldots,N_\textrm{safe}\\[1pt]
		 & g(\cstate_\textrm{unsafe}^i, \theta + s \odot u) > 0, \quad i = 1,\ldots,N_\textrm{unsafe}\\[1pt]
	\end{array}\hspace{-20pt}
\end{equation*}

\textbf{Direct, uncertain measurements}: In this case, suppose that we are given an initial set of consistent constraints $\feastheta$ as well as a finite set of states which may be possibly unsafe (the same ideas extend to the case where a set of states may be possibly safe); that is, we are given $\collset_{\neg s}$, where we learn in execution that at least one element of $\collset_{\neg s}$ is unsafe: $\exists \collset_{\neg s}^i \in \unsafeset(\theta^*)$. In our examples for the 7-DOF arm, we obtain this finite set of points by discretizing the continuous set of points which could be in contact by sampling points on the surface of the arm on the links downstream from where a torque limit is violated. Again, a similar modification can be made to Problem \ref{prob:kkt_opt_robust_boxes}:

\begin{equation*}\vspace{2pt}\label{eq:fwdprob}
	\hspace{-33pt}\begin{array}{>{\displaystyle}c >{\displaystyle}l >{\displaystyle}l}
				&\\[-13pt]
		\underset{s }{\text{maximize}} & \big(\textstyle\prod_i s_i\big)^{1/d} \\
		\text{subject to} & \bigvee_{i=1}^{|\collset_{\neg s}|} g(\collset_{\neg s}^i, \theta + s \odot u) \le 0\\[1pt]
	\end{array}\hspace{-20pt}
\end{equation*}

Specifically, the logical constraints over which state is unsafe can be modeled with binary variables, so the overall problem is still an MISOCP. The modified problem can be used in Algorithm \ref{alg:extraction}.

\section{Theory}\label{app:theory}

In this appendix, we provide proofs for the theorems in the main body of the paper.

\begin{theorem}
	If Alg. \ref{alg:extraction} terminates for any parameterization, its output is guaranteed to cover $\feastheta$.
\end{theorem}
\begin{proof}
	Suppose for contradiction that Algorithm \ref{alg:extraction} terminates such that $\feastheta \setminus (\bigcup_{i=1}^{N_\textrm{infeas}} \feasrobust^i) = \feastheta^\textrm{remain} \ne \emptyset$. However, by construction, Alg. \ref{alg:extraction} only terminates if there does not exist any $\theta \in \Theta$ for which $\{\textrm{KKT}(\demj)\}_{j=1}^{N_\textrm{dem}}$ can be satisfied; otherwise, Prob. \ref{prob:kkt_opt_robust_boxes} remains feasible. For all $\theta \in \feastheta^\textrm{remain}$, by definition of being an element of $\feastheta$, there exist $\lag$ to satisfy $\{\textrm{KKT}(\demj)\}_{j=1}^{N_\textrm{dem}}$. Contradiction.
\end{proof}

\begin{figure}[!htb]
        \centering
        \includegraphics[width=0.5\linewidth]{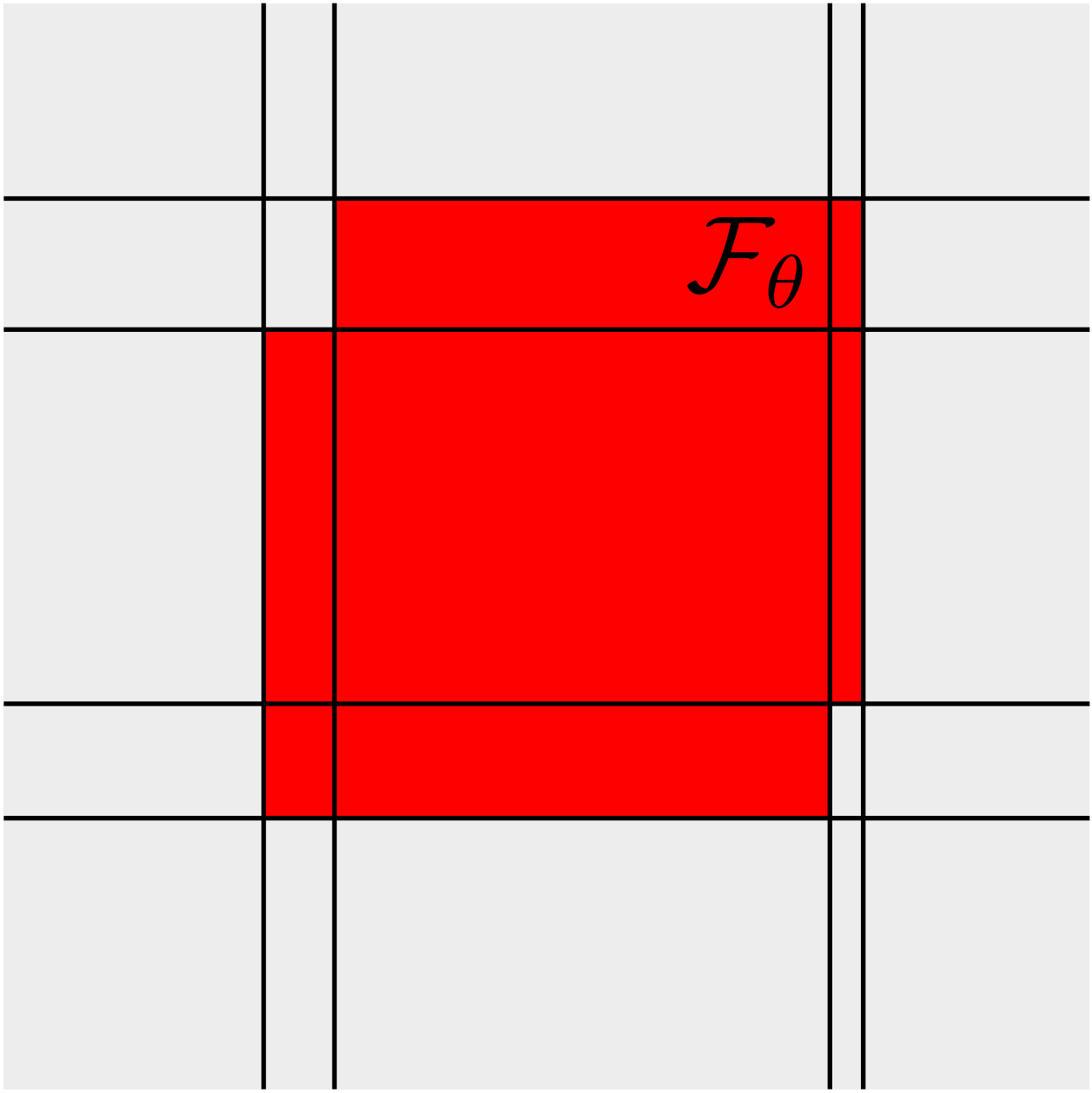}
        \caption{Grid used in the proof of Theorem \ref{thm:app_finite_time}.}
        \label{fig:grid}
\end{figure}

\begin{theorem}\label{thm:app_finite_time}
	Alg. \ref{alg:extraction} is guaranteed to terminate in finite time for union-of-boxes parameterizations.
\end{theorem}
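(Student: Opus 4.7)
The plan is to establish termination via a gridding argument that exploits the special structure of $\feastheta$ when the constraint parameterization is a union of boxes. First, I would invoke Theorem \ref{thm:unions_of_boxes} from Appendix \ref{sec:unionsofoffset} to conclude that $\feastheta$ itself can be written as a finite union of axis-aligned boxes. Let $\mathcal{H}_i$ denote the finite set of coordinates at which the faces of these boxes lie in dimension $i$. The axis-aligned hyperplanes $\mathcal{H}_1 \times \cdots \times \mathcal{H}_d$ then induce a finite partition of (the bounding box of) $\Theta$ into a finite collection of \emph{grid cells} (cf. Fig. \ref{fig:grid}). By construction of $\mathcal{H}_i$, each grid cell is either entirely contained in $\feastheta$ or entirely disjoint from it, so $\feastheta$ is exactly the union of the (finitely many) grid cells it contains.

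Second, I would argue that each box $\feasrobust^i$ extracted by Prob. \ref{prob:kkt_opt_robust_boxes} may, without loss of generality, be taken to be grid-aligned, i.e., to have faces on hyperplanes in $\mathcal{H}_1 \times \cdots \times \mathcal{H}_d$. The idea is: given any box inscribed in the current remaining feasible set, each of its $2d$ faces can be translated outward until it hits a grid hyperplane. This does not leave the remaining feasible set (since moving a face outward must stop at a face of a box making up $\feastheta$ or at a face of a previously extracted grid-aligned box, both of which live on grid hyperplanes) and strictly weakly increases the volume, so the resulting grid-aligned box is also an optimizer of Prob. \ref{prob:kkt_opt_robust_boxes}. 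Inducting on iterations, the remaining feasible set is always a union of grid cells.

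Third, the termination count follows immediately. Since $\feastheta$ has positive volume and contains only finitely many grid cells, and since each grid-aligned box $\feasrobust^i$ extracted by Prob. \ref{prob:kkt_opt_robust_boxes} has strictly positive volume (hence covers at least one whole grid cell), the number of grid cells remaining in the feasible set strictly decreases by at least one per iteration. After at most $N_{\text{cells}}$ iterations no grid cell remains, so by the preceding step $\feastheta \setminus \bigcup_i \feasrobust^i = \emptyset$, Prob. \ref{prob:kkt_opt_robust_boxes} becomes infeasible, and Alg. \ref{alg:extraction} halts.

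The main obstacle is making the ``WLOG grid-aligned'' claim watertight. One must verify that (a) the face-expansion argument genuinely preserves feasibility with respect to Prob. \ref{prob:kkt_opt_robust_boxes}, not merely geometric inscription, and (b) the inductive invariant that the remaining feasible set is a union of grid cells is maintained after the removal step. Part (a) is handled because, as noted in Sec. \ref{sec:extraction_uob}, for the union-of-boxes parameterization the stationarity and complementary slackness conditions decouple from $\theta$ (the multipliers for varying components must vanish), so the constraints in Prob. \ref{prob:kkt_opt_robust_boxes} on $\theta$ reduce to the geometric inscription $\feasrobust \subseteq \feastheta \setminus \bigcup_{j<i}\feasrobust^j$; part (b) then follows inductively from (a) together with the observation that a union of grid cells minus a grid-aligned box is again a union of grid cells.
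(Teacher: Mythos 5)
Your proof is correct and follows essentially the same route as the paper's: invoke Theorem \ref{thm:unions_of_boxes} to write $\feastheta$ as a finite union of boxes, extend the box faces to an irregular grid with finitely many cells, use the optimality of Prob. \ref{prob:kkt_opt_robust_boxes} to conclude each extracted box aligns with the grid, and count cells to bound the number of iterations. Your additional care with the inductive invariant (that the remaining feasible set stays a union of grid cells after each removal) and with reducing the KKT constraints on $\theta$ to pure geometric inscription makes explicit two steps the paper leaves implicit, but the argument is the same.
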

\begin{proof}
	From Theorem \ref{thm:unions_of_boxes}, $\feastheta$ can be described as a union of a finite number of axis-aligned rectangles: $\feastheta = \bigcup_{i=1}^{N_\textrm{box}} \mathcal{B}_i \}$. Extend each hyperplane defining the boundary of a box $\mathcal{B}_i$ to infinity to obtain an irregular grid $\{\mathcal{G}_i\}_{i=1}^{N_\textrm{grid}}$ over $\Theta$ (see Figure \ref{fig:grid} for the case in 2D). As $\feastheta$ is composed of a finite number of boxes and hence there are a finite number of extended hyperplanes, there will be a finite number of grid cells, i.e. $N_\textrm{grid}$ is finite.
	
	We now prove that the solution of Problem \ref{prob:kkt_opt_robust_boxes} at any iteration $i$, $\feasrobust^i$, can be exactly represented by some subset of grid cells: $\feasrobust^i = \{\cstate \mid [I_{d\times d}, -I_{d\times d}]^\top \cstate \le [\bar \theta, -\underline \theta]^\top \} = \bigcup_{j=1}^{N_\textrm{rep}} \mathcal{G}_j$. Suppose for contradiction that there exists some grid cell $\mathcal{G}_k$ that $\feasrobust^i$ only partially contains: $(\mathcal{G}_k \cap \feasrobust^i \ne \emptyset) \wedge (\mathcal{G}_k \cap \feasrobust^i \ne \mathcal{G}_k)$. Formally, this means that in some coordinate of $\theta$, say the $m$th coordinate, the upper bound of $\feasrobust^i$, $\bar\theta_m^i$ satisfies $\bar\theta_m^i \in [\underline\theta_m^k, \bar\theta_m^k]$, where these denote the lower and upper bounds of grid $k$ in dimension $m$; similar logic holds for analyzing the lower bound. For $\bar\theta_m^i$ to be the upper bound of $\feasrobust^i$ in the $m$th coordinate, by the optimality of Problem \ref{prob:kkt_opt_robust_boxes}, there must exist some constraint state $\cstate$ contained in the expanded box $\{\cstate \mid [I_{d\times d}, -I_{d\times d}]^\top \cstate \le [\bar\theta_1,\ldots,\bar\theta_{m-1},\bar\theta_m^k,\bar\theta_{m+1},\ldots,\bar\theta_d, -\underline\theta]^\top \}$ such that $\cstate \notin \feastheta$. However, this is not possible, as by the grid partition, there exists no hyperplane defining $\feastheta$ that can be crossed in the $m$th coordinate between $\underline\theta_m^k$ and $\bar\theta_m^k$. Contradiction.
	
	Finally, as each iteration in Alg. \ref{alg:extraction} removes a finite number of grid cells, Alg. \ref{alg:extraction} will terminate in a finite number of iterations.
\end{proof}

\begin{theorem}
	A solution to Prob. \ref{prob:cc_riemann} is a guaranteed feasible, possibly suboptimal solution to Prob. \ref{prob:cc}.
\end{theorem}
\begin{proof}
	Feasibility follows by construction of Problem \ref{prob:cc_riemann}, as constraint \eqref{eq:cc_norm} directly models the probability constraint \eqref{eq:prob_safety}: $\int_{\mathcal{B}_i} d\theta = t_i^d$, so $\sum_i t_i^d = \sum_i \int_{\mathcal{B}_i} d\theta = \int_{\Theta_s} d\theta$ for $\Theta_s = \bigcup_{i=1}^{N_\textrm{box}} \mathcal{B}_i$, which is exactly $\pr(\trajxu\safe)$ when integrated over $\Theta_s$, for the uniform prior $b_\dem(\theta)$. 

	Suboptimality arises from the optimal partition of probability possibly not being representable as a union of boxes: in general, there exists $\Theta_s$ under which $c_\task(\trajxu)$ is minimized, such that there does not exist $N_\textrm{box}$ boxes where $\Theta_s = \bigcup_{i=1}^{N_\textrm{box}} \mathcal{B}_i$.
\end{proof}

\section{Further experimental details}\label{app:results}

In this section, we provide additional details on our experimental results. We first discuss in detail the baseline algorithms that we compare to in the results (Appendix \ref{app:planning_baselines}). We then demonstrate closed-loop planning with an uncertain nonlinear constraint using a sampled approximation of Problem \ref{prob:cc_riemann} (Appendix \ref{app:results_nl}), and then discuss additional details and visualize example runs of our method and baseline approaches for the mixed quadrotor uncertainty example (Appendix \ref{app:results_mixed}), the 7-DOF arm example (Appendix \ref{app:results_arm}), and the quadrotor maze example (Appendix \ref{app:results_maze}).

\subsection{Planning baselines}\label{app:planning_baselines}

\subsubsection{Mixed quadrotor example}\label{app:planning_baselines_mixed}

\textbf{Scenario approach}: The scenario approach \cite{scenario} satisfies uncertain constraints by sampling $N_\textrm{sam}$ possible constraint parameters $\{ \theta_i \}_{i=1}^{N_\textrm{sam}}$ and finds a solution that satisfies all of the sampled constraints. For this example, we may not be able to satisfy all of the sampled constraints, and hence the scenario approach may render the problem infeasible. To get around this to use the method as a baseline, we iteratively sample additional constraints, and solve the following open-loop planning problem:

\begin{subequations}
	\begin{align*}
    \hspace{-5pt}\min_{\trajxu }\quad\ & c_\task(\trajxu)\\[-4pt]
    \textrm{s.t.}\quad\  & \bar\phi(\trajxu) \in \bar\safeset \subseteq \bar\constraintspace,\ \phi_\task(\trajxu) \in \safeset_\task \subseteq \constraintspace_\task \\
	& \trajxu \in \safeset(\theta),\ \forall \theta \in \{ \theta_i \}_{i=1}^{N_\textrm{sam}}
\end{align*}
\end{subequations}

We stop sampling constraints when the problem becomes infeasible and return the feasible trajectory generated at the previous iteration.

\textbf{Optimistic approach}: In this approach, we are optimistic about the true constraint, only avoiding the set of guaranteed-unsafe states, buffering the extents by 0.5 in the uncertain constraint dimensions.

\subsubsection{7-DOF arm example}\label{app:planning_baselines_7dof}

\textbf{BTP without constraint parameterization or demonstrations}: In this version of BTP, we provide neither a union-of-boxes constraint parameterization nor a set of demonstrations. Instead, collision probabilities are measured with ``Collision Hypothesis Sets" (CHS) \cite{SaundB18}, which use a voxelization of the environment, with probabilities of particular voxels being occupied updated based on the occupancy of the robot volume during collision.

\textbf{Optimistic approach}: In this approach, we use the same graph provided to BTP and do not provide the information provided by the demonstrations, and we iteratively solve an optimistic problem. At the first iteration, we find run A* with all edges on the graph assumed valid, and attempt to execute the path. If the robot collides when traversing some edge $e_{uv}$ from vertex $u$ to $v$, the robot backtracks to vertex $u$, removes edge $e_{uv}$ from the graph, and replans on the modified graph. The procedure continues until the robot is at the goal.

\subsubsection{Quadrotor maze example}\label{app:planning_baselines_maze}

\textbf{Guaranteed-safe planning}: In this approach \cite{ral}, we compute paths which are guaranteed-safe with respect to the constraint uncertainty. Under the assumption that the constraint parameterization is correct, this guarantees that we will never need to replan upon discovering a constraint, but at the cost of possibly high-cost, conservative trajectories.

\textbf{Optimistic approach}: This approach \cite{janson} is optimistic with respect to the uncertain space, and constructs high-level plans that plan to intermediate goals on the frontier of unknown space, between the current state and the goal, and executes the best high-level plan. In more detail, we replicate the approximations used in Section IV.B of \cite{janson}. In particular, we assume that the unknown space is free, but buffer known obstacles by 0.1 meters in the uncertain dimensions. We discretize the unknown frontier in 2 meter intervals to construct our subgoals.

\subsection{Nonlinear constraint}\label{app:results_nl}
We show that our method can plan with constraint beliefs for non-union-of-boxes constraint parameterizations. Specifically, we are given a demonstration on a 2D kinematic system $[\chi_{t+1}, y_{t+1}]^\top = [\chi_t, y_t]^\top + [u_t^\chi, u_t^y]^\top$ which minimizes path length $c(\traj) = \sum_{t=1}^{T-1} \Vert x_{t+1} - x_t\Vert ^2$ (Figure \ref{fig:mcr}.A) while satisfying the constraint $g(\state, \theta) = \theta_1(\state_1^4 + \state_2^4) + \theta_2(\state_1^3 + \state_2^3) + \theta_3(\state_1 - 1)^3 + \theta_4(\state_2 + 1)^3 > 2$ for $\theta = [2, -5, 5, 5]^\top$. As the demonstration is rather uninformative, many $\theta$ make it locally-optimal. As the constraint is affine in $\theta$, we extract $\feastheta$ with zonotopes (running Algorithm \ref{alg:extraction} with Problem \ref{prob:kkt_opt_robust_zonotopes}); see Figure \ref{fig:mcr}.B-C for $\textrm{proj}_\statespace(\feastheta)$ and a corresponding probability heatmap. We now want to solve Prob. MCV, planning from $x_0 = [0, 0.75]^\top$ to $x_g = [0, -1.5]^\top$. As $g(\state, \theta)$ is not MICP-representable, we solve an approximation of Prob. MCV with samples, sampling 100 constraints from $b_\dem(\theta)$ as input to MCR (c.f. Section \ref{sec:plan_open_loop_samp}). The resulting path, Plan 1, (Figure \ref{fig:mcr}.C) violates one possible constraint at $\state_{\neg s} = [0.49, 0.8]^\top$. Though this path is safe for the true constraint, this is unknown to the learner, so we also precompute a contingency. Updating the belief $b_\textrm{ex}(\theta)$ with $\collset_{\neg s} = \{\state_{\neg s}\}$ and previously visited states as $\collset_s$ reduces the constraint uncertainty (Figure \ref{fig:mcr}.D-E). In particular, we note that the measurement at $[0.49, 0.8]^\top$ also removes the constraint uncertainty on the left-hand side of the space; this is due to the nonlinearity of the constraint parameterization, and as we can see in the belief (Figure \ref{fig:mcr}.C), only some of the the convex obstacles that cover the right side of the space can explain the possible collision at $[0.49, 0.8]^\top$. Hence, this measurement will update the belief to eliminate the non-convex obstacles, removing the left-side uncertainty. For this updated belief, Contingency 1 can be planned with no constraint violations (Figure \ref{fig:mcr}.E). Extracting $\feastheta$ with Algorithm \ref{alg:extraction} and planning with MCR takes 30 min. (can be sped up with a parallel implementation, c.f. Appendix \ref{app:extraction}) and 30 sec., respectively. 

\vspace{-10pt}
\begin{figure}[!htb]
        \centering
        \includegraphics[width=\linewidth]{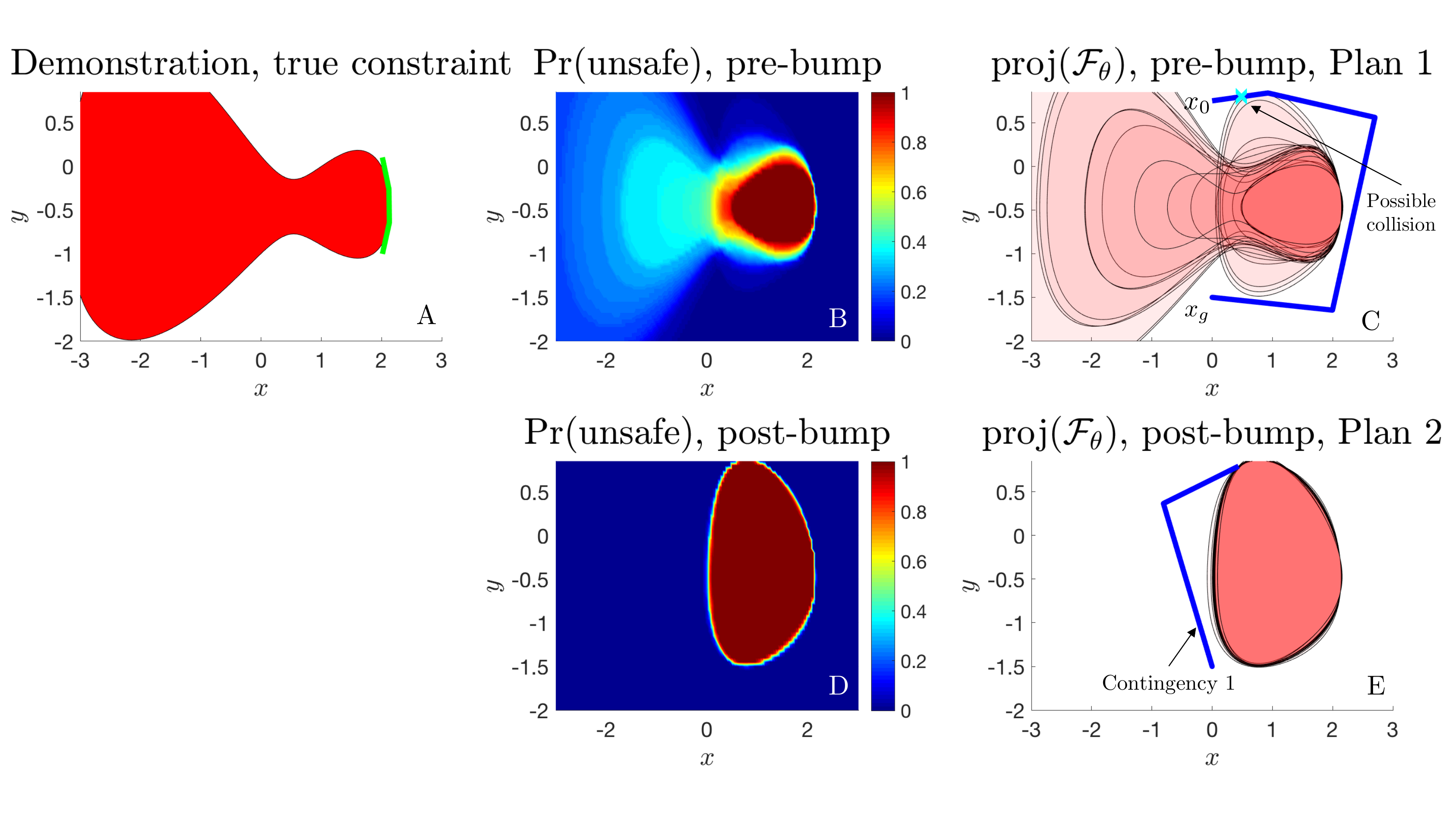}\vspace{-10pt}
        \caption{\hspace{-1pt}Example demonstrating MCR on planning with a nonlinear constraint. \textbf{A}: Demonstration, overlaid with the true constraint (red). \textbf{B}: Initial constraint uncertainty, visualized as a normalized probability heatmap.\textbf{C}. The initial plan (blue) generated by MCR, overlaid by a subsampling of 20 of the sampled constraint parameters $\theta$ provided to MCR. A possible collision occurs at the cyan ``x". \textbf{D}. The updated constraint uncertainty probability heatmap were the cyan state to be in collision. \textbf{E}. The new plan for the updated constraint uncertainty reaches the goal without violating any possible sampled constraints. }
        \label{fig:mcr}
\end{figure}\vspace{-11pt}

\subsection{Mixed state-control constraint uncertainty on a quadrotor}\label{app:results_mixed}

The system dynamics for the quadrotor \cite{quad_kth} are:
\begin{equation}
	\hspace{-5pt}\begin{bmatrix} \dot\chi \\ \dot y \\ \dot z \\ \dot\alpha \\ \dot\beta \\ \dot\gamma \\ \ddot \chi \\ \ddot y \\ \ddot z \\ \ddot \alpha \\ \ddot \beta \\ \ddot \gamma \end{bmatrix} = \begin{bmatrix} \dot\chi \\ \dot y \\ \dot z \\ \dot\beta \frac{\sin(\gamma)}{\cos(\beta)} + \dot\gamma \frac{\cos(\gamma)}{\cos(\beta)} \\ \beta \cos(\gamma) - \dot\gamma \sin(\gamma) \\ \dot\alpha + \dot\beta\sin(\gamma)\tan(\beta)+\dot\gamma\cos(\gamma)\tan(\beta) \\ -\frac{1}{m}[\sin(\gamma)\sin(\alpha) + \cos(\gamma)\cos(\alpha)\sin(\beta)]u_1 \\ -\frac{1}{m}[\cos(\alpha)\sin(\gamma) - \cos(\gamma)\sin(\alpha)\sin(\beta)]u_1 \\ g-\frac{1}{m}[\cos(\gamma)\cos(\beta)]u_1 \\ \frac{I_y-I_z}{I_x} \dot\beta \dot\gamma + \frac{1}{I_x}u_2 \\ \frac{I_z-I_x}{I_y} \dot\alpha \dot\gamma + \frac{1}{I_y}u_3\\ \frac{I_x-I_y}{I_z} \dot\alpha \dot\beta + \frac{1}{I_z}u_4\end{bmatrix},
\end{equation}
We time-discretize the dynamics by performing forward Euler integration with discretization time $\delta t = 0.7$. The 12D state is $x = [\chi, y, z, \alpha, \beta, \gamma, \dot x, \dot y, \dot z, \dot \alpha, \dot \beta, \dot \gamma]^\top$, and the relevant constants are $g = -9.81 \textrm{m}/\textrm{s}^2$, $m=1$kg, $I_x = 0.5\textrm{kg}\cdot\textrm{m}^2$, $I_y = 0.1\textrm{kg}\cdot\textrm{m}^2$, and $I_z = 0.3\textrm{kg}\cdot\textrm{m}^2$.

The simplified double integrator model that we use to plan in Problem \ref{prob:cc_riemann} is as follows:

\begin{equation}
	\begin{bmatrix}
		\dot\chi \\ \dot y \\ \dot z \\ \ddot x \\ \ddot y \\ \ddot z
	\end{bmatrix} = \begin{bmatrix}0 & 0 & 0 & 1 & 0 & 0 \\ 0 & 0 & 0 & 0 & 1 & 0 \\ 0 & 0 & 0 & 0 & 0 & 1 \\ 0 & 0 & 0 & 0 & 0 & 0 \\ 0 & 0 & 0 & 0 & 0 & 0 \\ 0 & 0 & 0 & 0 & 0 & 0 \end{bmatrix} \begin{bmatrix}
		\chi \\ y \\ z \\ \dot x \\ \dot y \\ \dot z
	\end{bmatrix} + \begin{bmatrix} 0 & 0 & 0\\ 0 & 0 & 0\\ 0 & 0 & 0\\ 1 & 0 & 0\\ 0 & 1 & 0\\ 0 & 0 & 1 \end{bmatrix} \begin{bmatrix} u_1 \\ u_2 \\ u_3 \end{bmatrix} + \begin{bmatrix}	0 \\ 0 \\ 0 \\ 0 \\0 \\ g\end{bmatrix}
\end{equation}
which is then time discretized with $\delta t$ = 0.5, with $g = -9.81$m$/s^2$.

\begin{figure}[!htb]
        \centering
        \includegraphics[width=\linewidth]{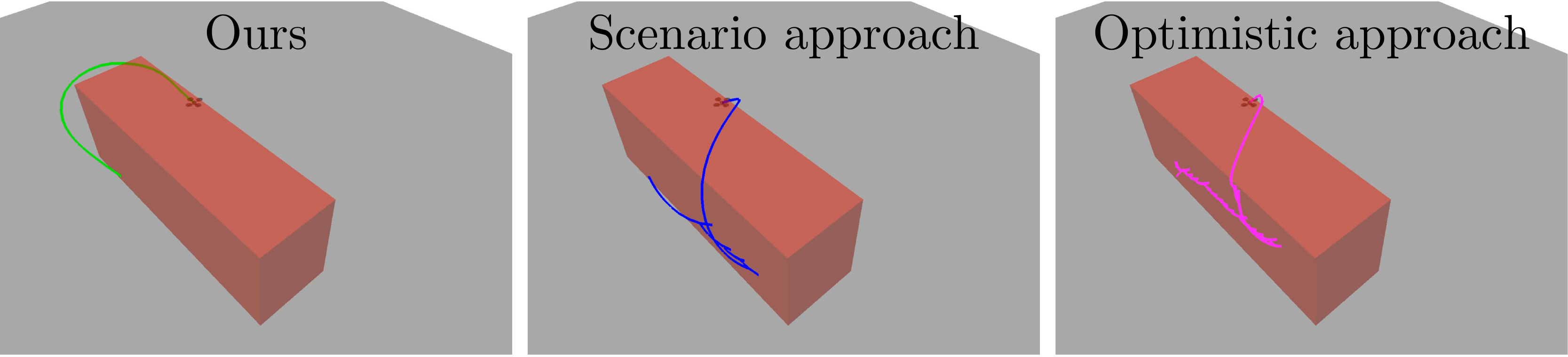}
        \caption{Example run 1 (mixed quadrotor example).}
        \label{fig:mixed_run1}
\end{figure}

\begin{figure}[!htb]
        \centering
        \includegraphics[width=\linewidth]{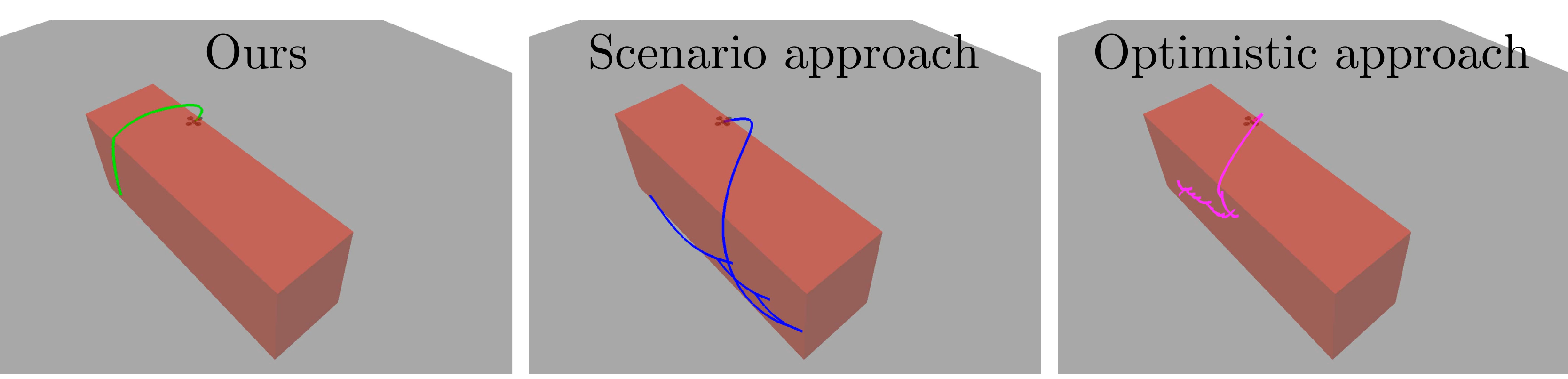}
        \caption{Example run 2 (mixed quadrotor example).}
        \label{fig:mixed_run2}
\end{figure}

\textbf{Visualizing example runs}: For two different sampled ground-truth constraints, we visualize the trajectories executed by our policy, the scenario approach policy \cite{scenario}, and the optimistic policy to compare their properties. 

In Figure \ref{fig:mixed_run1}, we display the sampled state constraint in red, and the sampled control constraint is $\Vert \control \Vert_2^2 \le 98.27$. Our policy, which attempts to satisfy all of the state constraints by trying to move above all of the possible obstacles (see Sec. \ref{sec:results} for more discussion), violates the control constraint at the first time-step by attempting to do so. Our policy then switches to the first contingency plan and successfully reaches the goal with 1 constraint violation. On the other hand, the scenario approach suffers 6 constraint violations (due to sampling constraints), while the optimistic approach suffers 25 constraint violations (because it does not try to avoid any states other than those which are known to be unsafe).

In Figure \ref{fig:mixed_run2}, we display the sampled state constraint in red, and the sampled control constraint is $\Vert \control \Vert_2^2 \le 98.55$. Our policy reaches the goal in one try (0 constraint violations), as we do not end up violating the control constraint. On the other hand, the scenario approach suffers 4 constraint violations, while the optimistic approach suffers 15 constraint violations.

\textbf{Histogram}: In computing Plan 1 (the initial executed trajectory) and Contingencies 1-4 (the trajectories we switched to upon observing 1, 2, 3, and 4 constraint violations), we can calculate the volumes of the covered $\mathcal{B}_i$ which are optimized by Problem \ref{prob:cc_riemann} at each iteration. When computing Plan 1, we cover $p_0 = 69.10\%$ of the possible constraints; when computing Contingency 1, we cover $p_1 = 54.90\%$ of the possible constraints under the updated belief, $p_2 = 56.33\%$ for Contingency 2, $p_3 = 73.91\%$ for Contingency 3, and $p_4 = 98.00\%$ for Contingency 4. Using these percentages, we can calculate the theoretical frequency of overrides as $p(0 \textrm{ overrides}) = p_0$, $p(1 \textrm{ override}) = (1-p_0)p_1$, and so on, until $p(4 \textrm{ overrides}) = \prod_{i=1}^3 (1-p_i)p_4$. Using these formulae, we compute the theoretical probability of suffering $i$ constraint violations before reaching the goal, which we compare with the empirical histograms (normalized over 500000 trials) (Fig. \ref{fig:hist}), and we see the statistics match quite closely.

\begin{figure}[!htb]
        \centering
        \includegraphics[width=0.45\linewidth]{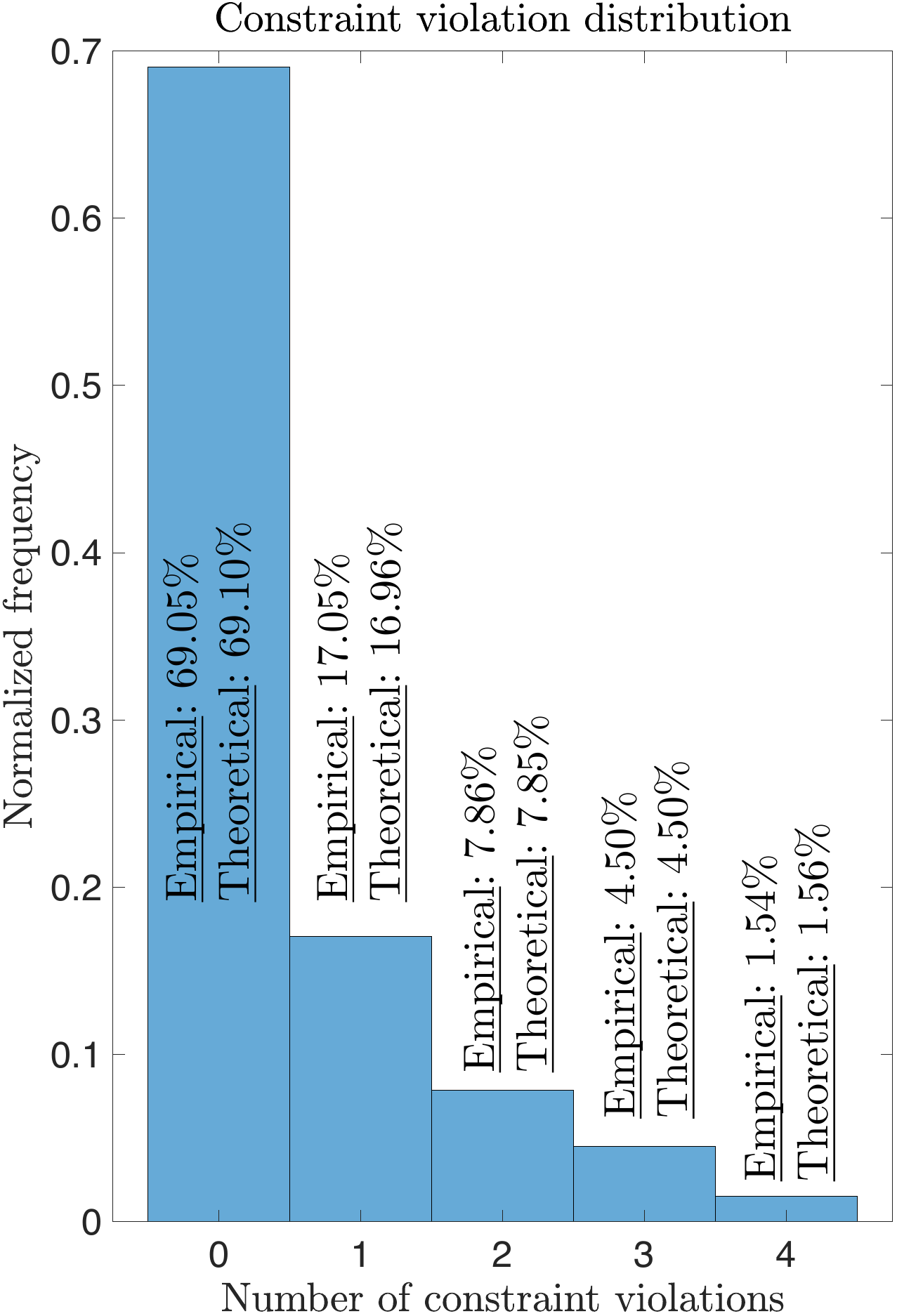}
        \caption{Constraint violation histogram for the mixed quadrotor uncertainty example.}
        \label{fig:hist}
\end{figure}\vspace{-10pt}

\subsection{7-DOF arm with contact sensing uncertainty}\label{app:results_arm}

We use a kinematic model of the arm: $\state_{t+1}^i = \state_t^i + u_t^i$, for $i = 1,\ldots,7$. Here, $\state \in \mathbb{R}^7$, where coordinate $i$ of the state denotes the angle of the $i$th joint. For planning, we use a BTP graph with 5000 vertices (c.f. Section \ref{sec:plan_open_loop_samp}). In the following, we discuss more details on the performance of different policies on the task discussed in Section \ref{sec:results}.

\textbf{BTP with CHS}: We present a time-lapse of the trajectory executed by running BTP with CHS (as described in Appendix \ref{app:planning_baselines_7dof}) in Figure \ref{fig:arm_btpchs}. Since this approach is not given demonstrations, it requires several bumps in order to sufficiently localize the shelf. Furthermore, this approach does not leverage a union-of-boxes constraint parameterization as a prior on the world, as the CHS does not extrapolate about the constraint beyond the sensed collision, making it so that more bumps occur before reaching the goal.

\begin{figure}[!htb]
        \centering
        \includegraphics[width=\linewidth]{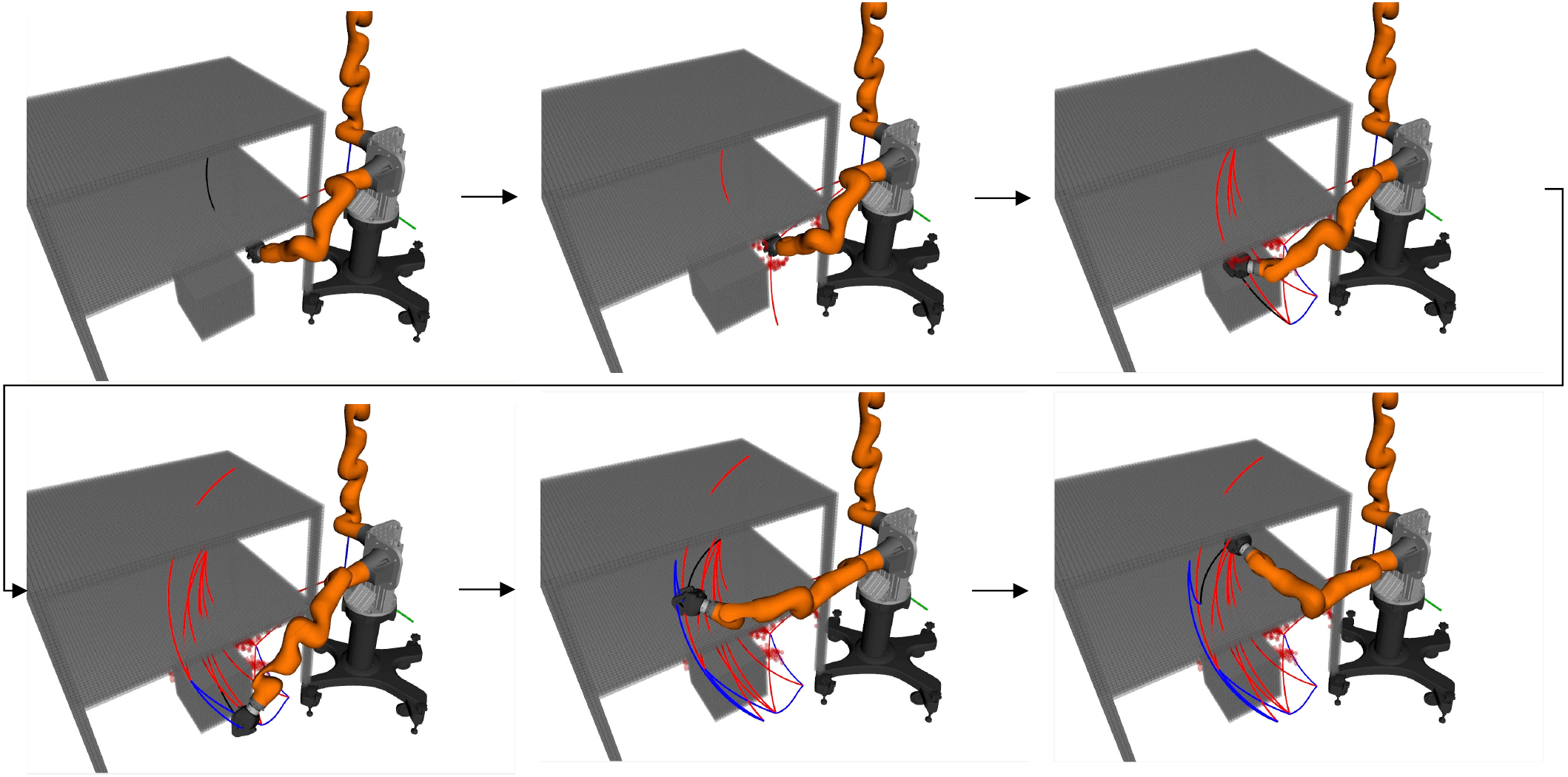}
        \caption{BTP with CHS. Voxels are colored red if they are possibly unsafe according to the CHS. Red edges are attempted edges which are discovered to be blocked in execution. The attempted edges which were unblocked are colored blue.}
        \label{fig:arm_btpchs}
\end{figure}

\textbf{Optimistic approach}: We present the trajectory executed by running the optimistic strategy described in Appendix \ref{app:planning_baselines_7dof} in Figure \ref{fig:arm_opt}. Since this strategy entirely ignores the correlation in validity between edges which are close to each other, it explores many edges, yielding a trajectory cost which is much higher than the other approaches.

\begin{figure}[!htb]
        \centering
        \includegraphics[width=0.4\linewidth]{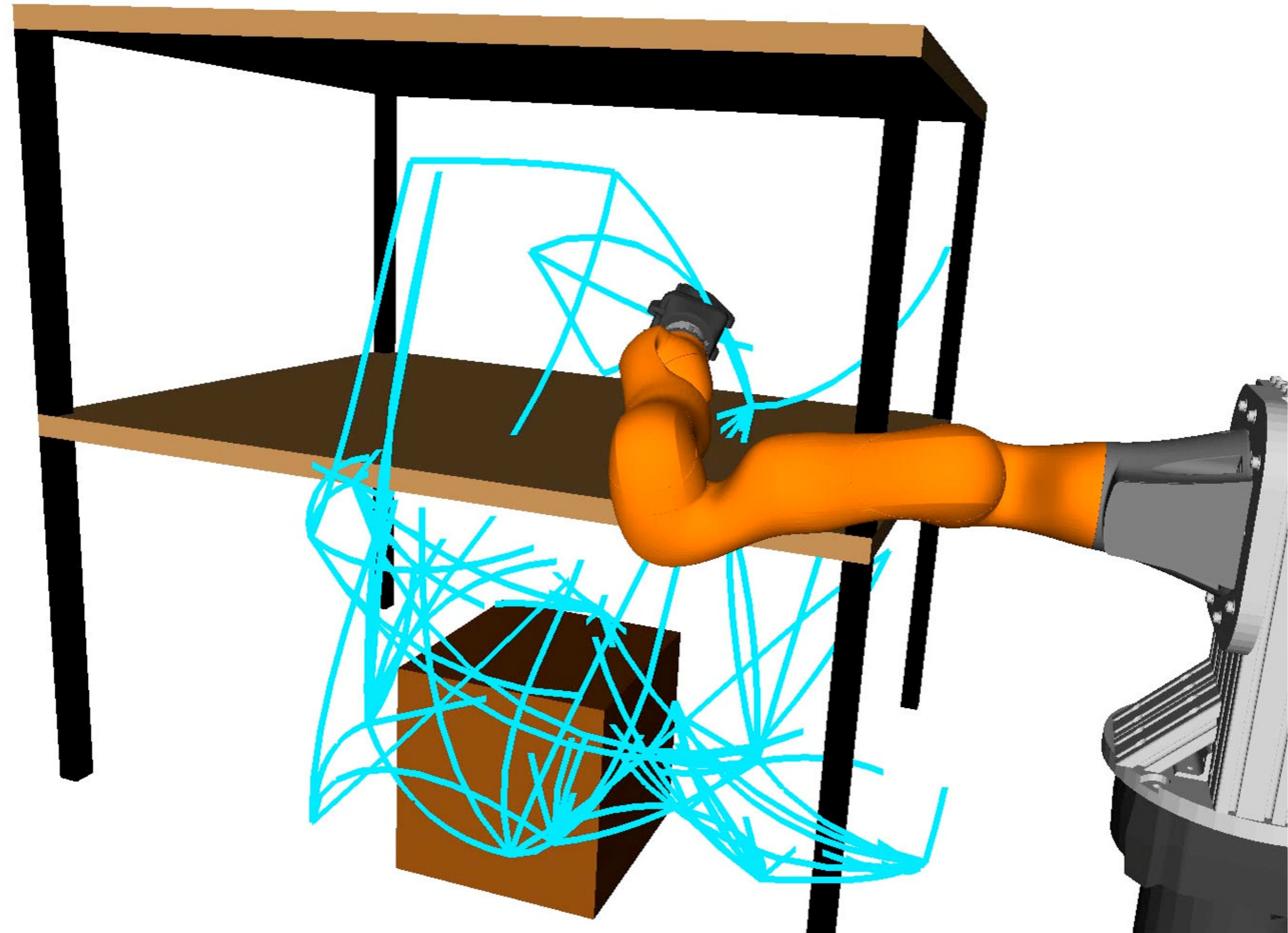}
        \caption{Trajectory executed by the optimistic policy.}
        \label{fig:arm_opt}
\end{figure}

\textbf{Suboptimal human demonstrations}: Finally, we present a time-lapse of our policy when initialized with suboptimal human demonstrations in Figure \ref{fig:arm_subopt}. The demonstrations are captured using an HTC Vive in a virtual reality simulation environment (Figure \ref{fig:arm_vive_demos}). Overall, the behavior of our policy when initialized with these human demonstrations is similar to the the case of synthetic demonstrations (which is discussed in Sec. \ref{sec:results}): it plans to move around the shelf, and in doing so, collides with the unmodeled obstacle. This triggers a constraint parameterization update, and the replanned path (which avoids the shelf and the uncertain region induced by the collision) steers the arm to the goal without further collision. For this case, the executed trajectory cost is 7.78 rad.

\begin{figure}[!htb]
        \centering
        \includegraphics[width=\linewidth]{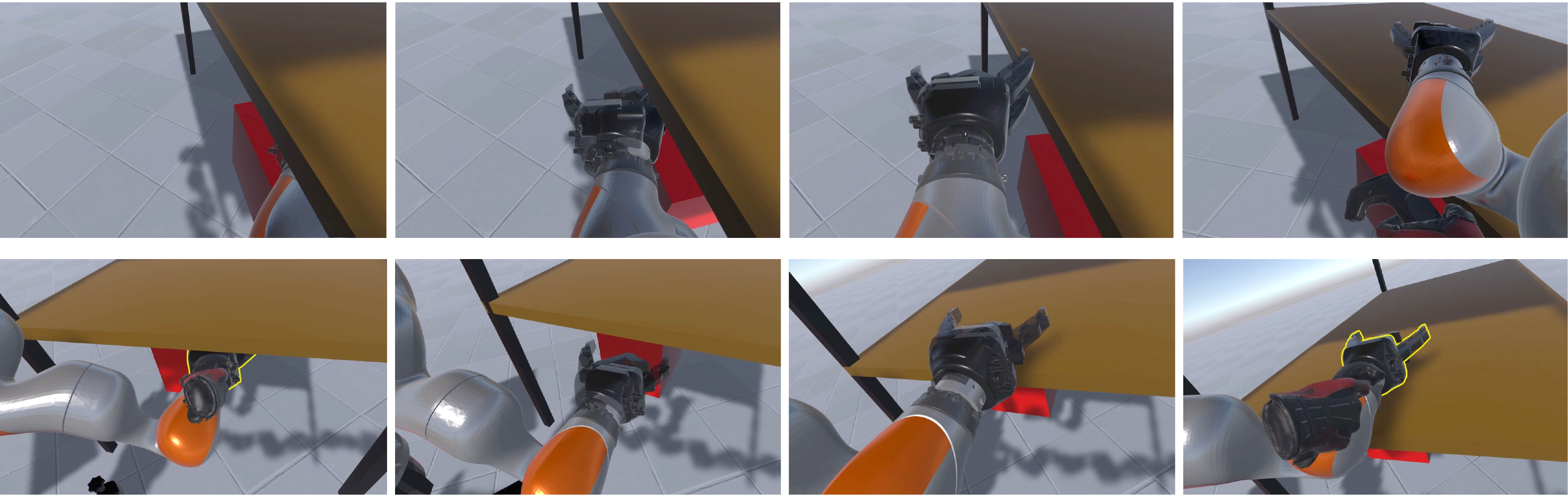}
        \caption{Suboptimal human demonstrations. Top row: time-lapse of the first demonstration. Bottom row: time-lapse of the second demonstration.}
        \label{fig:arm_vive_demos}
\end{figure}

\begin{figure}[!htb]
        \centering
        \includegraphics[width=\linewidth]{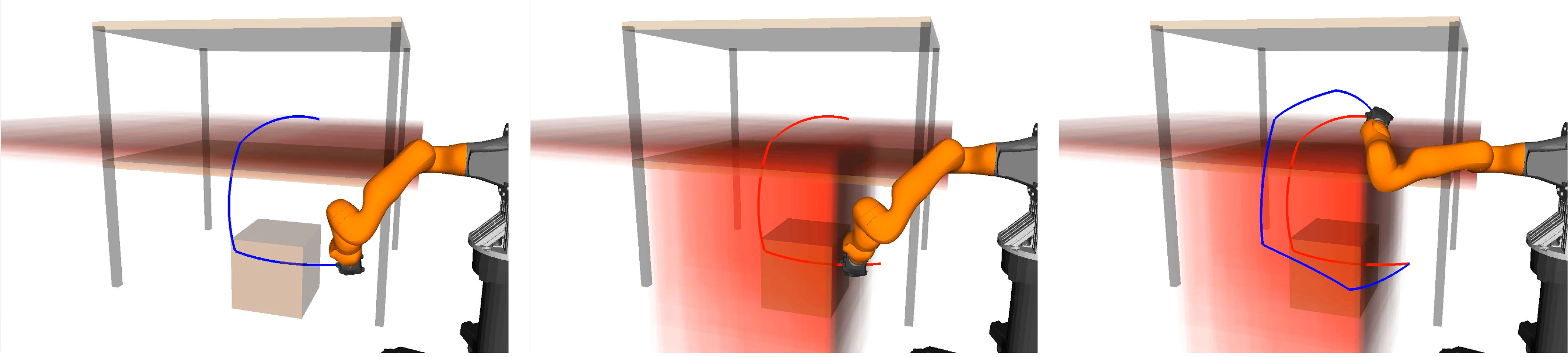}
        \caption{Policy when initialized with suboptimal demonstrations. Left: plans an initial trajectory that bumps into the unmodeled obstacle. Center: constraint parameterization is updated to two boxes. Right: replanned trajectory successfully avoids all collisions, steering the arm to the goal.}
        \label{fig:arm_subopt}
\end{figure}

\subsection{Quadrotor maze}\label{app:results_maze}

\textbf{Inevitable collision states}: We integrate the inevitable collision state \cite{ICS} avoidance constraints into our approach. We first enforce that on a planned trajectory, each state which may be possibly unsafe must be observable (within 2 meters) from a previous state on the trajectory. Furthermore, the line of sight between this previous state and the possible unsafe state cannot be occluded by any obstacle other than the obstacle which is making the state possibly unsafe. For these possibly unsafe states, we enforce that we can brake in time to avoid collision (and together with the line-of-sight constraint, enforces that we can also sense if we need to brake). This is done by explicitly optimizing ``brake trajectories" in conjunction with the planned trajectory, which are rooted two time-steps on the plan before a possible collision and which bring the system to a stop without violating any possible constraints. We further enforce the line of sight constraint by discretizing the line segment between $\state_t$ and $\state_{t+2}$ (if $\state_{t+2}$ is possibly unsafe) into 10 points, and enforcing that each discretized state is guaranteed to satisfy all constraints other than the uncertain constraint which can make $\state_{t+2}$ possibly unsafe.

\begin{figure}[!htb]
        \centering
        \includegraphics[width=0.66\linewidth]{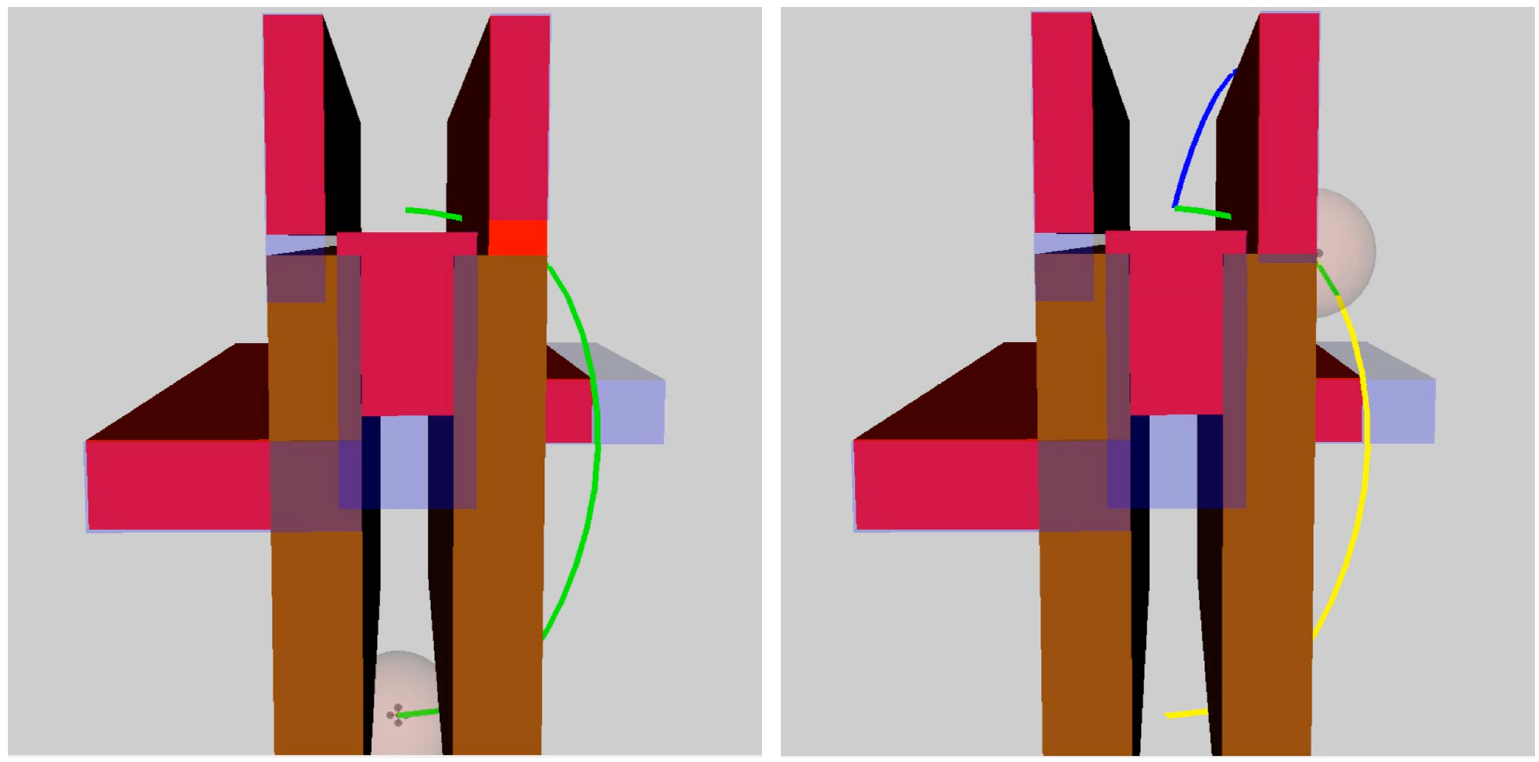}
        \caption{Example run, our policy (quadrotor maze). Initial plan (green), contingency plan (blue), actually executed plan (yellow). The sphere around the quadrotor indicates the sensing radius.}
        \label{fig:maze_run_ours}
\end{figure}

\begin{figure}[!htb]
        \centering
        \includegraphics[width=0.33\linewidth]{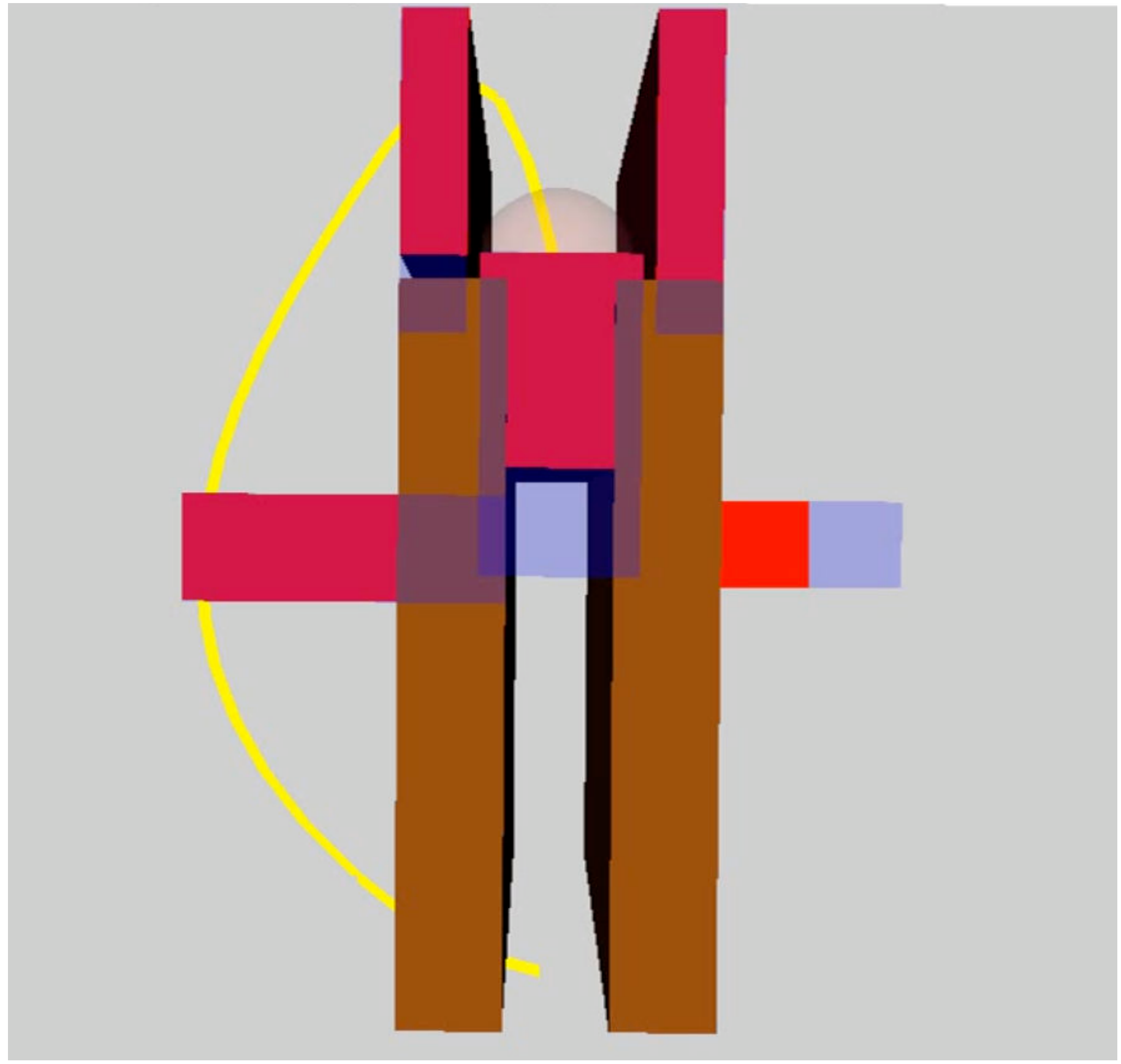}
        \caption{Example run, guaranteed-safe policy (quadrotor maze). Initial/actually executed plan (yellow).}
        \label{fig:maze_run_ral}
\end{figure}

\begin{figure}[!htb]
        \centering
        \includegraphics[width=\linewidth]{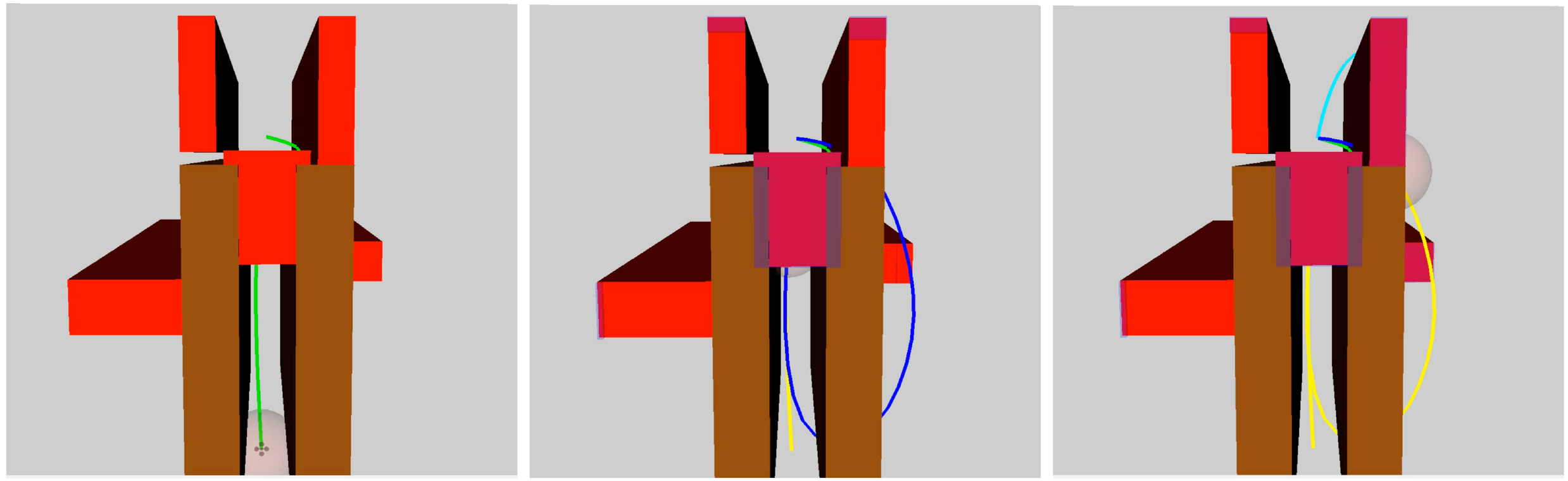}
        \caption{Example run, optimistic policy (quadrotor maze). Initial plan (green), contingency plan 1 (blue), contingency plan 2 (cyan), actually executed plan (yellow).}
        \label{fig:maze_run_pavone}
\end{figure}

\textbf{Visualizing an example run}: For one sampled possible environment (displayed in red in Figures \ref{fig:maze_run_ours}-\ref{fig:maze_run_pavone}), we visualize the trajectories executed by our policy, a policy which seeks to plan guaranteed-safe trajectories \cite{ral}, and an optimistic policy \cite{janson}. Our policy moves to the right and seeks to cut through the possibly unsafe region in the top right (Figure \ref{fig:maze_run_ours}); when observing that the region is blocked, our policy switches to the blue contingency trajectory. On the other hand, the guaranteed-safe policy (Figure \ref{fig:maze_run_ral}) seeks to avoid all possible constraints; as a result, while this policy never needs to switch to a contingency plan, it also ends up deterministically executing a higher-cost trajectory. Finally, the optimistic policy in \cite{janson} explores the dead end between the brown obstacles and is forced to backtrack, yielding a higher cost compared to our policy.

\subsection{Computation times}\label{app:computation_time}

One challenge that our method faces when applied to real-time replanning is the computational intensity of online belief updates and online replanning of open-loop trajectories. First, we emphasize that if the assumptions in Section \ref{sec:plan_policy} hold, we can precompute the possible belief updates and contingency plans to avoid computing them online. If the assumptions are not satisfied, we will need to perform the computation online.

For belief updates, the computation time and number of measurements that can be updated depends heavily on the measurement type. For instance, updating the belief on the quadrotor maze example for a LiDAR scan with 30000 discretized points takes 1.4 seconds; LiDAR-type measurements are fast as each point is known safe or unsafe. However, contact measurements (as seen in the 7-DOF arm examples) are expensive as an unknown combination of the discretized points can be unsafe; modeling this requires the addition of many binary decision variables; it takes 30 minutes to incorporate a contact measurement with 300 discretized points. In this case, a further investigation of the tradeoff between accuracy and computation time based on the number of sampled possible contact measurements may lead to further computational gains.

The integer optimization variables are the key reason for slow $\feastheta$ extraction in Algorithm \ref{alg:extraction}. Thus, we are optimistic that we can speed up computation with parallelization (see Appendix \ref{app:parallel}) and recent advances in fast mixed integer programming \cite{mip_ms}, which enjoy orders of magnitude speedup by learning efficient branching heuristics. 

For open-loop planning, we note that the aforementioned fast mixed integer programming methods, as well as other work in warm-starting mixed integer programs, can be useful in reducing planning times for solving Problem \ref{prob:cc_riemann} and other variants, as all of these variants are mixed integer programs, and the previous open-loop plan can serve as a good initalization for replanning. We also emphasize that we can reduce BTP planning times for the 7-DOF arm examples to around 15 seconds (as in the original BTP paper \cite{BTP}) by precomputing arm swept volumes along roadmap edges and by employing lazy collision checking.

\end{document}